\providecommand{\tabularnewline}{\\}
\newcommand{\lyxdot}{.}
\theoremstyle{plain}
\newtheorem{thm}{\protect\theoremname}
\theoremstyle{definition}
\newtheorem{defn}[thm]{\protect\definitionname}
\theoremstyle{plain}
\newtheorem{lem}[thm]{\protect\lemmaname}
\newenvironment{proof}[1][\protect\proofname]{\par
\normalfont\topsep6\p@\@plus6\p@\relax
\trivlist
\itemindent\parindent
\item[\hskip\labelsep
\scshape
#1]\ignorespaces
}{%
\endtrivlist\@endpefalse
}
\providecommand{\proofname}{Proof}
\providecommand{\definitionname}{Definition}
\providecommand{\lemmaname}{Lemma}
\providecommand{\theoremname}{Theorem}
\begin{document}

\title{Projecting Ising Model Parameters for Fast Mixing}

\author{Justin Domke\\
NICTA, The Australian National University\\
\texttt{justin.domke@nicta.com.au} \and \textbf{Xianghang Liu}\\
NICTA, The University of New South Wales\\
\texttt{xianghang.liu@nicta.com.au}}
\maketitle
\begin{abstract}
Inference in general Ising models is difficult, due to high treewidth
making tree-based algorithms intractable. Moreover, when interactions
are strong, Gibbs sampling may take exponential time to converge to
the stationary distribution. We present an algorithm to project Ising
model parameters onto a parameter set that is guaranteed to be fast
mixing, under several divergences. We find that Gibbs sampling using
the projected parameters is more accurate than with the original parameters
when interaction strengths are strong and when limited time is available
for sampling.
\end{abstract}

\section{Introduction}

\textcolor{cyan}{}High-treewidth graphical models typically yield
distributions where exact inference is intractable. To cope with this,
one often makes an approximation based on a tractable model. For example,
given some intractable distribution $q$, mean-field inference \cite{AMeanFieldTheoryLearningAlgorithm}
attempts to minimize $KL(p||q)$ over $p\in\text{TRACT}$, where $\text{TRACT}$
is the set of fully-factorized distributions. Similarly, structured
mean-field minimizes $ $the KL-divergence, but allows $\text{TRACT}$
to be the set of distributions that obey some tree \cite{ExploitingTractableSubstructures}
or a non-overlapping clustered \cite{AGeneralizedMeanField} structure.
In different ways, loopy belief propagation \cite{Yedidia05constructingfree}
and tree-reweighted belief propagation \cite{Wainwright_anewclass}
also make use of tree-based approximations, while Globerson and Jaakkola
\cite{ApproximateInferenceUsingPlanarGraph} provide an approximate
inference method based on exact inference in planar graphs with zero
field.\textcolor{cyan}{}

In this paper, we explore an alternative notion of a ``tractable''
model. These are ``fast mixing'' models, or distributions that,
while they may be high-treewidth, have parameter-space conditions
guaranteeing that Gibbs sampling will quickly converge to the stationary
distribution. While the precise form of the parameter space conditions
is slightly technical (Sections \ref{sec:Background}-\ref{sec:Mixing-Time-Bounds}),
informally, it is simply that interaction strengths between neighboring
variables are not too strong.

In the context of the Ising model, we attempt to use these models
in the most basic way possible-- by taking an arbitrary (slow-mixing)
set of parameters, projecting onto the fast-mixing set, using four
different divergences. First, we show how to project in the Euclidean
norm, by iteratively thresholding a singular value decomposition (Theorem
\ref{thm:projection_theorem-main}). Secondly, we experiment with
projecting using the ``zero-avoiding'' divergence $KL(q||p)$. Since
this requires taking (intractable) expectations with respect to $q$,
it is of only theoretical interest. \textcolor{cyan}{} Third, we
suggest a novel ``piecewise'' approximation of the KL divergence,
where one drops edges from both $q$ and $p$ until a low-treewidth
graph remains where the exact KL divergence can be calculated. Experimentally,
this does not perform as well as the true KL-divergence, but is easy
to evaluate. Fourth, we consider the ``zero forcing'' divergence
$KL(q||p)$. Since this requires expectations with respect to $p$,
which is constrained to be fast-mixing, it can be approximated by
Gibbs sampling, and the divergence can be minimized through stochastic
approximation. This can be seen as a generalization of mean-field
where the set of approximating distributions is expanded from fully-factorized
to fast-mixing.

\section{Background\label{sec:Background}}

The literature on mixing times in Markov chains is extensive, including
a recent textbook \cite{MarkovChainsAndMixingTimes}. The presentation
in the rest of this section is based on that of Dyer et al. \cite{MatrixNormsAndRapidMixingForSpinSystems}.

Given a distribution $p(x)$, one will often wish to draw samples
from it. While in certain cases (e.g. the Normal distribution) one
can obtain exact samples, for Markov random fields (MRFs), one must
generally resort to iterative Markov chain Monte Carlo (MCMC) methods
that obtain a sample asymptotically. In this paper, we consider the
classic Gibbs sampling method \cite{StochasticRelaxationGibbsDistributions},
where one starts with some configuration $x$, and repeatedly picks
a node $i$, and samples $x_{i}$ from $p(x_{i}|x_{-i})$. Under mild
conditions, this can be shown to sample from a distribution that converges
to $p$ as $t\rightarrow\infty$.

It is common to use more sophisticated methods such as block Gibbs
sampling, the Swendsen-Wang algorithm \cite{SwendsenWang}, or tree
sampling \cite{FromFieldsToTrees}. In principle, each algorithm could
have unique parameter-space conditions under which it is fast mixing.
Here, we focus on the univariate case for simplicity and because fast
mixing of univariate Gibbs is sufficient for fast mixing of some other
methods \cite{ExtraUpdates}.
\begin{defn}
Given two finite distributions $p$ and $q$, the \textbf{total variation
distance} $||\cdot||_{TV}$ is
\[
||p(X)-q(X)||_{TV}=\frac{1}{2}\sum_{x}|p(X=x)-q(X=x)|.
\]

\end{defn}
We need a property of a distribution that can guarantee fast mixing.
The \textbf{dependency} $R_{ij}$ of $x_{i}$ on $x_{j}$ is defined
by considering two configurations $x$ and $x'$, and measuring how
much the conditional distribution of $x_{i}$ can vary when $x_{k}=x'_{k}$
for all $k\not=j$.
\begin{defn}
Given a distribution $p$, the dependency matrix $R$ is defined by 

\[
R_{ij}=\max_{x,x':x_{-j}=x_{-j}'}||p(X_{i}|x_{-i})-p(X_{i}|x_{-i}')||_{TV}.
\]

\end{defn}
Given some threshold $\epsilon$, the \textbf{mixing time} is the
number of iterations needed to guarantee that the total variation
distance of the Gibbs chain to the stationary distribution is less
than $\epsilon$.
\begin{defn}
Suppose that $\{X^{t}\}$ denotes the sequence of random variables
corresponding to running Gibbs sampling on some distribution $p$.
The mixing time $\tau(\epsilon)$ is the minimum time $t$ such that
the total variation distance between $X^{t}$ and the stationary distribution
is at most $\epsilon$. That is,
\begin{align*}
\tau(\epsilon)= & \min\{t:d(t)<\epsilon\},\\
d(t)= & \max_{x}||\mathbb{P}(X^{t}|X^{0}=x)-p(X)||_{TV}.
\end{align*}

\end{defn}
Unfortunately, the mixing time can be extremely long, which makes
the use of Gibbs sampling delicate in practice. For example, for the
two-dimensional Ising model with zero field and uniform interactions,
it is known that mixing time is polynomial (in the size of the grid)
when the interaction strengths are below a threshold $\beta_{c}$,
and exponential for stronger interactions \cite{CriticalIsingMixing}.
For more general distributions, such tight bounds are not generally
known, but one can still derive sufficient conditions for fast mixing.
The main result we will use is the following \cite{ASimpleConditionImplyingRapidMixing}.
\begin{thm}
\label{thm:mixing-time-theorem}Consider the dependency matrix $R$
corresponding to some distribution $p(X_{1},...,X_{n})$. For Gibbs
sampling with random updates, if $||R||_{2}<1,$ the mixing time is
bounded by

\[
\tau(\epsilon)\leq\frac{n}{1-||R||_{2}}\ln\left(\frac{n}{\epsilon}\right).
\]

\end{thm}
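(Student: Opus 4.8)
The plan is to prove this by a coupling argument carried out in the spectral norm, following Hayes \cite{ASimpleConditionImplyingRapidMixing} (see also Dyer et al.\ \cite{MatrixNormsAndRapidMixingForSpinSystems}). I would couple two copies of the random-update Gibbs chain, $\{X^{t}\}$ started from an arbitrary configuration $x$ and $\{Y^{t}\}$ started from the stationary distribution $p$, so that at every step both copies pick the \emph{same} node $i$ uniformly at random, and then the updates of $X_{i}^{t}$ and $Y_{i}^{t}$ are drawn from an optimal (maximal) coupling of $p(X_{i}\mid X_{-i}^{t})$ and $p(X_{i}\mid Y_{-i}^{t})$, i.e.\ one whose disagreement probability equals $||p(X_{i}\mid X_{-i}^{t})-p(X_{i}\mid Y_{-i}^{t})||_{TV}$. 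Writing $d_{j}^{t}=\mathbf{1}[X_{j}^{t}\neq Y_{j}^{t}]$ and $\rho^{t}=\mathbb{E}\,d^{t}\in\mathbb{R}^{n}$, the goal is to show that $\rho^{t}$ contracts geometrically and then to sum its entries.

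The first substantive step is the estimate: if $x_{-i}$ and $y_{-i}$ agree outside a set $S$, then $||p(X_{i}\mid x_{-i})-p(X_{i}\mid y_{-i})||_{TV}\leq\sum_{j\in S}R_{ij}$. This is proved by walking from $x_{-i}$ to $y_{-i}$ one coordinate at a time, applying the triangle inequality for total variation along the way, and bounding each single-coordinate step by the matching entry of $R$ using its definition. Applying this with $S=\{j:X_{j}^{t}\neq Y_{j}^{t}\}$ gives, conditioned on node $i$ being chosen, $\mathbb{E}[d_{i}^{t+1}\mid X^{t},Y^{t}]\leq\sum_{j}R_{ij}d_{j}^{t}$ (recall $R_{ii}=0$), while $d_{j}^{t+1}=d_{j}^{t}$ for $j\neq i$. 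Averaging over the choice of $i$ and taking expectations yields the componentwise inequality $\rho^{t+1}\leq M\rho^{t}$ with $M=(1-\tfrac{1}{n})I+\tfrac{1}{n}R$. Since $R$ has nonnegative entries, so does $M$, and the inequality iterates: $\rho^{t}\leq M^{t}\rho^{0}$ componentwise.

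Next I would turn this into the scalar bound. Because every vector appearing here is entrywise nonnegative, $\sum_{j}\rho_{j}^{t}\leq\mathbf{1}^{T}M^{t}\rho^{0}=||M^{t}\rho^{0}||_{1}\leq\sqrt{n}\,||M^{t}\rho^{0}||_{2}\leq\sqrt{n}\,||M||_{2}^{t}\,||\rho^{0}||_{2}\leq n\,||M||_{2}^{t}$, using $||\rho^{0}||_{2}\leq\sqrt{n}$ (entries in $[0,1]$) and submultiplicativity of the operator norm. The triangle inequality gives $||M||_{2}\leq(1-\tfrac{1}{n})+\tfrac{1}{n}||R||_{2}=1-\tfrac{1-||R||_{2}}{n}$, which lies in $[0,1)$ precisely because $||R||_{2}<1$. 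Hence the expected Hamming distance satisfies $\sum_{j}\rho_{j}^{t}\leq n\bigl(1-\tfrac{1-||R||_{2}}{n}\bigr)^{t}\leq n\,e^{-t(1-||R||_{2})/n}$. By the coupling inequality (e.g.\ \cite{MarkovChainsAndMixingTimes}), $d(t)\leq\Pr[X^{t}\neq Y^{t}]\leq\mathbb{E}\,|\{j:X_{j}^{t}\neq Y_{j}^{t}\}|=\sum_{j}\rho_{j}^{t}$, so $d(t)<\epsilon$ as soon as $n\,e^{-t(1-||R||_{2})/n}<\epsilon$, i.e.\ $t>\tfrac{n}{1-||R||_{2}}\ln(n/\epsilon)$, which is the claimed bound.

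The step I expect to be the real obstacle is the total-variation estimate between the two conditionals: the definition of $R_{ij}$ only bounds the change caused by flipping a \emph{single} coordinate, whereas $x_{-i}$ and $y_{-i}$ can differ in arbitrarily many coordinates, so one must interpolate carefully through intermediate configurations and verify that the triangle inequality genuinely decomposes the discrepancy into single-coordinate pieces. Everything afterward --- maintaining nonnegativity so that componentwise inequalities survive multiplication by $M$, and the passage through $||\cdot||_{1}$, $||\cdot||_{2}$ and the operator norm --- is routine.
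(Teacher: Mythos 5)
Your proof is correct, but note that the paper itself does not prove Theorem \ref{thm:mixing-time-theorem} at all --- it imports the statement from Hayes \cite{ASimpleConditionImplyingRapidMixing}, so there is no in-paper argument to compare against. Your derivation (maximal coupling at the selected site, the single-coordinate interpolation showing the conditional TV distance is at most $\sum_{j\in S}R_{ij}$, the componentwise contraction $\rho^{t+1}\leq M\rho^{t}$ with $M=(1-\tfrac{1}{n})I+\tfrac{1}{n}R$ preserved by nonnegativity, and the passage $\mathbf{1}^{T}M^{t}\rho^{0}\leq n\,\|M\|_{2}^{t}$ via Cauchy--Schwarz) is essentially the standard argument in that reference and in Dyer et al.\ \cite{MatrixNormsAndRapidMixingForSpinSystems}, and all the steps check out.
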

Roughly speaking, if the spectral norm (maximum singular value) of
$R$ is less than one, rapid mixing will occur. A similar result holds
in the case of systematic scan updates \cite{MatrixNormsAndRapidMixingForSpinSystems,ASimpleConditionImplyingRapidMixing}.

Some of the classic ways of establishing fast mixing can be seen as
special cases of this. For example, the Dobrushin criterion is that
$||R||_{1}<1$, which can be easier to verify in many cases, since
$||R||_{1}=\max_{j}\sum_{i}|R_{ij}|$ does not require the computation
of singular values. However, for symmetric matrices, it can be shown
that $||R||_{2}\leq||R||_{1}$, meaning the above result is tighter.

\section{Mixing Time Bounds\label{sec:Mixing-Time-Bounds}}

For variables $x_{i}\in\{-1,+1\},$ an Ising model is of the form
\[
p(x)=\exp\left(\sum_{i,j}\beta_{ij}x_{i}x_{j}+\sum_{i}\alpha_{i}x_{i}-A(\beta,\alpha)\right),
\]
where $\beta_{ij}$ is the interaction strength between variables
$i$ and $j$, $\alpha_{i}$ is the ``field'' for variable $i$,
and $A$ ensures normalization. This can be seen as a member of the
exponential family $p(x)=\exp\left(\theta\cdot f(x)-A(\theta)\right),$
where $f(x)=\{x_{i}x_{j}\forall(i,j)\}\cup\{x_{i}\forall i\}$ and
$\theta$ contains both $\beta$ and $\alpha$.
\begin{lem}
\label{lem:ising-dependency}For an Ising model, the dependency matrix
is bounded by
\[
R_{ij}\leq\tanh|\beta_{ij}|\leq|\beta_{ij}|
\]

\end{lem}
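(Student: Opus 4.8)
The plan is to compute the conditional distribution $p(X_i \mid x_{-i})$ explicitly for the Ising model and then directly evaluate the total variation distance appearing in the definition of $R_{ij}$. Since $x_i \in \{-1,+1\}$, the conditional $p(X_i = x_i \mid x_{-i})$ depends only on the ``local field'' $h_i(x_{-i}) = \alpha_i + \sum_{k \neq i} \beta_{ik} x_k$ (using the symmetrized convention so that each neighbor contributes once), and one gets the logistic form $p(X_i = 1 \mid x_{-i}) = \sigma(2 h_i)$ where $\sigma$ is the sigmoid. The total variation distance between two Bernoulli-type distributions on $\{-1,+1\}$ is just $|p(X_i=1\mid x_{-i}) - p(X_i = 1 \mid x_{-i}')|$.

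Next I would use the constraint in the definition: we compare $x_{-i}$ and $x_{-i}'$ that agree on every coordinate except possibly $j$. Hence the two local fields $h_i$ and $h_i'$ differ only through the term $\beta_{ij} x_j$ versus $\beta_{ij} x_j'$, and since $x_j, x_j' \in \{-1,+1\}$, the largest possible gap is $|h_i - h_i'| = 2|\beta_{ij}|$, achieved when $x_j = +1$, $x_j' = -1$ (or vice versa). So $R_{ij} = \max |\sigma(2h_i) - \sigma(2 h_i')|$ where the two arguments differ by exactly $4|\beta_{ij}|$ and the common part can be chosen freely by selecting the remaining coordinates $x_k$.

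The remaining step is a one-variable calculus estimate: bound $|\sigma(a) - \sigma(b)|$ when $|a - b| = 4|\beta_{ij}|$, maximized over the location. Writing $\sigma(a) - \sigma(b) = \int_b^a \sigma'(t)\,dt$ and using $\sigma'(t) \le 1/4$ would give $R_{ij} \le |\beta_{ij}|$, the weaker bound. To get the sharp $\tanh|\beta_{ij}|$, I would instead center the interval: the worst case is symmetric about the argument $0$, i.e. comparing $\sigma(2\beta_{ij})$ with $\sigma(-2\beta_{ij})$ (equivalently the common field contribution vanishes), and $\sigma(2t) - \sigma(-2t) = \tanh(t)$ by a direct identity. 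One still must argue the symmetric placement is the maximizer — this follows because $\sigma'$ is even and unimodal (decreasing in $|t|$), so $\int_{c-r}^{c+r}\sigma'$ is maximized at $c=0$; that monotonicity/concavity argument is the only nontrivial point. Finally, $\tanh|\beta_{ij}| \le |\beta_{ij}|$ is the standard elementary inequality. The main obstacle, such as it is, is just being careful about the factor-of-two conventions in the exponent (the $\sum_{i,j}$ sum and the resulting coefficient in the conditional) so that the clean $\tanh|\beta_{ij}|$ comes out rather than $\tanh 2|\beta_{ij}|$ or similar.
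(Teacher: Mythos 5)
Your argument is correct and, in fact, more complete than what the paper offers: the paper does not prove Lemma \ref{lem:ising-dependency} at all, but merely cites Hayes's result for the constant-$\beta$, zero-field case and asserts that ``simple modifications'' extend it. Your direct computation is exactly the kind of modification intended, and all the steps check out: the conditional is $p(X_i=1\mid x_{-i})=\sigma(2h_i)$ with $h_i=\alpha_i+\sum_{k\neq i}\beta_{ik}x_k$; the total variation distance between two distributions on $\{-1,+1\}$ reduces to $|\sigma(2h_i)-\sigma(2h_i')|$; flipping $x_j$ shifts the argument by at most $4|\beta_{ij}|$; and since $\sigma'$ is even and decreasing in $|t|$, the increment of $\sigma$ over an interval of fixed length $4|\beta_{ij}|$ is maximized when the interval is centered at the origin, giving $\sigma(2|\beta_{ij}|)-\sigma(-2|\beta_{ij}|)=\tanh|\beta_{ij}|$, with $\tanh t\le t$ finishing the chain. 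Two points you flag are indeed the only delicate ones: (i) the convention that each edge contributes once to the energy (consistent with the paper's feature map $f(x)=\{x_ix_j\ \forall (i,j)\}$), without which one would get $\tanh(2|\beta_{ij}|)$; and (ii) the ``common part'' of the field ranges only over a discrete set determined by the other coordinates, so your relaxation to an arbitrary real center is what makes this an upper bound rather than an equality --- which is all the lemma claims. Net effect: your proof is self-contained and elementary where the paper defers to an external reference, at the cost of a little calculus that the citation hides.
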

Hayes \cite{ASimpleConditionImplyingRapidMixing} proves this for
the case of constant $\beta$ and zero-field, but simple modifications
to the proof can give this result.

Thus, to summarize, an Ising model can be guaranteed to be fast mixing
if the spectral norm of the absolute value of interactions terms is
less than one.

\section{Projection\label{sec:Projection}}

In this section, we imagine that we have some set of parameters $\theta$,
not necessarily fast mixing, and would like to obtain another set
of parameters $\psi$ which are as close as possible to $\theta$,
but guaranteed to be fast mixing. This section derives a projection
in the Euclidean norm, while Section \ref{sec:Divergences} will build
on this to consider other divergence measures.

We will use the following standard result that states that given a
matrix $A$, the closest matrix with a maximum spectral norm can be
obtained by thresholding the singular values.\textcolor{cyan}{}
\begin{thm}
\label{thm:dense-projection}If $A$ has a singular value decomposition
$A=USV^{T}$, and $||\cdot||_{F}$ denotes the Frobenius norm, then
$B=\underset{B:||B||_{2}\leq c}{\arg\text{}\min}||A-B||_{F}$ can
be obtained as $B=US'V^{T},$ where $S_{ii}^{'}=\min(S_{ii},c^{2}).$
\end{thm}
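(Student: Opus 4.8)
The plan is to reduce the matrix optimization to a trivial scalar one by exploiting that both $||\cdot||_F$ and $||\cdot||_2$ are unitarily invariant. Take the SVD in full form, so that $U$ and $V$ are orthogonal, and reparametrize an arbitrary candidate as $B = UCV^T$ with $C = U^TBV$; this reparametrization is a bijection. Unitary invariance then gives $||A-B||_F = ||S-C||_F$ and $||B||_2 = ||C||_2$, so it is enough to minimize $||S-C||_F$ over $||C||_2 \le c$ with $S$ a fixed (rectangular) diagonal matrix, and afterwards set $B = UCV^T$.

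First I would argue that an optimal $C$ may be taken diagonal. Splitting $||S-C||_F^2 = \sum_i (S_{ii}-C_{ii})^2 + \sum_{i\ne j} C_{ij}^2$ shows that replacing $C$ by its diagonal part $D$ can only decrease the objective; the point that needs checking is that $D$ is still feasible, which follows from $|D_{ii}| = |e_i^T C e_i| \le ||C||_2 \le c$ (Cauchy--Schwarz, with $e_i$ unit vectors), hence $||D||_2 = \max_i|D_{ii}| \le c$. With $C$ diagonal the problem decouples into independent scalar problems $\min_{|t|\le c}(S_{ii}-t)^2$; since $S_{ii}\ge 0$ the minimizer is $t = \min(S_{ii},c)$. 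Thus the optimal $C$ is exactly the thresholded matrix $S'$, and $B = US'V^T$.

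The argument has essentially no hard step, but a couple of points deserve care. One is the rectangular case $A\in\mathbb{R}^{m\times n}$: ``diagonal'' must be read as the $m\times n$ matrix supported on the entries $(i,i)$, $i\le\min(m,n)$; the bound $|C_{ii}|\le||C||_2$ still holds there, and discarding the leftover rows or columns only helps. Another is that the SVD of $A$ need not be unique when singular values coincide, but the derivation never uses uniqueness, so this is harmless. If one prefers a shorter route, expanding $||A-B||_F^2 = ||A||_F^2 - 2\,\mathrm{tr}(A^TB) + ||B||_F^2$ and applying von Neumann's trace inequality $\mathrm{tr}(A^TB)\le\sum_i\sigma_i(A)\sigma_i(B)$ yields the lower bound $||A-B||_F^2 \ge \sum_i(\sigma_i(A)-\sigma_i(B))^2$, which is then minimized over $\sigma_i(B)\in[0,c]$ and shown to be attained at $B = US'V^T$; this is quicker but leans on the trace inequality as a black box, so I would keep the elementary unitary-invariance proof.
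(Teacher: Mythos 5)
Your proof is correct. Note that the paper itself gives no proof of this theorem: it is invoked as a ``standard result,'' so there is nothing to compare against, and your elementary argument via unitary invariance of $||\cdot||_F$ and $||\cdot||_2$, reduction to the (rectangular) diagonal case, and entrywise thresholding is a perfectly good self-contained justification. The one point worth flagging explicitly is that your derivation yields the threshold $S'_{ii}=\min(S_{ii},c)$, whereas the statement as printed says $\min(S_{ii},c^{2})$; the constraint $||B||_{2}\leq c$ caps the singular values at $c$, not $c^{2}$, so the exponent in the paper is a typo and your version is the correct one. Two cosmetic remarks: the feasibility of the diagonal part $D$ follows from $|C_{ii}|=|e_i^{T}Ce_i|\leq||C||_{2}$ as you say (this is the operator-norm bound, not really Cauchy--Schwarz, though the distinction is immaterial); and if you want the ``$\arg\min$'' to be well defined you can add that the squared objective is strictly convex on the convex set $\{B:||B||_{2}\leq c\}$, so the minimizer is unique. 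The von Neumann trace-inequality route you sketch is also standard and valid, but the unitary-invariance argument is preferable precisely because it is self-contained.
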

We denote this projection by $B=\Pi_{c}[A]$. This is close to providing
an algorithm for obtaining the closest set of Ising model parameters
that obey a given spectral norm constraint. However, there are two
issues. First, in general, even if $A$ is sparse, the projected matrix
$B$ will be dense, meaning that projecting will destroy a sparse
graph structure. Second, this result constrains the spectral norm
of $B$ itself, rather than $R=|B|$, which is what needs to be controlled.
The theorem below provides a dual method that fixed these issues.

Here, we take some matrix $Z$ that corresponds to the graph structure,
by setting $Z_{ij}=0$ if $(i,j)$ is an edge, and $Z_{ij}=1$ otherwise.
Then, enforcing that $B$ obeys the graph structure is equivalent
to enforcing that $Z_{ij}B_{ij}=0$ for all $(i,j)$. Thus, finding
the closest set of parameters $B$ is equivalent to solving

\vspace{-20pt}
\begin{eqnarray}
\min_{B,D} &  & ||A-B||_{F}\text{ subject to }||D||_{2}\leq c,\,\, Z_{ij}D_{ij}=0,\,\, D=|B|.\label{eq:sparse-projection-minimization-singleline}
\end{eqnarray}

We find it convenient to solve this minimization by performing some
manipulations, and deriving a dual. The proof of this theorem is provided
in the appendix. To accomplish the maximization of $g$ over $M$
and $ $$\Lambda$, we use LBFGS-B \cite{ALimitedMemoryAlgorithmFroBoundConstrainedOptimization},
with bound constraints used to enforce that $M\geq0$.

The following theorem uses the ``triple dot product'' notation of
$A\cdot B\cdot C=\sum_{ij}A_{ij}B_{ij}C_{ij}$.
\begin{thm}
\label{thm:projection_theorem-main}Define $R=|A|$. The minimization
in Eq. \ref{eq:sparse-projection-minimization-singleline} is equivalent
to the problem of $\max_{M\geq0,\Lambda}g(\Lambda,M)$, where the
objective and gradient of $g$ are, for $D(\Lambda,M)=\Pi_{c}[R+M-\Lambda\odot Z],$

\begin{align}
g(\Lambda,M) & =\frac{1}{2}||D(\Lambda,M)-R||_{F}^{2}+\Lambda\cdot Z\cdot D(\Lambda,M) - M \cdot D(\Lambda,M) \label{eq:sparse-projection-dual-obj-1-1}\\
\frac{dg}{d\Lambda} & =Z\odot D(\Lambda,M)\label{eq:sparse-projection-dual-grad-1-1}\\
\frac{dg}{dM} & = -D(\Lambda,M).\label{eq:sparse-projection-dual-grad-2-1}
\end{align}

\end{thm}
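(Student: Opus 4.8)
The plan is to first rewrite the primal minimization in Eq.~\ref{eq:sparse-projection-minimization-singleline} over the single variable $D$, then form a partial Lagrangian that keeps the spectral-norm ball $\{D:||D||_{2}\le c\}$ explicit, evaluate the resulting inner minimization in closed form as a Euclidean projection, and finally invoke convex duality for the claimed equivalence and Danskin's theorem for the gradients.

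First I would eliminate $B$. For fixed $D$ with $D\ge0$, the entrywise choice of $B$ with $|B|=D$ that minimizes $||A-B||_{F}$ is $B_{ij}=\mathrm{sign}(A_{ij})\,D_{ij}$ (either sign when $A_{ij}=0$), which yields $||A-B||_{F}^{2}=\sum_{ij}(|A_{ij}|-D_{ij})^{2}=||R-D||_{F}^{2}$ with $R=|A|$. Conversely, any $B$ feasible for Eq.~\ref{eq:sparse-projection-minimization-singleline} gives a feasible $D=|B|$ whose objective is no smaller, and $D=|B|$ holds for some $B$ exactly when $D\ge0$. Since squaring and halving the objective preserves minimizers, Eq.~\ref{eq:sparse-projection-minimization-singleline} is equivalent to the convex program $\min_{D}\tfrac12||R-D||_{F}^{2}$ subject to $||D||_{2}\le c$, $Z\odot D=0$, and $D\ge0$.

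Next I would dualize only the two affine constraints. Introducing a free multiplier $\Lambda$ for $Z\odot D=0$ and $M\ge0$ for $-D\le0$ gives the partial Lagrangian $L(D,\Lambda,M)=\tfrac12||R-D||_{F}^{2}+\Lambda\cdot Z\cdot D-M\cdot D$ and dual function $g(\Lambda,M)=\min_{||D||_{2}\le c}L(D,\Lambda,M)$. Setting $C=\Lambda\odot Z-M$ and completing the square, $L(D,\Lambda,M)=\tfrac12||D-(R-C)||_{F}^{2}+\phi(\Lambda,M)$ where $\phi$ does not depend on $D$; hence the inner minimizer is the Euclidean projection $D(\Lambda,M)=\Pi_{c}[R-C]=\Pi_{c}[R+M-\Lambda\odot Z]$, which exists and is unique because $\{D:||D||_{2}\le c\}$ is closed and convex. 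Substituting $D(\Lambda,M)$ back into $L$ reproduces exactly the objective in Eq.~\ref{eq:sparse-projection-dual-obj-1-1}, since $||R-D||_{F}=||D-R||_{F}$.

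For the gradients, since $L$ is jointly smooth and affine in $(\Lambda,M)$ and the inner minimizer $D(\Lambda,M)$ is unique and (being a projection onto a convex set) continuous in $(\Lambda,M)$, Danskin's envelope theorem applies: $g$ is differentiable with $dg/d\Lambda=\nabla_{\Lambda}L|_{D=D(\Lambda,M)}=Z\odot D(\Lambda,M)$ and $dg/dM=\nabla_{M}L|_{D=D(\Lambda,M)}=-D(\Lambda,M)$, which are Eqs.~\ref{eq:sparse-projection-dual-grad-1-1}--\ref{eq:sparse-projection-dual-grad-2-1}. The claimed equivalence---that the primal optimum equals $\max_{M\ge0,\Lambda}g(\Lambda,M)$---then follows from strong duality: the reduced problem is convex, and $D=0$ is feasible while strictly satisfying the only non-affine constraint ($||0||_{2}=0<c$), so Slater's condition holds. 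The step I expect to require the most care is the differentiability of $g$: one must confirm that the projection really is the unique argmin (so the envelope theorem yields a genuine gradient rather than only a subgradient) and track continuity of $D(\Lambda,M)$ via nonexpansiveness of the projection; the $B$-to-$D$ reduction needs only a brief remark about entries with $A_{ij}=0$, and the duality itself is routine.
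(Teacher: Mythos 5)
Your proof is correct and follows essentially the same route as the paper: reduce to a minimization over $D\ge 0$ with $R=|A|$, dualize the affine constraints while keeping the spectral-norm ball in the inner problem, identify the inner minimizer as $\Pi_{c}[R+M-\Lambda\odot Z]$, and obtain the gradients from Danskin's theorem. You additionally make explicit a few details the paper elides (the squared-versus-unsquared objective, Slater's condition for strong duality, and uniqueness/continuity of the projection needed for differentiability), which only strengthens the argument.
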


\section{Divergences\label{sec:Divergences}}

Again, we would like to find a parameter vector $\psi$ that is close
to a given vector $\theta$, but is guaranteed to be fast mixing,
but with several notions of ``closeness'' that vary in terms of
accuracy and computational convenience. Formally, if $\Psi$ is the
set of parameters that we can guarantee to be fast mixing, and $D(\theta,\psi)$
is a divergence between $\theta$ and $\psi$, then we would like
to solve
\begin{equation}
\arg\min_{\psi\in\Psi}D(\theta,\psi).\label{eq:general_projection}
\end{equation}

As we will see, in selecting $D$ there appears to be something of
a trade-off between the quality of the approximation, and the ease
of computing the projection in Eq. \ref{eq:general_projection}.

In this section, we work with the generic exponential family representation
\vspace{-10pt}

\[
p(x;\theta)=\exp(\theta\cdot f(x)-A(\theta)).
\]
We use $\mu$ to denote the mean value of $f$. By a standard result,
this is equal to the gradient of $A$, i.e.

\[
\mu(\theta)=\sum_{x}p(x;\theta)f(x)=\nabla A(\theta).
\]
\vspace{-15pt}

\subsection{Euclidean Distance}

The simplest divergence is simply the $l_{2}$ distance between the
parameter vectors, $D(\theta,\psi)=||\theta-\psi||_{2}$. For the
Ising model, Theorem \ref{thm:projection_theorem-main} provides a
method to compute the projection $\arg\min_{\psi\in\Psi}||\theta-\psi||_{2}.$
While simple, this has no obvious probabilistic interpretation, and
other divergences perform better in the experiments below.

However, it also forms the basis of our projected gradient descent
strategy for computing the projection in Eq. \ref{eq:general_projection}
under more general divergences $D$. Specifically, we will do this
by iterating
\begin{enumerate}
\item $\psi'\leftarrow\psi-\lambda\frac{d}{d\psi}D(\theta,\psi)$
\item $\psi\leftarrow\arg\min_{\psi\in\Psi}||\psi'-\psi||_{2}$
\end{enumerate}
for some step-size $\lambda$. In some cases, $dD/d\psi$ can be calculated
exactly, and this is simply projected gradient descent. In other cases,
one needs to estimate $dD/d\psi$ by sampling from $\psi$. As discussed
below, we do this by maintaining a ``pool'' of samples. In each
iteration, a few Markov chain steps are applied with the current parameters,
and then the gradient is estimated using them. Since the gradients
estimated at each time-step are dependent, this can be seen as an
instance of Ergodic Mirror Descent \cite{ErgodicMirrorDescent}. This
guarantees convergence if the number of Markov chain steps, and the
step-size $\lambda$ are both functions of the total number of optimization
iterations.

\subsection{KL-Divergence}

Perhaps the most natural divergence to use would be the ``inclusive''
KL-divergence
\begin{equation}
D(\theta,\psi)=KL(\theta||\psi)=\sum_{x}p(x;\theta)\log\frac{p(x;\theta)}{p(x;\psi)}.\label{eq:zero-avoiding-KL}
\end{equation}

This has the ``zero-avoiding'' property \cite{DivergenceMeasuresAndMessagePassing}
that $\psi$ will tend to assign some probability to all configurations
that $\theta$ assigns nonzero probability to. It is easy to show
that the derivative is
\begin{equation}
\frac{dD(\theta,\psi)}{d\psi}=\mu(\psi)-\mu(\theta),\label{eq:KL-divergence-gradient}
\end{equation}
where $\mu_{\theta}=\mathbb{E}_{\theta}[f(X)].$ Unfortunately, this
requires inference with respect to both the parameter vectors $\theta$
and $\psi$. Since $\psi$ will be enforced to be fast-mixing during
optimization, one could approximate $\mu(\psi)$ by sampling. However,
$\theta$ is presumed to be slow-mixing, making $\mu(\theta)$ difficult
to compute. Thus, this divergence is only practical on low-treewidth
``toy'' graphs.

\subsection{Piecewise KL-Divergences}

Inspired by the piecewise likelihood \cite{PiecewiseTraining} and
likelihood approximations based on mixtures of trees \cite{SpanningTreeApproximations},
we seek tractable approximations of the KL-divergence based on tractable
subgraphs. Our motivation is the the following: if $\theta$ and $\psi$
define the same distribution, then if a certain set of edges are removed
from both, they should continue to define the same distribution%
\footnote{Technically, here, we assume that the exponential family is minimal.
However, in the case of an over-complete exponential family, enforcing
this will simply ensure that $\theta$ and $\psi$ use the same reparameterization.%
}. Thus, given some graph $T$, we define the ``projection'' $\theta(T)$
onto the tree such by setting all edge parameters to zero if not part
of $T$. Then, given a set of graphs $T$, the piecewise KL-divergence
is
\[
D(\theta,\psi)=\max_{T}KL(\theta(T)||\psi(T)).
\]
Computing the derivative of this divergence is not hard-- one simply
computes the KL-divergence for each graph, and uses the gradient as
in Eq. \ref{eq:KL-divergence-gradient} for the maximizing graph.

There is some flexibility of selecting the graphs $T$. In the simplest
case, one could simply select a set of trees (assuring that each edge
is covered by one tree), which makes it easy to compute the KL-divergence
on each tree using the sum-product algorithm. We will also experiment
with selecting low-treewidth graphs, where exact inference can take
place using the junction tree algorithm.

\subsection{Reversed KL-Divergence}

We also consider the ``zero-forcing'' KL-divergence
\[
D(\theta,\psi)=KL(\psi||\theta)=\sum_{x}p(x;\psi)\log\frac{p(x;\psi)}{p(x;\theta)}.
\]

\begin{thm}
The divergence $D(\theta,\psi)=KL(\psi||\theta)$ has the gradient
\[
\frac{d}{d\psi}D(\theta,\psi)=\sum_{x}p(x;\psi)(\psi-\theta)\cdot f(x)\left(f(x)-\mu(\psi)\right).
\]
\vspace{-10pt}

\end{thm}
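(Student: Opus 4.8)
The plan is to reduce the divergence to a closed form in $\psi$ using the exponential-family structure, and then differentiate. Writing $p(x;\psi)=\exp(\psi\cdot f(x)-A(\psi))$ and similarly for $\theta$, one has $\log p(x;\psi)-\log p(x;\theta)=(\psi-\theta)\cdot f(x)-A(\psi)+A(\theta)$, so that
\[
KL(\psi||\theta)=\sum_x p(x;\psi)\bigl[(\psi-\theta)\cdot f(x)-A(\psi)+A(\theta)\bigr]=(\psi-\theta)\cdot\mu(\psi)-A(\psi)+A(\theta),
\]
using $\sum_x p(x;\psi)f(x)=\mu(\psi)$ and $\sum_x p(x;\psi)=1$. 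This already collapses the sum over the (exponentially many) configurations into the mean parameter $\mu(\psi)$.

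Next I would differentiate this expression with respect to $\psi$, holding $\theta$ fixed. By the product rule the gradient of $(\psi-\theta)\cdot\mu(\psi)$ is $\mu(\psi)+\bigl(\nabla\mu(\psi)\bigr)(\psi-\theta)$, and since $\mu(\psi)=\nabla A(\psi)$ the term $\mu(\psi)$ exactly cancels the gradient $-\nabla A(\psi)=-\mu(\psi)$ coming from $-A(\psi)$. This leaves $\frac{d}{d\psi}KL(\psi||\theta)=\bigl(\nabla\mu(\psi)\bigr)(\psi-\theta)$. The remaining ingredient is the standard identity that the Jacobian of the mean map is the Hessian of the log-partition function, $\nabla\mu(\psi)=\nabla^2A(\psi)=\mathrm{Cov}_\psi\bigl(f(X)\bigr)$, i.e. $\partial\mu_j/\partial\psi_k=\mathbb{E}_\psi[f_j(X)f_k(X)]-\mu_j(\psi)\mu_k(\psi)$. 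Substituting this gives, componentwise,
\[
\frac{d}{d\psi_k}KL(\psi||\theta)=\sum_j(\psi_j-\theta_j)\,\mathrm{Cov}_\psi\bigl(f_j(X),f_k(X)\bigr)=\mathbb{E}_\psi\Bigl[\bigl((\psi-\theta)\cdot f(X)\bigr)\bigl(f_k(X)-\mu_k(\psi)\bigr)\Bigr],
\]
and collecting the components $k$ yields the claimed formula.

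Alternatively, one can avoid the closed form and differentiate $\sum_x p(x;\psi)\bigl[\log p(x;\psi)-\log p(x;\theta)\bigr]$ termwise: the derivative acting on the $\log p(x;\psi)$ factor contributes $\sum_x \partial_{\psi_k}p(x;\psi)=\partial_{\psi_k}\!\sum_x p(x;\psi)=0$, while the derivative acting on the outer $p(x;\psi)$ uses $\partial_{\psi_k}p(x;\psi)=p(x;\psi)\bigl(f_k(x)-\mu_k(\psi)\bigr)$, and the constant $-A(\psi)+A(\theta)$ drops out because $\sum_x p(x;\psi)\bigl(f_k(x)-\mu_k(\psi)\bigr)=0$. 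I expect no real obstacle: the only care needed is the bookkeeping of which quantities depend on $\psi$, and the repeated use of the fact that the centered statistic $f(X)-\mu(\psi)$ has mean zero under $p(\cdot;\psi)$, which is exactly what makes the stray terms vanish in both derivations.
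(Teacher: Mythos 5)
Your proof is correct and takes essentially the same route as the paper: both exploit the exponential-family form to write $KL(\psi||\theta)$ as $\sum_x p(x;\psi)(\psi-\theta)\cdot f(x)+A(\theta)-A(\psi)$, differentiate, and observe that the stray $\mu(\psi)$ terms cancel using $dp(x;\psi)/d\psi=p(x;\psi)(f(x)-\mu(\psi))$. Your first variant, which collapses the sum to $(\psi-\theta)\cdot\mu(\psi)-A(\psi)+A(\theta)$ and invokes $\nabla\mu(\psi)=\nabla^2 A(\psi)=\mathrm{Cov}_\psi(f(X))$, is just an equivalent repackaging of the same cancellation.
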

Arguably, using this divergence is inferior to the ``zero-avoiding''
KL-divergence. For example, since the parameters $\psi$ may fail
to put significant probability at configurations where $\theta$ does,
using importance sampling to reweight samples from $\psi$ to estimate
expectations with respect to $\theta$ could have high variance Further,
it can be non-convex with respect to $\psi$. Nevertheless, it often
work well in practice. Minimizing this divergence under the constraint
that the dependency matrix $R$ corresponding to $\psi$ have a limited
spectral norm is closely related to naive mean-field, which can be
seen as a degenerate case where one constrains $R$ to have zero norm.

This is easier to work with than the ``zero-avoiding'' KL-divergence
in Eq. \ref{eq:zero-avoiding-KL} since it involves taking expectations
with respect to $\psi$, rather than $\theta$: since $\psi$ is enforced
to be fast-mixing, these expectations can be approximated by sampling.
Specifically, suppose that one has generated a set of samples $x^{1},...,x^{K}$
using the current parameters $\psi$. Then, one can first approximate
the marginals by $\hat{\mu}=\frac{1}{K}\sum_{k=1}^{K}f(x^{k}),$ and
then approximate the gradient by
\begin{equation}
\hat{g}=\frac{1}{K}\sum_{k=1}^{K}\left((\psi-\theta)\cdot f(x^{k})\right)\left(f(x^{k})-\hat{\mu}\right).\label{eq:SGD_gradient_estimate}
\end{equation}
\vspace{-10pt}

It is a standard result that if two estimators are unbiased and independent,
the product of the two estimators will also be unbiased. Thus, if
one used separate sets of perfect samples to estimate $\hat{\mu}$
and $\hat{g}$, then $\hat{g}$ would be an unbiased estimator of
$dD/d\psi$. In practice, of course, we generate the samples by Gibbs
sampling, so they are not quite perfect. We find in practice that
using the same set of samples twice makes makes little difference,
and do so in the experiments.
\begin{figure}[H]
\begin{centering}
\hfill{}\includegraphics[width=0.445\textwidth]{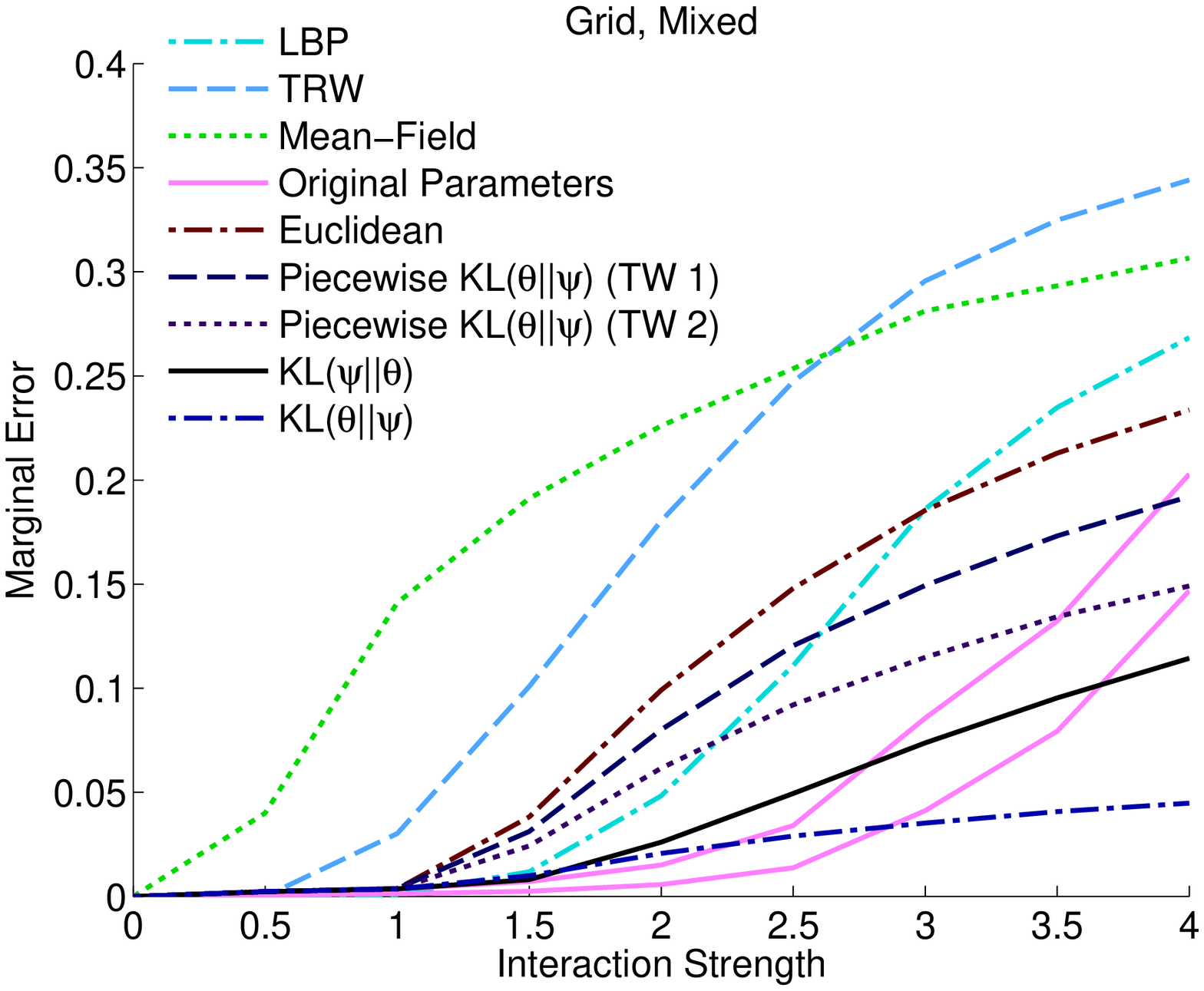}\hfill{}\includegraphics[width=0.445\textwidth]{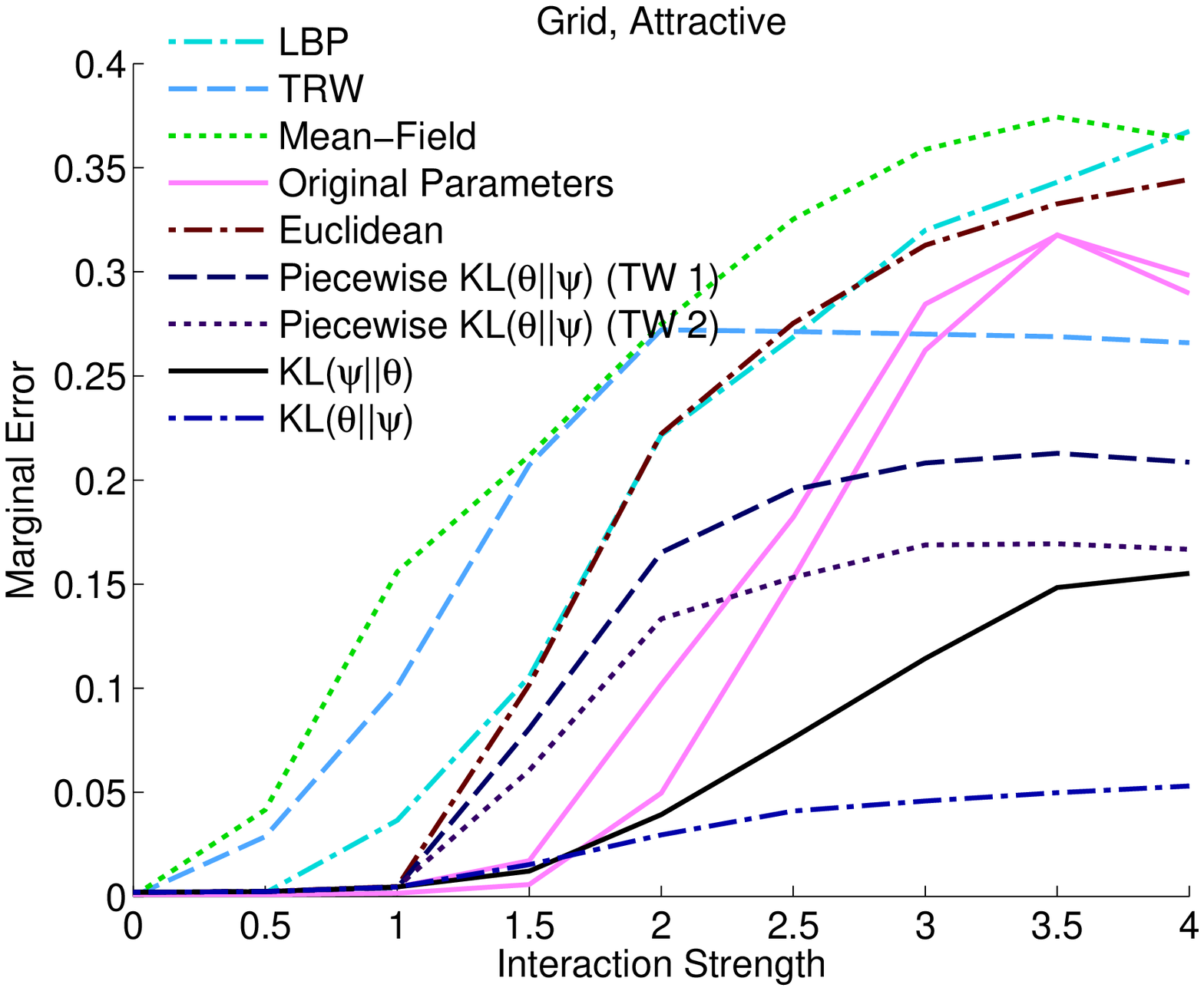}\hfill{}
\par\end{centering}

\begin{centering}
\hfill{}\includegraphics[width=0.445\textwidth]{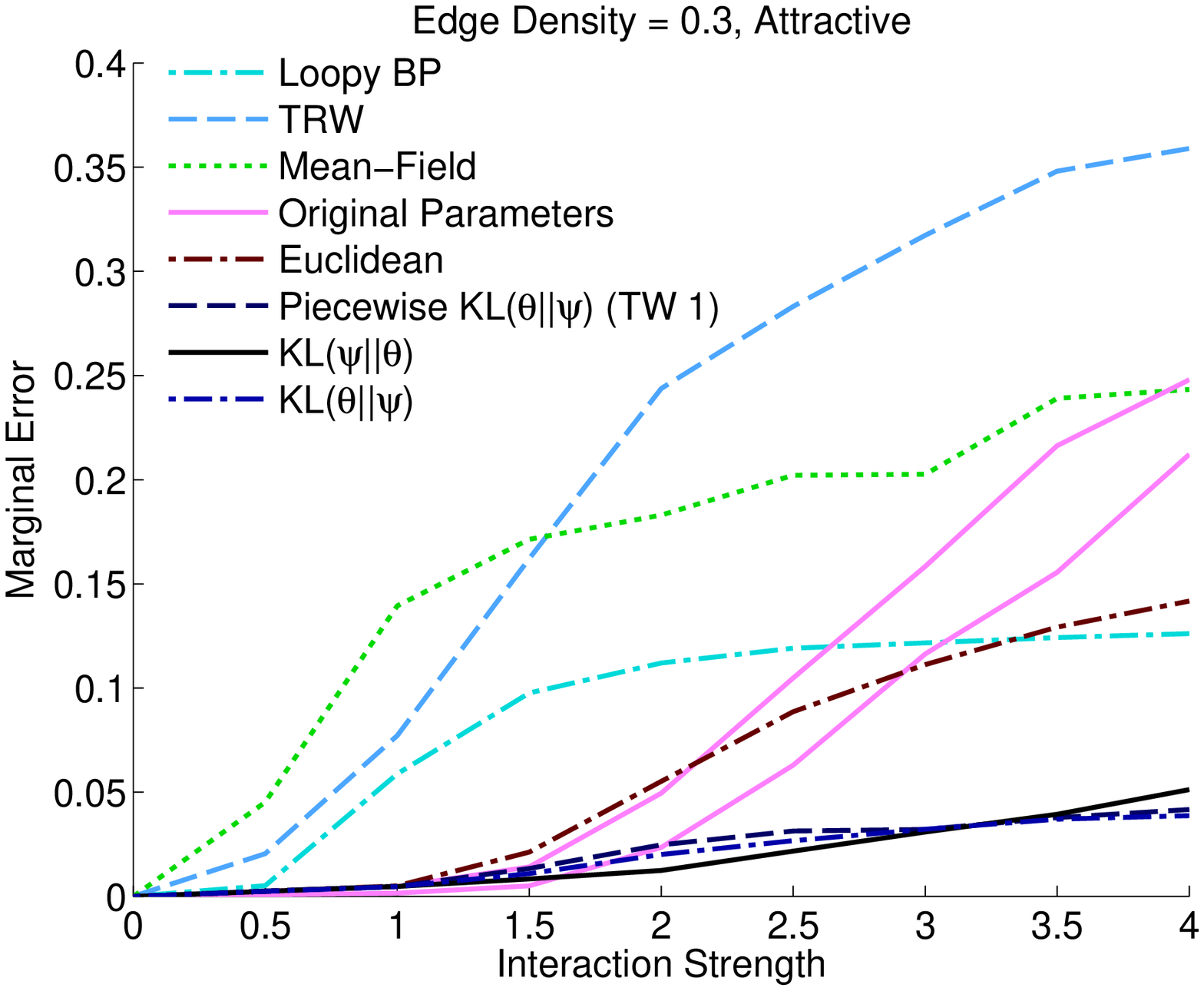}\hfill{}\includegraphics[width=0.445\textwidth]{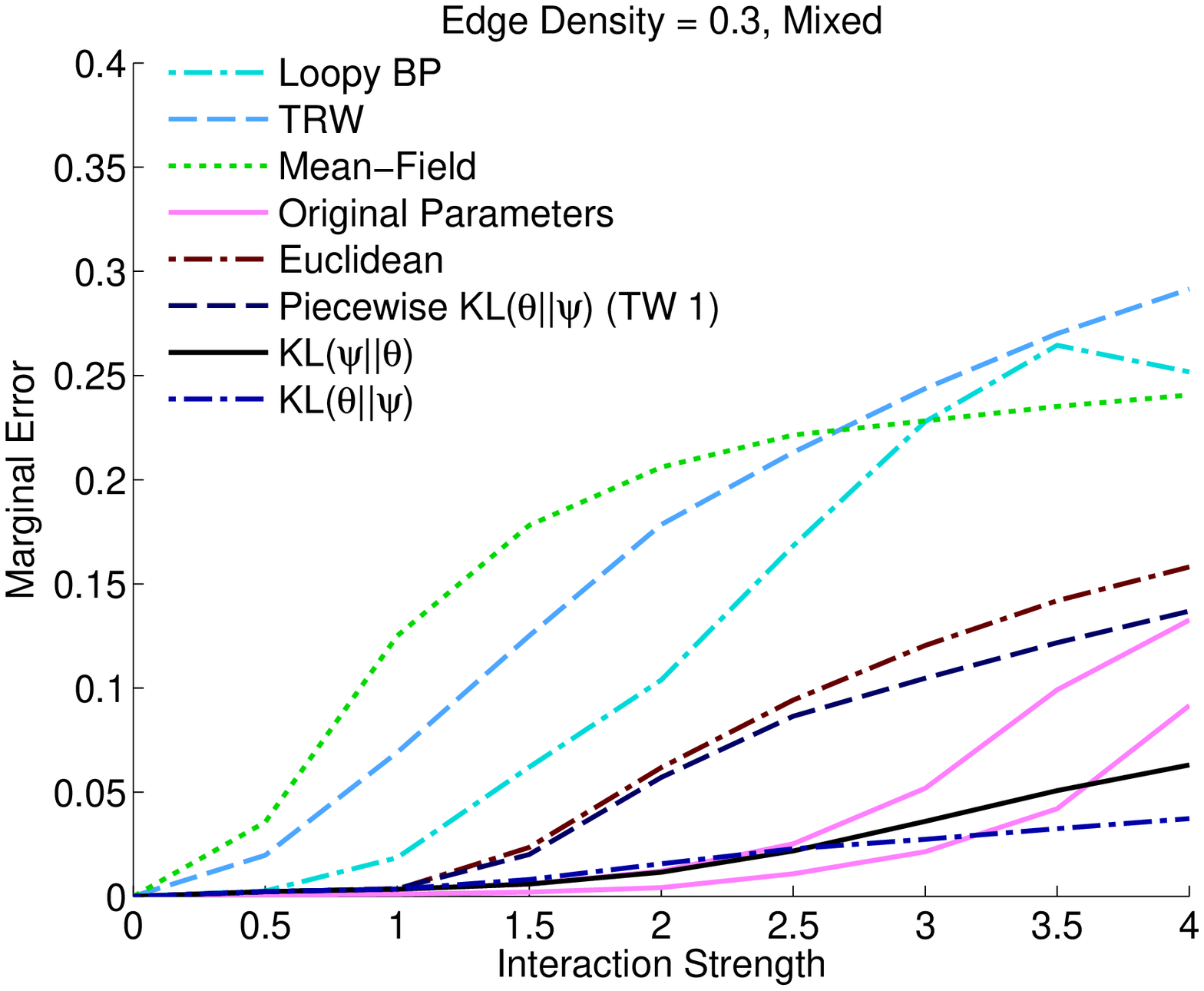}\hfill{}\vspace{-10pt}

\par\end{centering}

\caption{The mean error of estimated univariate marginals on 8x8 grids (top
row) and low-density random graphs (bottom row), comparing 30k iterations
of Gibbs sampling after projection to variational methods. To approximate
the computational effort of projection (Table \ref{tab:Running-Times}),
sampling on the original parameters with 250k iterations is also included
as a lower curve. (Full results in appendix.)}
\end{figure}

\section{Experiments}

Our experimental evaluation follows that of Hazan and Shashua \cite{ConvergentMessagePassingAlgorithms}
in evaluating the accuracy of the methods using the Ising model in
various configurations. In the experiments, we approximate randomly
generated Ising models with rapid-mixing distributions using the projection
algorithms described previously. Then, the marginals of rapid-mixing
approximate distribution are compared against those of the target
distributions by running a Gibbs chain on each. We calculate the mean
absolute distance of the marginals as the accuracy measure, with the
marginals computed via the exact junction-tree algorithm.  

We evaluate projecting under the Euclidean distance (Section 5.1),
the piecewise divergence (Section 5.3)\textcolor{black}{, and the
zero-forcing KL-divergence $KL(\psi||\theta)$} (Section 5.4). On
small graphs, it is possible to minimize the zero-avoiding KL-divergence
$KL(\theta||\psi)$ by computing marginals using the junction-tree
algorithm. However, as minimizing this KL-divergence leads to exact
marginal estimates, it doesn't provide a useful measure of marginal
accuracy. Our methods are compared with four other inference algorithms,
namely loopy belief-propagation (LBP), Tree-reweighted belief-propagation
(TRW), Naive mean-field (MF), and Gibbs sampling on the original parameters.

LBP, MF and TRW are among the most widely applied variational methods
for approximate inference. The MF algorithm uses a fully factorized
distribution as the tractable family, and can be viewed as an extreme
case of\textcolor{black}{{} minimizing the zero forcing KL-divergence
$KL(\psi||\theta)$} under the constraint of zero spectral norm. The
tractable family that it uses guarantees ``instant'' mixing but
is much more restrictive. Theoretically, Gibbs sampling on the original
parameters will produce highly accurate marginals if run long enough.
However, this can take exponentially long and convergence is generally
hard to diagnose \cite{Cowles96markovchain}\textcolor{cyan}{.} In
contrast, Gibbs sampling on the rapid-mixing approximation is guaranteed
to converge rapidly but will result in less accurate marginals asymptotically.
Thus, we also include time-accuracy comparisons between these two
strategies in the experiments.

\subsection{Configurations}

Two types of graph topologies are used: two-dimensional $8\times8$
grids and random graphs with $10$ nodes. Each edge is independently
present with probability $p_{e}\in\{0.3,0.5,0.7\}$. Node parameters
$\theta_{i}$ are uniformly drawn from unif$(-d_{n},d_{n})$ and we
fix the field strength to $d_{n}=1.0$. Edge parameters $\theta_{ij}$
are uniformly drawn from unif$(-d_{e},d_{e})$ or unif$(0,d_{e})$
to obtain mixed or attractive interactions respectively. We generate
graphs with different interaction strength $d_{e}=0,0.5,\dots,4$.
All results are averaged over 50 random trials.

To calculate piecewise divergences, it remains to specify the set
of subgraphs $T$. It can be any tractable subgraph of the original
distribution. For the grids, one straightforward choice is to use
the horizontal and the vertical chains as subgraphs. We also test
with chains of treewidth 2. For random graphs, we use\textcolor{black}{{}
the sets of random spanning trees which can cover every edge of the
original graphs as the set of subgraphs}.

\textcolor{black}{A stochastic gradient descent algorithm is applied
to minimize the zero forcing KL-divergence $KL(\psi||\theta)$. In
this algorithm, a ``pool'' of samples is repeatedly used to estimate
gradients as in Eq. \ref{eq:SGD_gradient_estimate}. After each parameter
update, each sample is updated by a single Gibbs step, consisting
of one pass over all variables. The performance of this algorithm
can be affected by several parameters, including the gradient search
step size, the size of the sample pool, the number of Gibbs updates,
and the number of total iterations. (This algorithm can be seen as
an instance of Ergodic Mirror Descent \cite{ErgodicMirrorDescent}.)
Without intensive tuning of these parameters, we choose a constant
step size of $0.1$, sample pool size of 500 and 60 total iterations,
which performed reasonably well in practice. }

For each original or approximate distribution, a single chain of Gibbs
sampling is run on the final parameters, and marginals are estimated
from the samples drawn.\textcolor{black}{{} Each Gibbs iteration is
one pass of systematical scan over the variables in fixed order. Note
that this does not take into account the computational effort deployed
during projection, which ranges from 30,000 total Gibbs iterations
with repeated Euclidean projection $(KL(\psi||\theta)$) to none at
all (Original parameters). It has been our experience that more aggressive
parameters can lead to this procedure being more accurate than Gibbs
in a comparison of total computational effort, but such a scheduling
tends to also reduce the accuracy of the final parameters, making
results more difficult to interpret.}

In Section 3.2, we show that for Ising models, a sufficient condition
for rapid-mixing is the spectral norm of pairwise weight matrix is
less than 1.0. However, we find in practice using a spectral norm
bound of $2.5$ instead of $1.0$ can still preserve the rapid-mixing
property and gives better approximation to the original distributions.
(See Section \ref{sec:Discussion} for a discussion.)

\begin{figure}[t]
\begin{centering}
\hfill{}\includegraphics[width=0.445\textwidth]{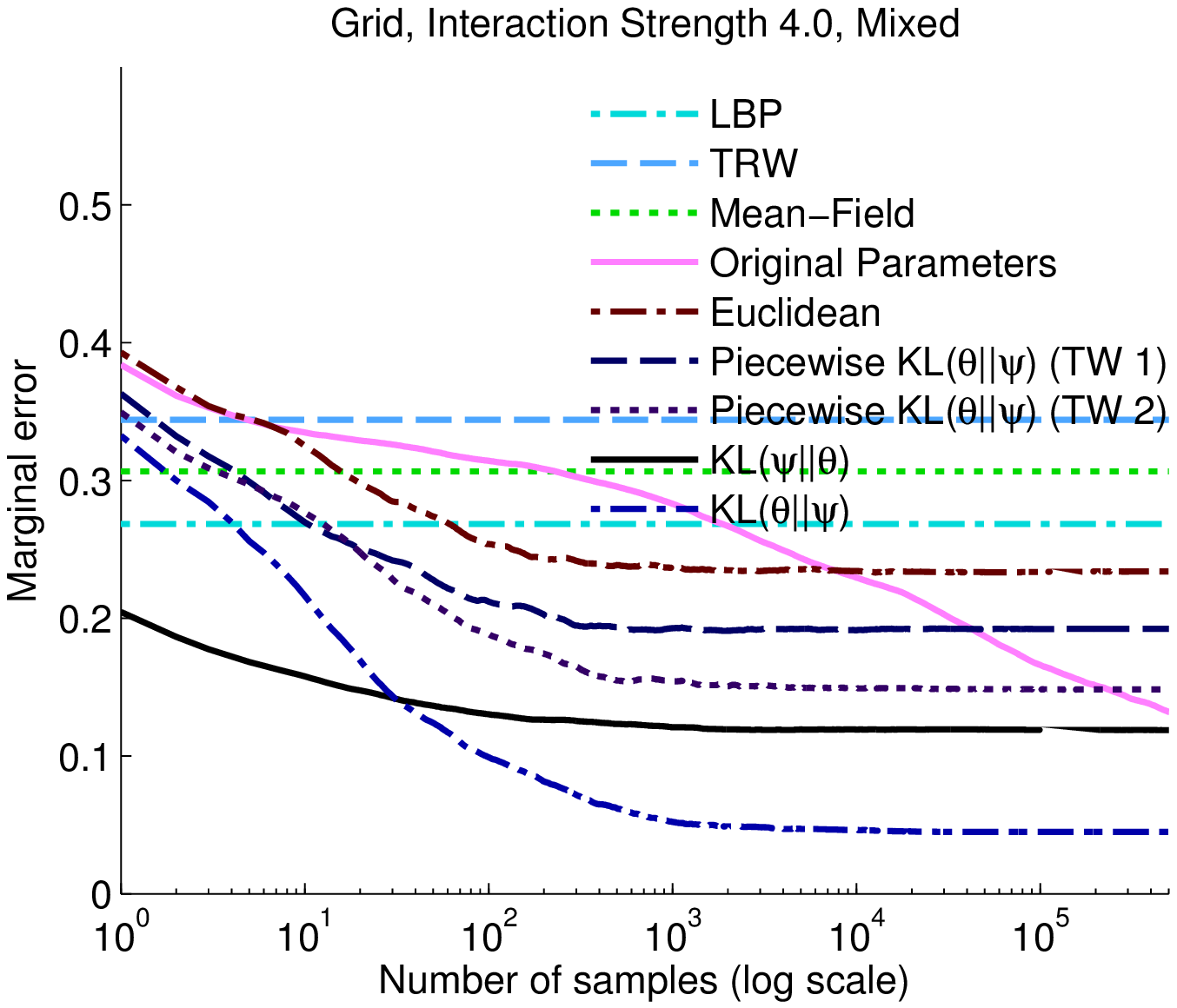}\hfill{}\includegraphics[width=0.445\textwidth]{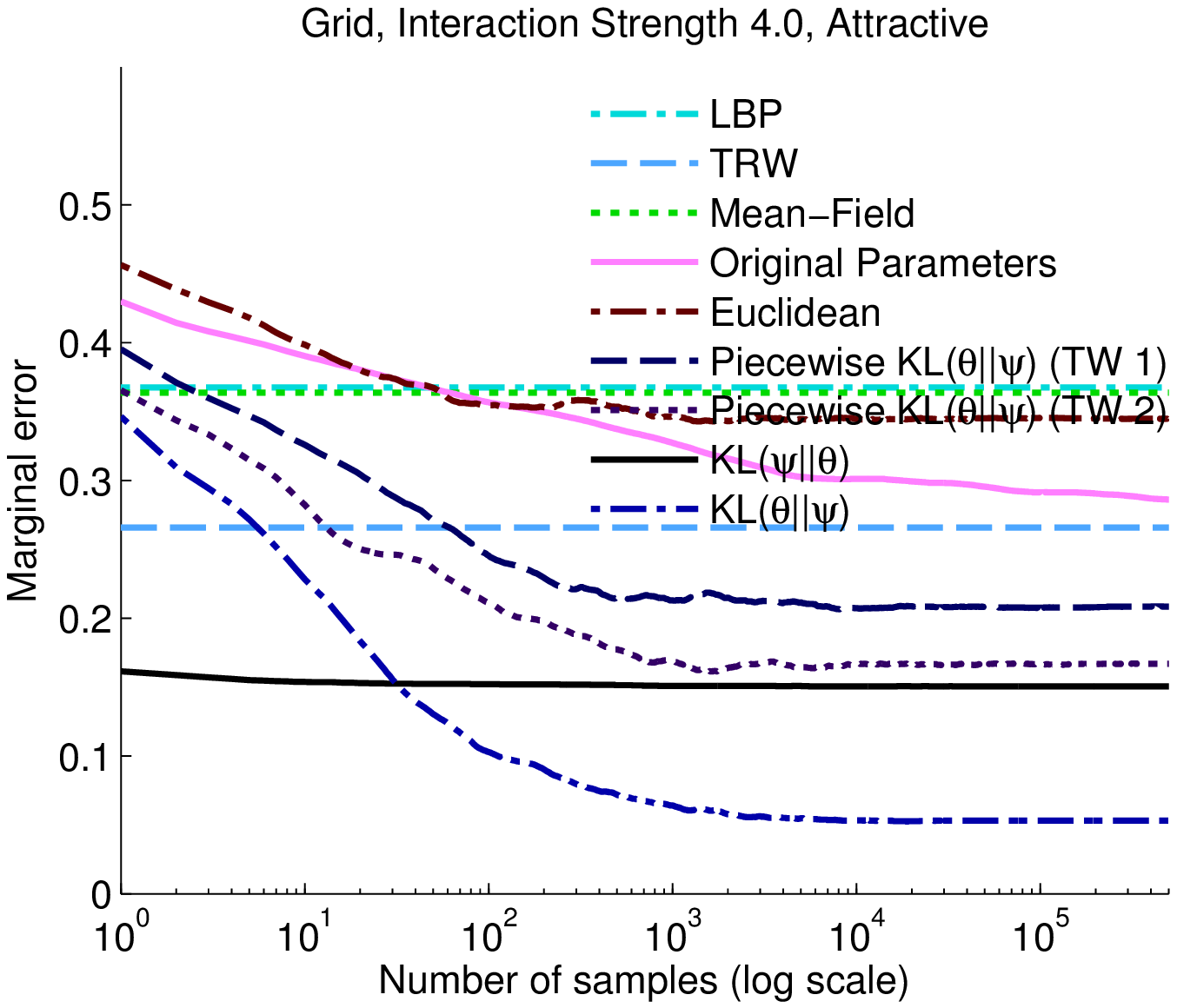}\hfill{}
\par\end{centering}

\begin{centering}
\hfill{}\includegraphics[width=0.445\textwidth]{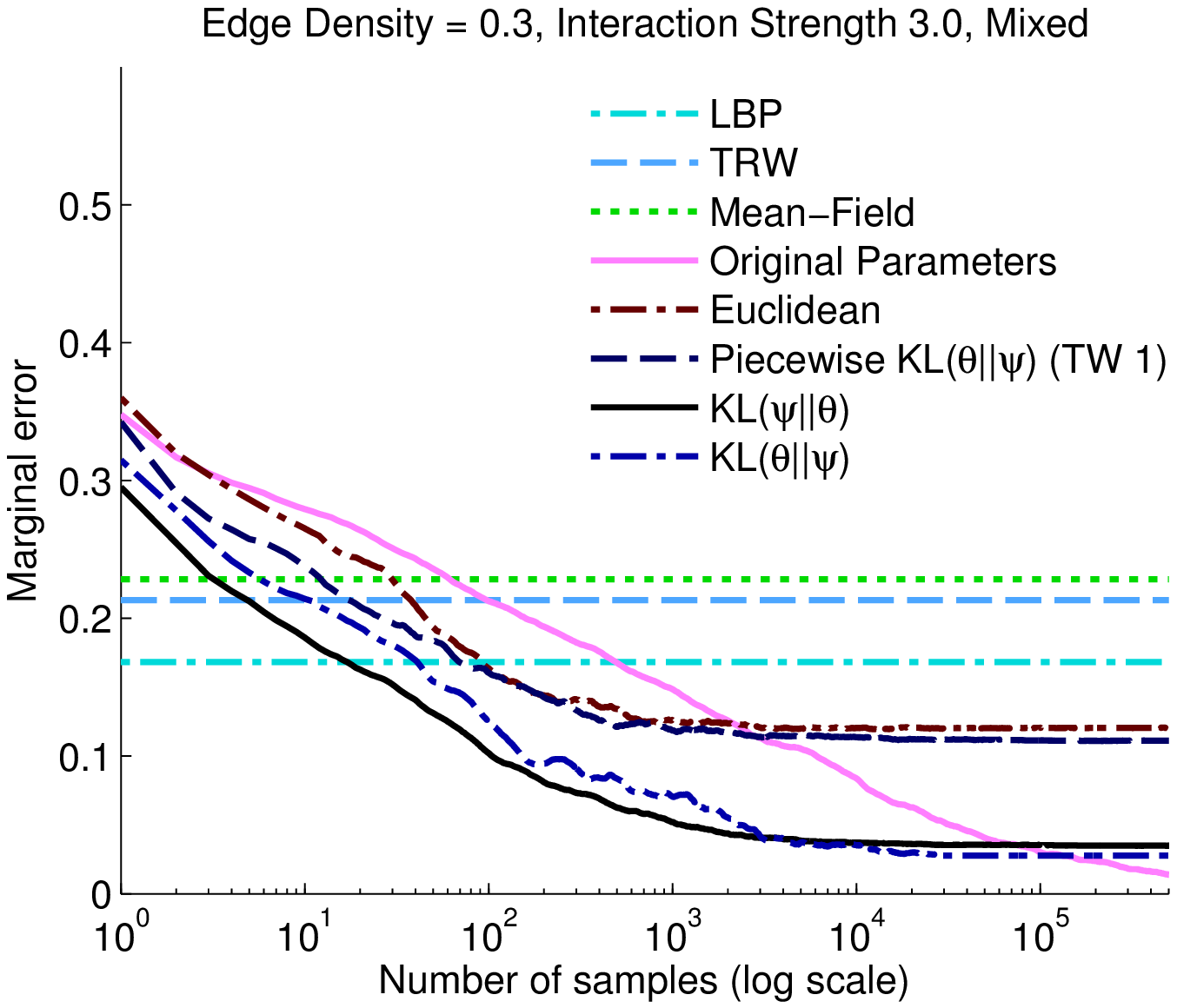}\hfill{}\includegraphics[width=0.445\textwidth]{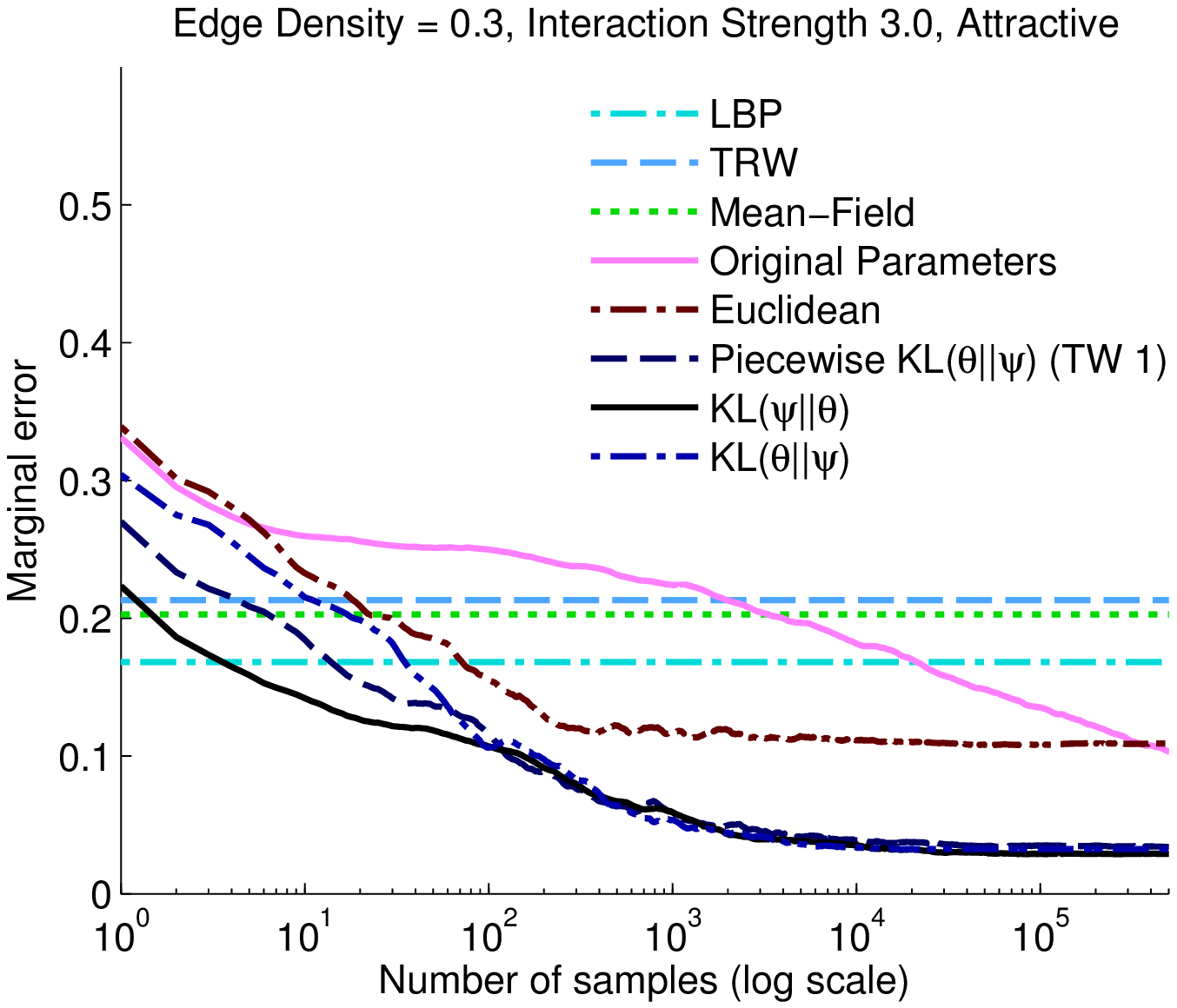}\hfill{}\vspace{-10pt}

\par\end{centering}

\caption{Example plots of the accuracy of obtained marginals vs. the number
of samples. Top: Grid graphs. Bottom: Low-Density Random graphs. (Full
results in appendix.)}
\end{figure}
\begin{table}
\setlength\tabcolsep{4pt}

\begin{tabular}{|c|cc|cc|cc|}
\cline{2-7} 
\multicolumn{1}{c|}{} & \multicolumn{2}{c|}{{\small{Grid, Strength 1.5}}} & \multicolumn{2}{c|}{{\small{Grid, Strength 3}}} & \multicolumn{2}{c|}{{\small{Random Graph, Strength 3.}}}\tabularnewline
\multicolumn{1}{c|}{} & {\small{Gibbs Steps}} & {\small{SVDs}} & {\small{Gibbs Steps}} & {\small{SVDs}} & {\small{Gibbs Steps}} & {\small{SVDs}}\tabularnewline
\hline 
{\small{30,000 Gibbs steps}} & {\small{30k / 0.17s}} &  & {\small{30k / 0.17s}} &  & {\small{30k / 0.04s}} & \tabularnewline
{\small{250,000 Gibbs steps}} & {\small{250k / 1.4s}} &  & {\small{250k / 1.4s}} &  & {\small{250k / 0.33s}} & \tabularnewline
{\small{Euclidean Projection}} &  & {\small{22 / 0.04s}} &  & {\small{78 / 0.15s}} &  & {\small{17 / .0002s}}\tabularnewline
{\small{Piecewise-1 Projection}} &  & {\small{322 / 0.61s}} &  & {\small{547 / 1.0s}} &  & {\small{408 / 0.047s}}\tabularnewline
{\small{KL Projection}} & {\small{30k / 0.17s}} & {\small{265 / 0.55s}} & {\small{30k / 0.17s}} & {\small{471 / 0.94s}} & {\small{30k / 0.04s}} & {\small{300 / 0.037s}}\tabularnewline
\hline 
\end{tabular}

\caption{Running Times on various attractive graphs, showing the number of
Gibbs passes and Singular Value Decompositions, as well as the amount
of computation time. The random graph is based on an edge density
of 0.7. Mean-Field, Loopy BP, and TRW take less than 0.01s.\label{tab:Running-Times}}
\end{table}

\section{Discussion\label{sec:Discussion}}

Inference in high-treewidth graphical models is intractable, which
has motivated several classes of approximations based on tractable
families. In this paper, we have proposed a new notion of ``tractability'',
insisting not that a graph has a fast algorithm for exact inference,
but only that it obeys parameter-space conditions ensuring that Gibbs
sampling will converge rapidly to the stationary distribution. For
the case of Ising models, we use a simple condition that can guarantee
rapid mixing, namely that the spectral norm of the matrix of interaction
strengths is less than one.

Given an intractable set of parameters, we consider using this approximate
family by ``projecting'' the intractable distribution onto it under
several divergences. First, we consider the Euclidean distance of
parameters, and derive a dual algorithm to solve the projection, based
on an iterative thresholding of the singular value decomposition.
Next, we extend this to more probabilistic divergences. Firstly, we
consider a novel ``piecewise'' divergence, based on computing the
exact KL-divergnce on several low-treewidth subgraphs. Secondly, we
consider projecting onto the KL-divergence. This requires a stochastic
approximation approach where one repeatedly generates samples from
the model, and projects in the Euclidean norm after taking a gradient
step.

We compare experimentally to Gibbs sampling on the original parameters,
along with several standard variational methods. The proposed methods
are more accurate than variational approximations. Given enough time,
Gibbs sampling using the original parameters will always be more accurate,
but with finite time, projecting onto the fast-mixing set to generally
gives better results.

Future work might extend this approach to general Markov random fields.
This will require two technical challenges. First, one must find a
bound on the dependency matrix for general MRFs, and secondly, an
algorithm is needed to project onto the fast-mixing set defined by
this bound. Fast-mixing distributions might also be used for learning.
E.g., if one is doing maximum likelihood learning using MCMC to estimate
the likelihood gradient, it would be natural to constrain the parameters
to a fast mixing set.

One weakness of the proposed approach is the apparent looseness of
the spectral norm bound. For the two dimensional Ising model with
no univariate terms, and a constant interaction strength $\beta$,
there is a well-known threshold $\beta_{c}=\frac{1}{2}\ln(1+\sqrt{2})\approx.4407$,
obtained using more advanced techniques than the spectral norm \cite{CriticalIsingMixing}.
Roughly, for $\beta<\beta_{c}$, mixing is known to occur quickly
(polynomial in the grid size) while for $\beta>\beta_{c}$, mixing
is exponential. On the other hand, the spectral bound norm will be
equal to one for $\beta=.25$, meaning the bound is too conservative
in this case by a factor of $\beta_{c}/.25\approx1.76$. A tighter
bound on when rapid mixing will occur would be more informative.

\newpage{}

\bibliographystyle{plain}
\bibliography{thebib,bibliography_pamipaper,bibliography_singleloop}

\newpage{}

\section*{Appendix}

Recall that we are interested in the minimization

\begin{eqnarray}
\min_{B,D} &  & ||A-B||_{F}\label{eq:sparse-projection-minimization-1}\\
s.t. &  & ||D||_{2}\leq c\nonumber \\
 &  & Z_{ij}D_{ij}=0\nonumber \\
 &  & D=|B|.\nonumber 
\end{eqnarray}

\begin{lem}
\label{lem:projection-minimization-2}If we define $R=|A|$, this
is equivalent to the minimization

\begin{eqnarray}
\min_{D} &  & ||R-D||_{F}\label{eq:sparse-projection-minimization-1-1}\\
s.t. &  & ||D||_{2}\leq c\nonumber \\
 &  & Z_{ij}D_{ij}=0\nonumber \\
 &  & D\geq0\nonumber 
\end{eqnarray}
\end{lem}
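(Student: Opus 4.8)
The plan is to eliminate $B$ from the minimization in Eq.~\ref{eq:sparse-projection-minimization-1} by optimizing over $B$ in closed form for each fixed $D$. The first observation is that $B$ enters the constraints only through the coupling $D=|B|$: the remaining constraints $\|D\|_2\le c$ and $Z_{ij}D_{ij}=0$ involve $D$ alone, and $D=|B|$ forces $D\ge 0$ automatically. Conversely, every $D\ge 0$ obeying $\|D\|_2\le c$ and $Z_{ij}D_{ij}=0$ is realizable as $|B|$ for some matrix $B$ --- for instance $B_{ij}=\operatorname{sign}(A_{ij})\,D_{ij}$, with $\operatorname{sign}(0)$ fixed arbitrarily. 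Hence the feasible region of Eq.~\ref{eq:sparse-projection-minimization-1}, projected onto the $D$ coordinate, is exactly the feasible region of Eq.~\ref{eq:sparse-projection-minimization-1-1}, and it remains only to show that for each such $D$,
\[
\min_{B:\,|B|=D}\|A-B\|_F^2 \;=\;\|R-D\|_F^2 .
\]

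To establish this identity I would exploit that both sides are separable over entries. Writing $\|A-B\|_F^2=\sum_{ij}(A_{ij}-B_{ij})^2$ and noting that the constraint $|B_{ij}|=D_{ij}$ decouples across $(i,j)$, the minimization splits into independent scalar subproblems $\min_{b\in\{-D_{ij},\,+D_{ij}\}}(A_{ij}-b)^2$. The two candidate values are $(A_{ij}-D_{ij})^2$ and $(A_{ij}+D_{ij})^2$; since $D_{ij}\ge 0$, the minimum is attained by choosing $b$ with the same sign as $A_{ij}$, which gives $(\,|A_{ij}|-D_{ij}\,)^2=(R_{ij}-D_{ij})^2$ (and when $A_{ij}=0$ both choices give $D_{ij}^2=R_{ij}^2$, consistent with this formula). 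Summing over $i,j$ yields the displayed identity, with optimal $B_{ij}=\operatorname{sign}(A_{ij})D_{ij}$.

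Putting the two pieces together, the optimal value of Eq.~\ref{eq:sparse-projection-minimization-1} equals the minimum of $\|R-D\|_F$ over $D$ subject to $\|D\|_2\le c$, $Z_{ij}D_{ij}=0$, and $D\ge 0$, i.e., the value of Eq.~\ref{eq:sparse-projection-minimization-1-1}; moreover any minimizer $D^\star$ of the latter yields a minimizer $(B^\star,D^\star)$ of the former via $B^\star_{ij}=\operatorname{sign}(A_{ij})D^\star_{ij}$, and conversely. Monotonicity of $t\mapsto\sqrt{t}$ lets us pass freely between minimizing $\|\cdot\|_F$ and $\|\cdot\|_F^2$ throughout, so this gives the claimed equivalence.

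I do not anticipate a substantive obstacle: the argument is essentially the reverse triangle inequality applied entrywise. The only points demanding a little care are verifying $\min\{(A_{ij}-D_{ij})^2,(A_{ij}+D_{ij})^2\}=(R_{ij}-D_{ij})^2$ for every sign of $A_{ij}$ (including $A_{ij}=0$), and being explicit that eliminating $B$ loses no feasibility because the constraints on $D$ are untouched and a sign-matching $B$ can always be reconstructed.
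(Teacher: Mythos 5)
Your proof is correct and follows essentially the same route as the paper's: the paper's one-line argument is precisely that for fixed $D$ the optimal $B$ is $D\odot\operatorname{sign}(A)$, so $\|A-B\|_F=\|R-D\|_F$, which is the entrywise sign-matching computation you spell out. Your additional care about the feasible-region correspondence and the $A_{ij}=0$ case is sound but not a different method.
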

\begin{proof}
For fixed $D$, the minimum $B$ will always be achieved by $B=D\odot\text{sign}(A)$,
meaning $||A-B||_{F}=||A-D\odot\text{sign}(A)||_{F}=||R-D||_{F}$.
\end{proof}
To actually project the parameters $A=(\beta_{ij})$ corresponding
to an Ising model, one first takes the absolute value $R=|A|$, and
passes it as input to this minimization. After finding the minimizing
argument, the new parameters are $B=D\odot\text{sign}(A)$.
\begin{thm}
\label{thm:projection_theorem-1-1}Define $R=|A|$. The minimization
in Eq. \ref{eq:sparse-projection-minimization-singleline} is equivalent
to the problem of $\max_{M\geq0,\Lambda}g(\Lambda,M)$, where the
objective and gradient of $g$ are, for $D(\Lambda,M)=\Pi_{c}[R+M-\Lambda\odot Z],$

\begin{align}
g(\Lambda,M) & =\frac{1}{2}||D(\Lambda,M)-R||_{F}^{2}+\Lambda\cdot Z\cdot D(\Lambda,M) - M \cdot D(\Lambda,M)\label{eq:sparse-projection-dual-obj-appendix}\\
\frac{dg}{d\Lambda} & =Z\odot D(\Lambda,M)\label{eq:sparse-projection-dual-grad-1-appendix}\\
\frac{dg}{dM} & = -D(\Lambda,M).\label{eq:sparse-projection-dual-grad-2-appendix}
\end{align}
\end{thm}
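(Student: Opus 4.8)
The plan is to realize $g$ as a partial Lagrangian dual of the reformulated primal from Lemma~\ref{lem:projection-minimization-2}, dualizing the structural and nonnegativity constraints while keeping the spectral-norm ball as an explicit constraint on the inner minimization. First I would apply Lemma~\ref{lem:projection-minimization-2} to replace Eq.~\ref{eq:sparse-projection-minimization-singleline} by the equivalent problem of minimizing $\|R-D\|_F$ subject to $\|D\|_2\le c$, $Z_{ij}D_{ij}=0$ and $D\ge 0$; since $t\mapsto\tfrac12 t^2$ is increasing on $[0,\infty)$, this has the same minimizer as minimizing $\tfrac12\|R-D\|_F^2$, which is the form I would carry forward.

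Next I would attach a free multiplier matrix $\Lambda$ to the equalities $Z_{ij}D_{ij}=0$ and a multiplier matrix $M\ge 0$ to the inequalities $-D\le 0$, forming the partial Lagrangian
\[
L(D,\Lambda,M)=\tfrac12\|R-D\|_F^2+\Lambda\cdot Z\cdot D-M\cdot D ,
\]
whose associated dual function is $g(\Lambda,M)=\min_{D:\,\|D\|_2\le c}L(D,\Lambda,M)$. The crux is evaluating this inner minimization in closed form: completing the square in $D$ gives $L(D,\Lambda,M)=\tfrac12\bigl\|D-(R+M-\Lambda\odot Z)\bigr\|_F^2+h(\Lambda,M)$ with $h$ independent of $D$, so the constrained minimizer over $\{D:\|D\|_2\le c\}$ is exactly the singular-value-thresholding projection $D(\Lambda,M)=\Pi_c[R+M-\Lambda\odot Z]$ by Theorem~\ref{thm:dense-projection}. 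Substituting this minimizer back into the \emph{uncompleted} expression for $L$ recovers Eq.~\ref{eq:sparse-projection-dual-obj-appendix} verbatim.

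It then remains to argue that $\max_{M\ge 0,\Lambda}g$ equals the primal optimum and to verify the gradient formulas. For the former I would invoke strong duality: the objective and feasible set are convex, $D=0$ is strictly feasible for the only nonaffine constraint $\|D\|_2\le c$ while the remaining constraints are affine, so the refined Slater condition holds, the duality gap vanishes, and the dual maximum is attained. For the gradients I would use Danskin's theorem (the envelope theorem): since $\tfrac12\|R-D\|_F^2$ is strictly convex in $D$ and the feasible ball is convex and compact, the inner minimizer $D(\Lambda,M)$ is unique and varies continuously with $(\Lambda,M)$, hence $g$ is differentiable with $\frac{dg}{d\Lambda}=\nabla_\Lambda L\bigl(D(\Lambda,M),\Lambda,M\bigr)=Z\odot D(\Lambda,M)$ and $\frac{dg}{dM}=\nabla_M L\bigl(D(\Lambda,M),\Lambda,M\bigr)=-D(\Lambda,M)$, i.e. Eqs.~\ref{eq:sparse-projection-dual-grad-1-appendix}--\ref{eq:sparse-projection-dual-grad-2-appendix}.

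I expect the main obstacle to be the careful justification of the envelope-theorem step -- establishing uniqueness and continuity of $D(\Lambda,M)$ so that $g$ is genuinely differentiable with the stated gradients rather than merely subdifferentiable -- together with confirming that the partial dualization (leaving $\|D\|_2\le c$ undualized) does not spoil strong duality. The square-completion and the identification of the inner minimizer with $\Pi_c$ are routine once the correct decomposition is in hand.
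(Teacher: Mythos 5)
Your proposal is correct and follows essentially the same route as the paper's proof: dualize the structural and nonnegativity constraints of the reformulated problem from Lemma~\ref{lem:projection-minimization-2} while keeping the spectral-norm ball in the inner minimization (the paper encodes this with an indicator function), complete the square to identify the inner minimizer with $\Pi_c[R+M-\Lambda\odot Z]$, and apply Danskin's theorem for the gradients. Your additional care about strong duality via the refined Slater condition and the uniqueness of the inner minimizer fills in details the paper glosses over with ``standard duality theory,'' but the argument is the same.
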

\begin{proof}
The minimization in Eq. \ref{eq:sparse-projection-minimization-1-1}
has the Lagrangian
\begin{equation}
\mathcal{L}(D,\Lambda,M)=\frac{1}{2}||D-R||_{F}^{2}+I[||D||_{2}\leq c]+\Lambda\cdot Z\cdot D-M\cdot D,\label{eq:sparse-projection-lagrangian}
\end{equation}
where $I$ is an indicator function returning $\infty$ if $||D||_{2}>c$
and zero otherwise, $\Lambda$ is a matrix of Lagrange multipliers
enforcing that $Z_{ij}D_{ij}=0$, and $M$ is a matrix of Lagrange
multipliers enforcing that $D\geq0$.

Standard duality theory states that Eq. \ref{eq:sparse-projection-minimization-1-1}
is equivalent to the saddle-point problem $\max_{M\geq0,\Lambda}\min_{D}\mathcal{L}(D,\Lambda,M)$.
So, we are interested in evaluating $g(\Lambda,M)=\min_{D}\mathcal{L}(D,\Lambda,M)$
for fixed $\Lambda$ and $M$. Some algebra gives
\begin{multline*}
\arg\min_{D}\mathcal{L}(D,\Lambda,M)\\
=\arg\min_{D}\frac{1}{2}||D-R||_{F}^{2}+\Lambda\cdot Z\cdot D+I\bigl[||D||_{2}\leq c\bigr]-M\cdot D\\
=\arg\min_{D}\frac{1}{2}||D-(R+M-\Lambda\odot Z)||_{F}^{2}+I\bigl[||D||_{2}\leq c\bigr],
\end{multline*}
which shows that $g$ can be calculated as in Eq. \ref{eq:sparse-projection-dual-obj-appendix}.

Next, we are interested in the gradient of $g$. By applying Danskin's
theorem to Eq. \ref{eq:sparse-projection-lagrangian}, we have that
$\frac{d}{dM}\arg\min_{D}\mathcal{L}(D,\Lambda,M)$ will be exactly
$-D$ where $D$ is the minimizer of Eq. \ref{eq:sparse-projection-lagrangian}.
This establishes Eq. \ref{eq:sparse-projection-dual-grad-2-appendix}.
Similarly, it can be shown that $\frac{d}{d\Lambda}\arg\min_{D}\mathcal{L}(D,\Lambda,M)=Z\odot D,$
establishing Eq. \ref{eq:sparse-projection-dual-grad-1-appendix}.\end{proof}
\begin{thm}
The divergence $D(\theta,\psi)=KL(\psi||\theta)$ has the gradient
\[
\frac{d}{d\psi}D(\theta,\psi)=\sum_{x}p(x;\psi)(\psi-\theta)\cdot f(x)\left(f(x)-\mu(\psi)\right).
\]
\end{thm}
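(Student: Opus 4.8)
The plan is to substitute the exponential-family form $p(x;\psi)=\exp(\psi\cdot f(x)-A(\psi))$ directly into the definition of $KL(\psi||\theta)$, so that the log-ratio becomes affine in the parameters. Writing $p_\psi$ for $p(\cdot\,;\psi)$, this gives
\[
D(\theta,\psi)=\sum_x p_\psi(x)\bigl[(\psi-\theta)\cdot f(x)-A(\psi)+A(\theta)\bigr]=\mathbb{E}_\psi\bigl[(\psi-\theta)\cdot f(X)\bigr]-A(\psi)+A(\theta).
\]
The only facts I will need about exponential families are the two standard identities $\nabla A(\psi)=\mu(\psi)$ and the score identity $\nabla_\psi p_\psi(x)=p_\psi(x)\bigl(f(x)-\mu(\psi)\bigr)$, the latter following from $\nabla_\psi\log p_\psi(x)=f(x)-\nabla A(\psi)$.

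Next I would differentiate term by term, which is legitimate since the sum over $x$ is finite and $A$ is smooth. The product rule applied to $\sum_x p_\psi(x)\bigl[(\psi-\theta)\cdot f(x)-A(\psi)+A(\theta)\bigr]$ produces two contributions: (i) a ``score'' term $\sum_x p_\psi(x)\bigl(f(x)-\mu(\psi)\bigr)\bigl[(\psi-\theta)\cdot f(x)-A(\psi)+A(\theta)\bigr]$ from differentiating $p_\psi(x)$, and (ii) the term $\sum_x p_\psi(x)\,\tfrac{d}{d\psi}\bigl[(\psi-\theta)\cdot f(x)-A(\psi)+A(\theta)\bigr]=\sum_x p_\psi(x)\bigl(f(x)-\mu(\psi)\bigr)$ from differentiating the bracket (using $\tfrac{d}{d\psi}(\psi-\theta)\cdot f(x)=f(x)$ and $\tfrac{d}{d\psi}(-A(\psi))=-\mu(\psi)$). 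Term (ii) is exactly $\mathbb{E}_\psi[f(X)]-\mu(\psi)=0$.

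For term (i) I would split the bracket into the $x$-dependent piece $(\psi-\theta)\cdot f(x)$ and the constant piece $-A(\psi)+A(\theta)$. The constant piece contributes $\bigl(-A(\psi)+A(\theta)\bigr)\sum_x p_\psi(x)\bigl(f(x)-\mu(\psi)\bigr)=0$, again because $\mathbb{E}_\psi[f(X)-\mu(\psi)]=0$. What remains is precisely $\sum_x p_\psi(x)\,(\psi-\theta)\cdot f(x)\,\bigl(f(x)-\mu(\psi)\bigr)$, the claimed gradient.

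I do not anticipate a genuine obstacle: the computation is routine once the log-ratio is linearized. The only things to be careful about are bookkeeping between scalars and vectors — $(\psi-\theta)\cdot f(x)$ and $-A(\psi)+A(\theta)$ are scalars, while $f(x)-\mu(\psi)$ and the gradient itself are vectors in the parameter space — and the single ``aha'' observation that both the log-partition terms and the stray score term (ii) vanish for the same reason, namely that the $\psi$-expectation of $f(X)-\mu(\psi)$ is zero.
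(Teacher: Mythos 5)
Your proposal is correct and follows essentially the same route as the paper: rewrite $KL(\psi||\theta)$ as $\mathbb{E}_\psi[(\psi-\theta)\cdot f(X)]+A(\theta)-A(\psi)$, differentiate with the product rule, and use the score identity $\nabla_\psi p_\psi(x)=p_\psi(x)(f(x)-\mu(\psi))$ together with $\nabla A(\psi)=\mu(\psi)$ to see that all terms except the desired one cancel. The only (immaterial) difference is that you keep the log-partition terms inside the sum and kill them via $\mathbb{E}_\psi[f(X)-\mu(\psi)]=0$, whereas the paper pulls them out before differentiating.
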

\begin{proof}
Firstly, it can be shown that
\[
D(\theta,\psi)=\sum_{x}p(x;\psi)(\psi-\theta)\cdot f(x)+A(\theta)-A(\psi).
\]

From this, it follows that
\begin{alignat*}{1}
\frac{d}{d\psi}D(\theta,\psi)= & \sum_{x}\frac{dp(x;\psi)}{d\psi}(\psi-\theta)\cdot f(x)\\
+ & \sum_{x}p(x;\psi)f(x)-\mu(\psi).
\end{alignat*}
This can be seen to be equivalent to the result by observing that
the second two terms cancel, and that $dp(x;\psi)/d\psi=p(x;\psi)(f(x)-\mu(\psi))$.
\end{proof}
\begin{figure}
\begin{centering}
\includegraphics[width=0.5\textwidth]{esacc_plot/grid_mixed}\includegraphics[width=0.5\textwidth]{esacc_plot/grid_attractive}
\par\end{centering}

\caption{Accuracy on Grids, as a function of edge strength. All sampling methods
use 30k samples, except sampling on the original parameters which
includes a second (lower) curve with 250k samples.}

\end{figure}

\begin{figure}
\begin{centering}
\includegraphics[width=0.5\textwidth]{esacc_plot/rg_1_0\lyxdot 3}\includegraphics[width=0.5\textwidth]{esacc_plot/rg_a1_0\lyxdot 3}
\par\end{centering}

\begin{centering}
\includegraphics[width=0.5\textwidth]{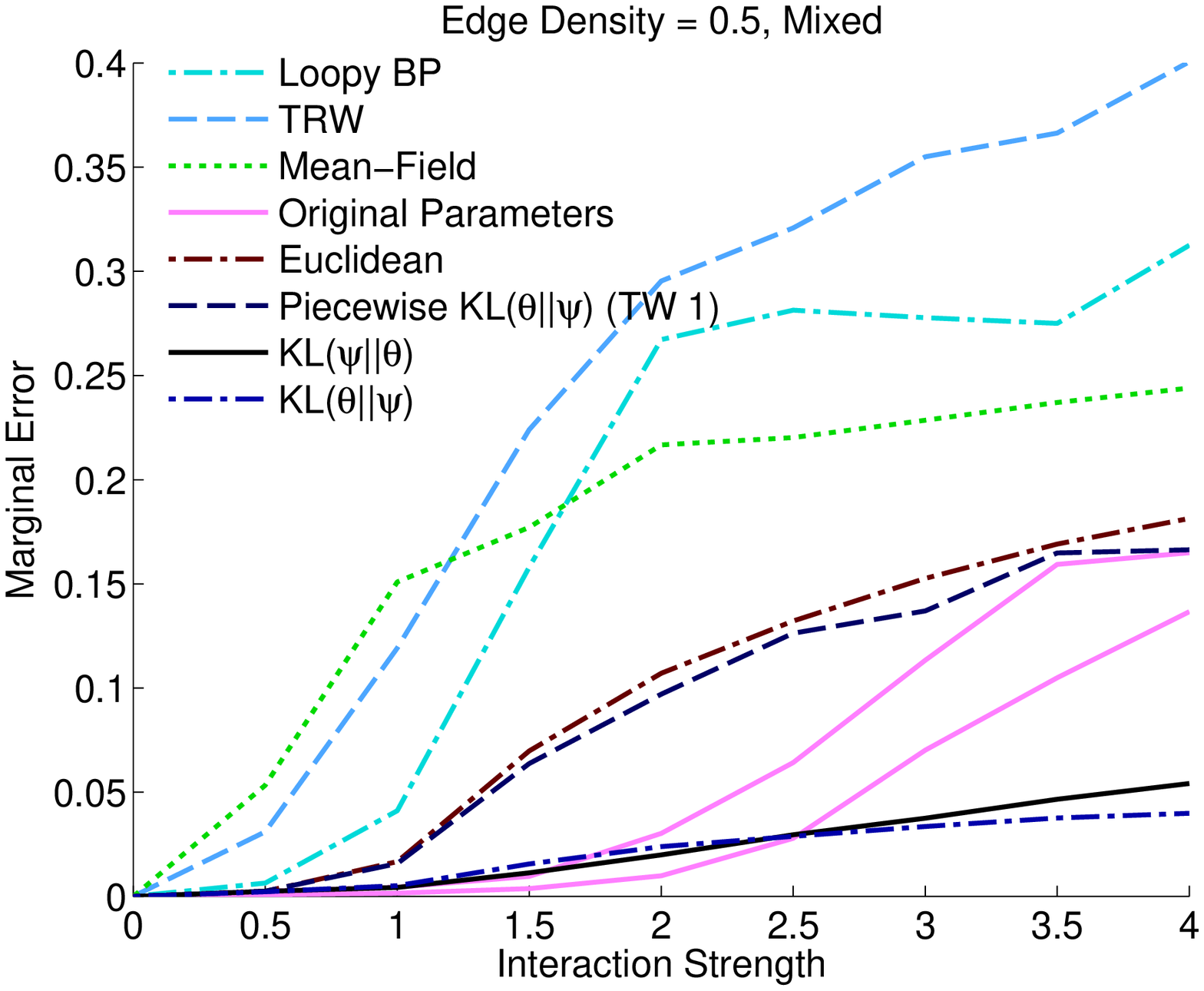}\includegraphics[width=0.5\textwidth]{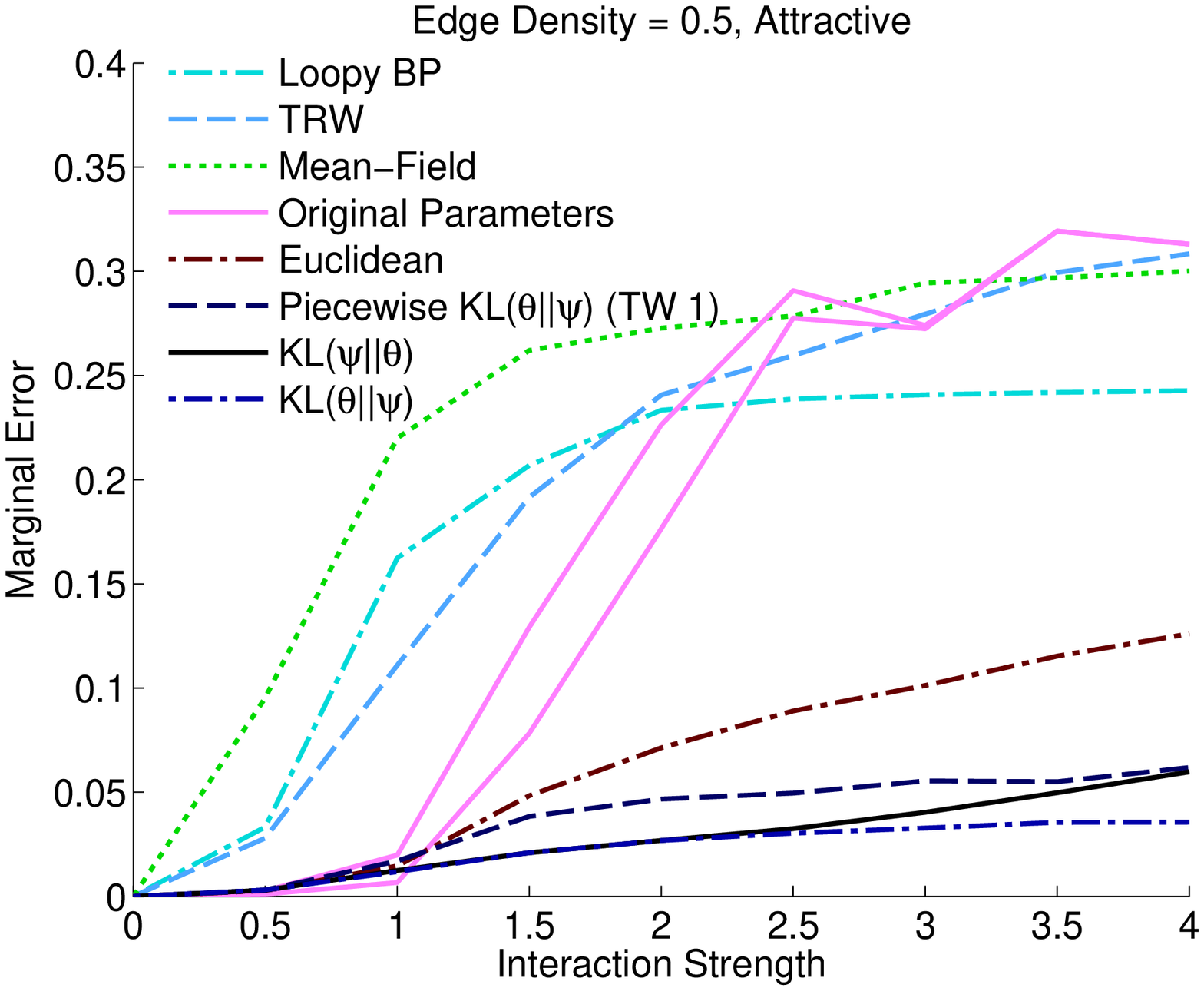}
\par\end{centering}

\begin{centering}
\includegraphics[width=0.5\textwidth]{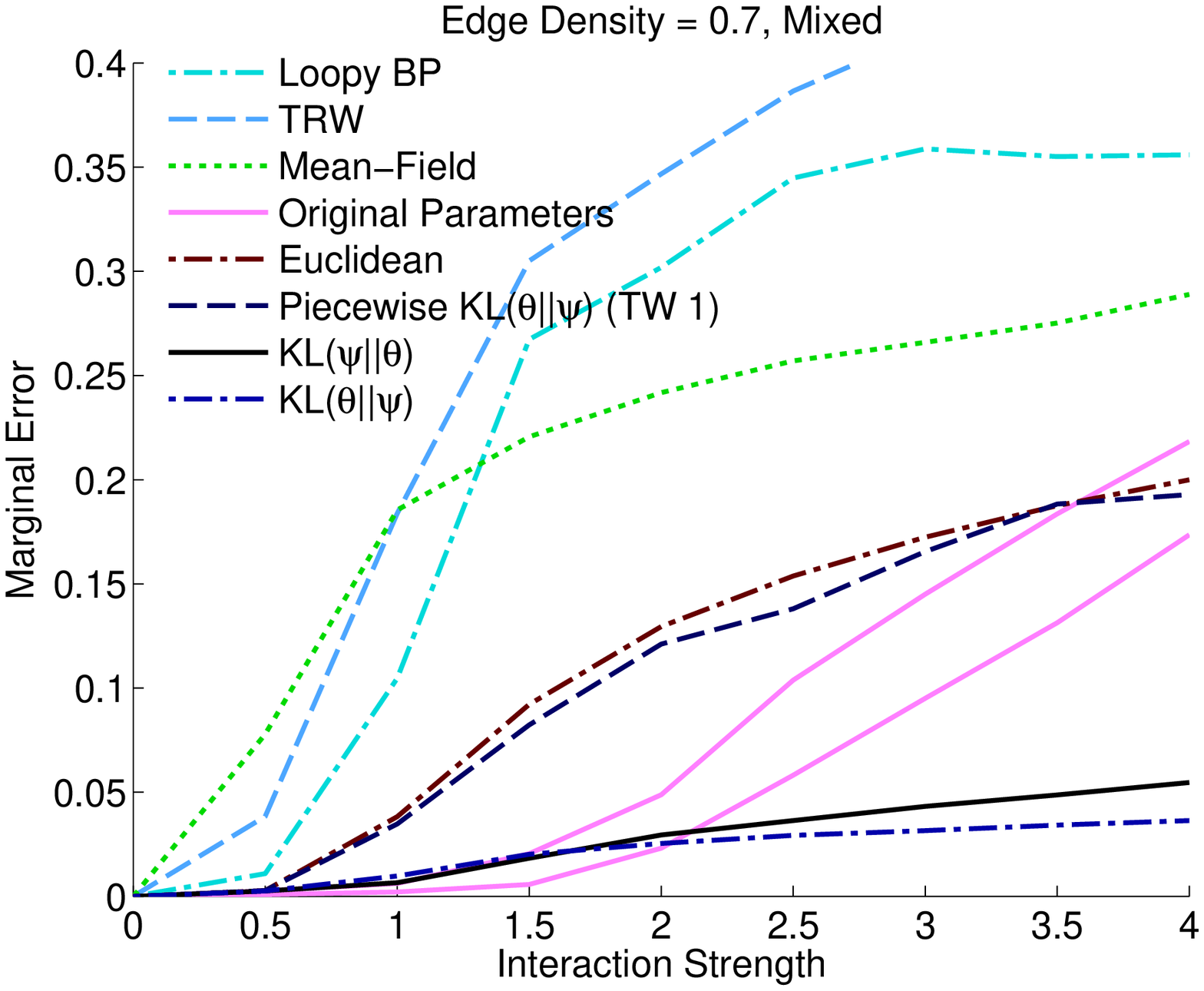}\includegraphics[width=0.5\textwidth]{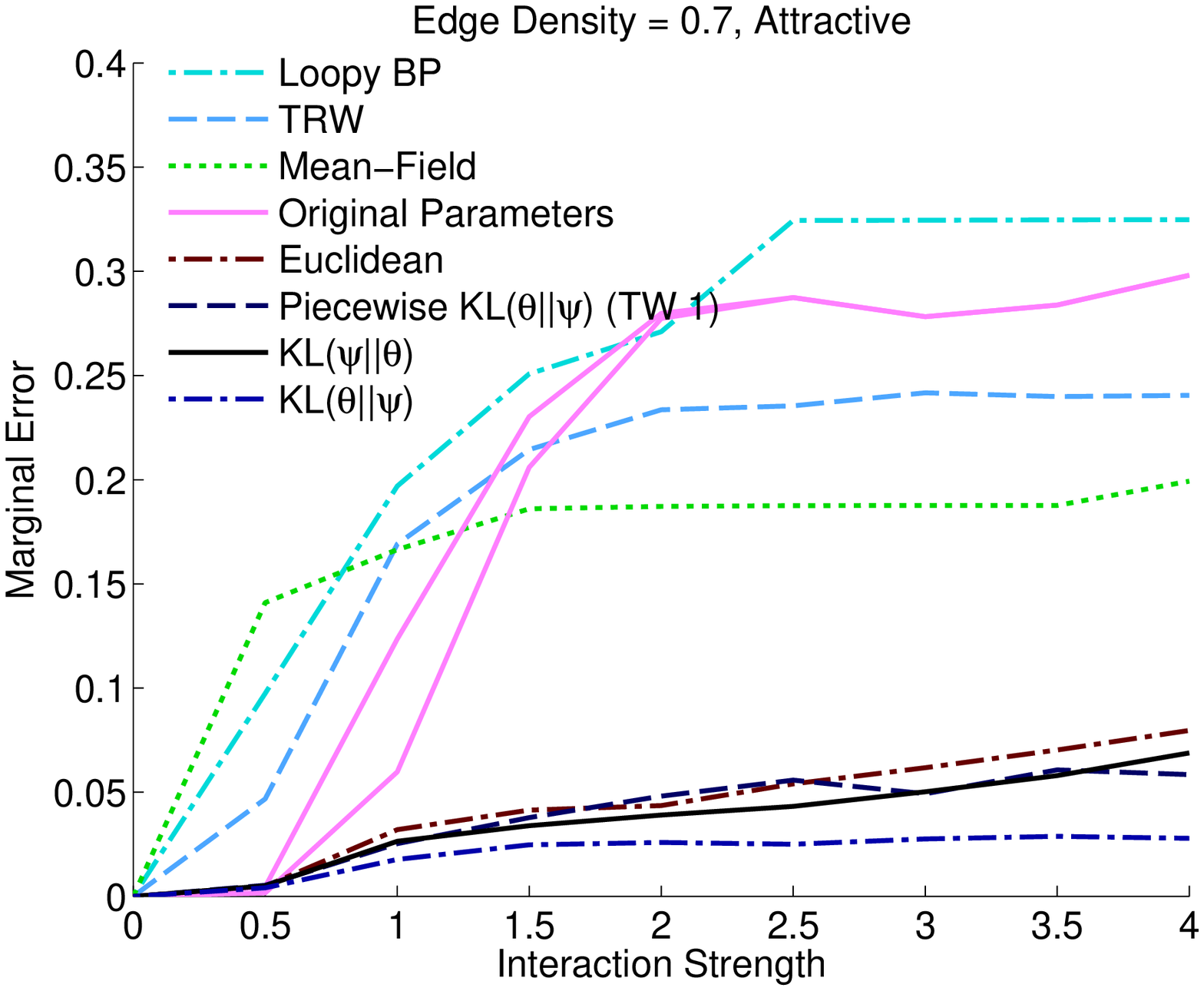}
\par\end{centering}

\caption{Accuracy on random graphs, as a function of edge strength. All sampling
methods use 30k samples, except sampling on the original parameters
which includes a second (lower) curve with 250k samples.}
\end{figure}

\begin{figure}
\begin{centering}
\includegraphics[width=0.5\textwidth]{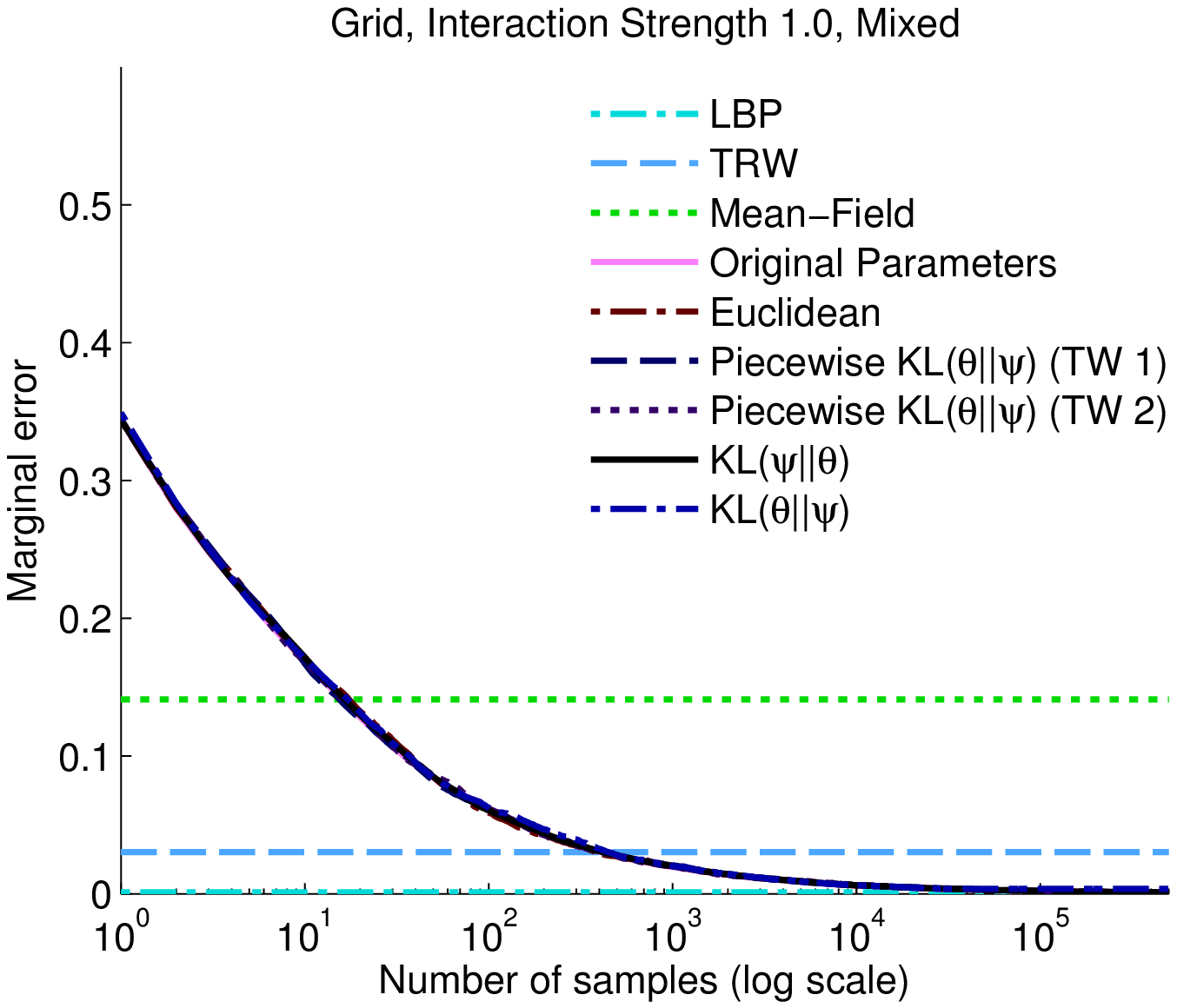}\includegraphics[width=0.5\textwidth]{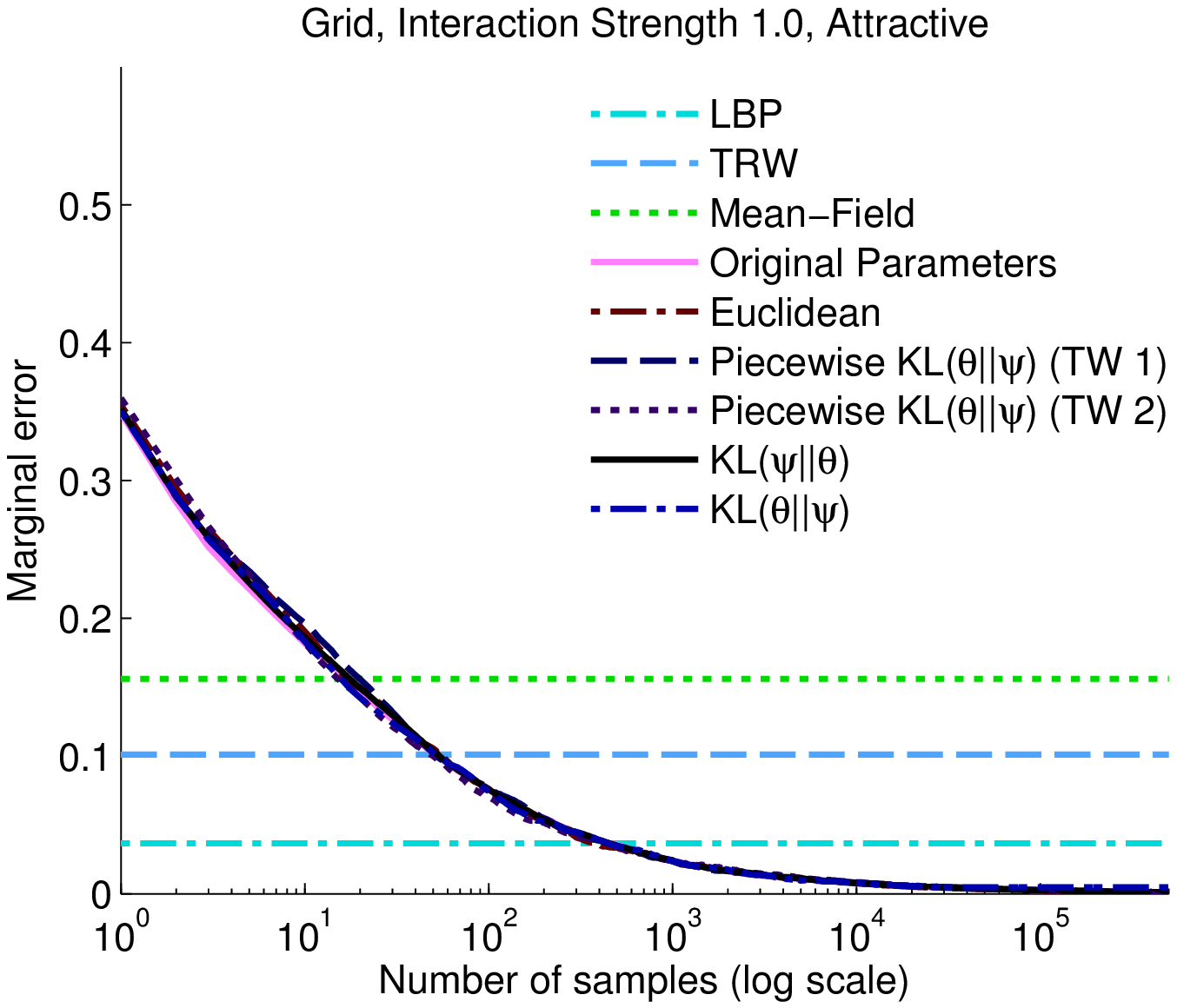}
\par\end{centering}

\begin{centering}
\includegraphics[width=0.5\textwidth]{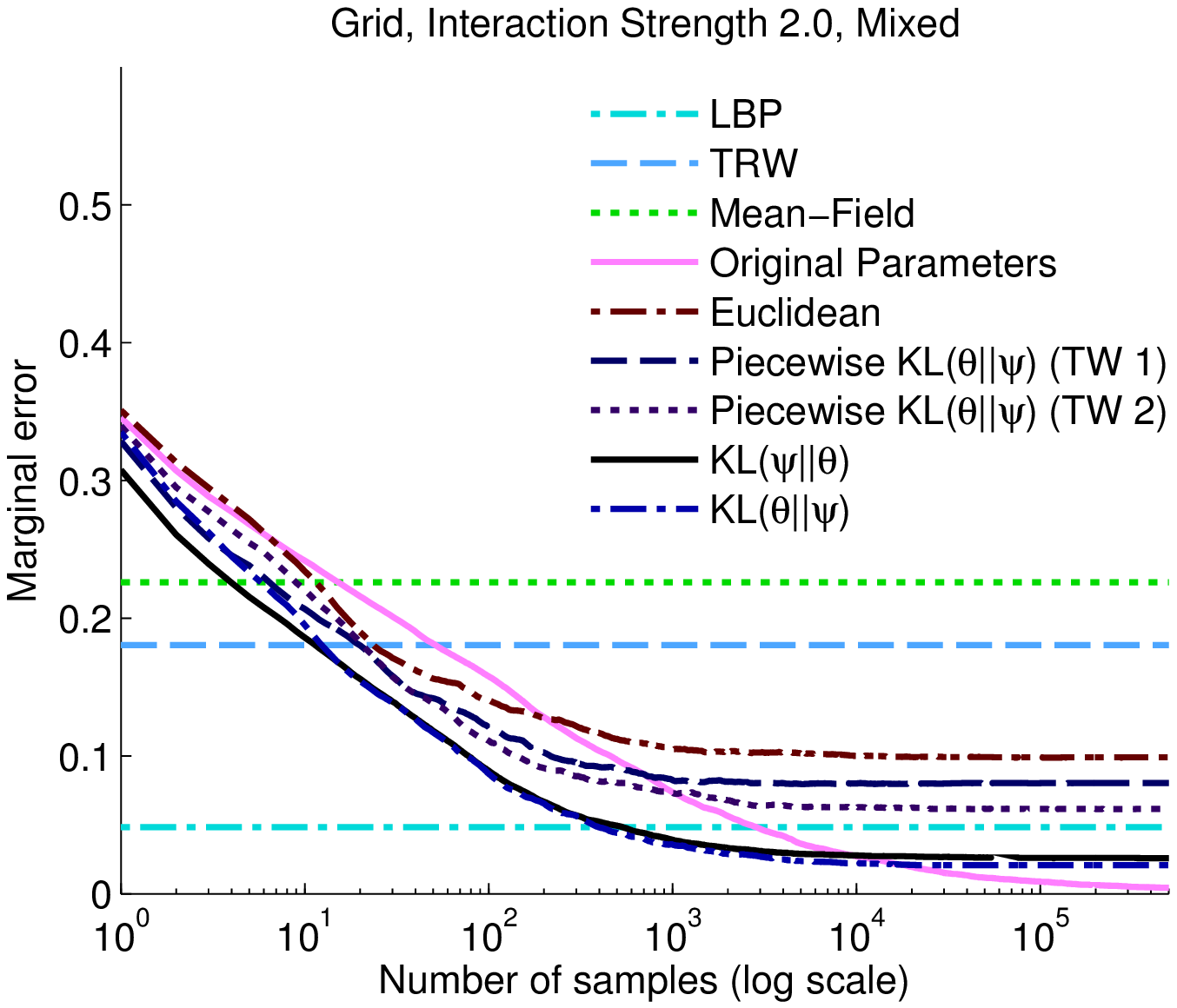}\includegraphics[width=0.5\textwidth]{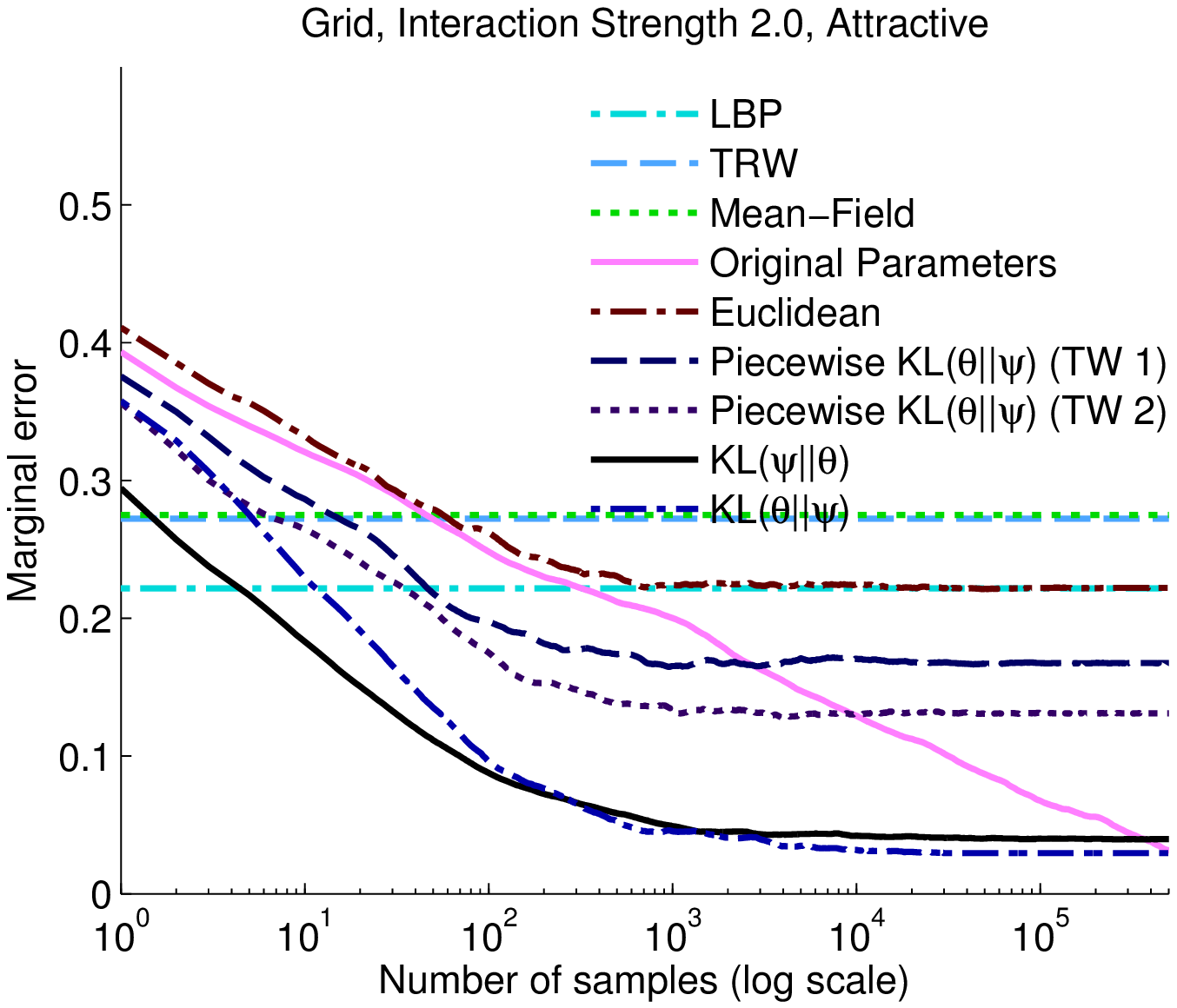}
\par\end{centering}

\begin{centering}
\includegraphics[width=0.5\textwidth]{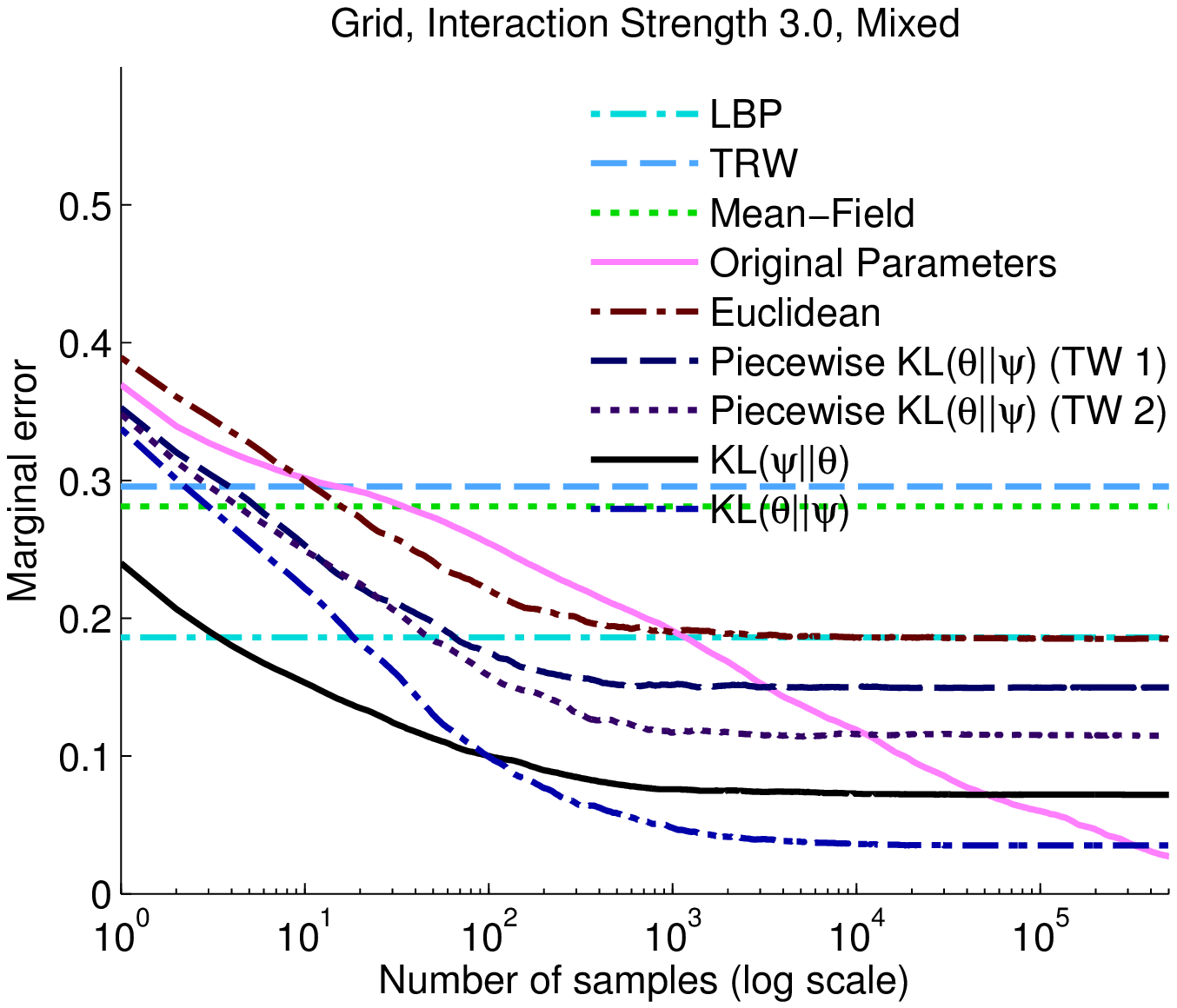}\includegraphics[width=0.5\textwidth]{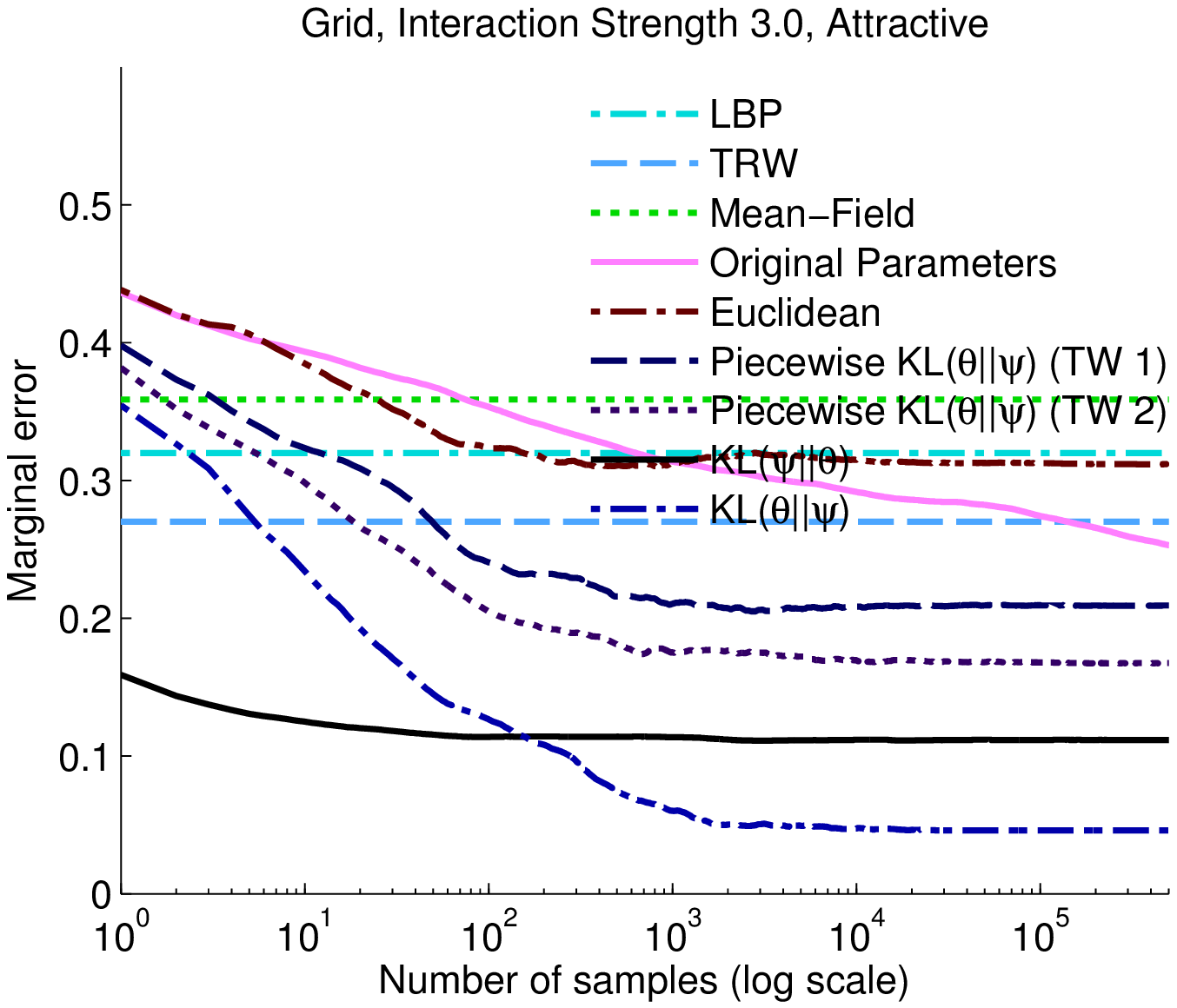}
\par\end{centering}

\begin{centering}
\includegraphics[width=0.5\textwidth]{tmacc_grid_plot/tmacc_1\lyxdot 00_4\lyxdot 00}\includegraphics[width=0.5\textwidth]{tmacc_grid_plot/tmacc_a_1\lyxdot 00_4\lyxdot 00}
\par\end{centering}

\caption{Accuracy on Grids as a function of time}
\end{figure}

\begin{figure}
\begin{centering}
\includegraphics[width=0.5\textwidth]{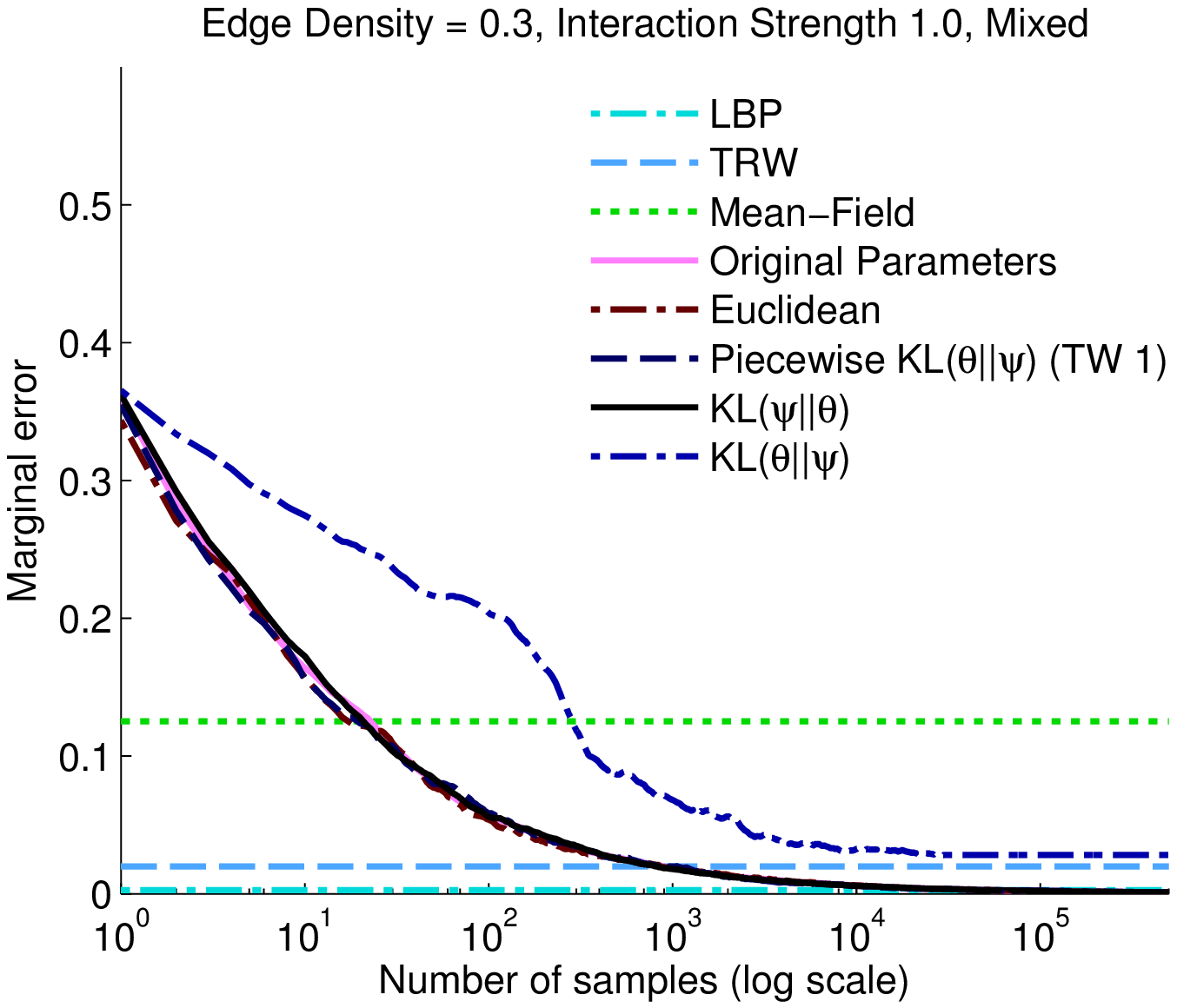}\includegraphics[width=0.5\textwidth]{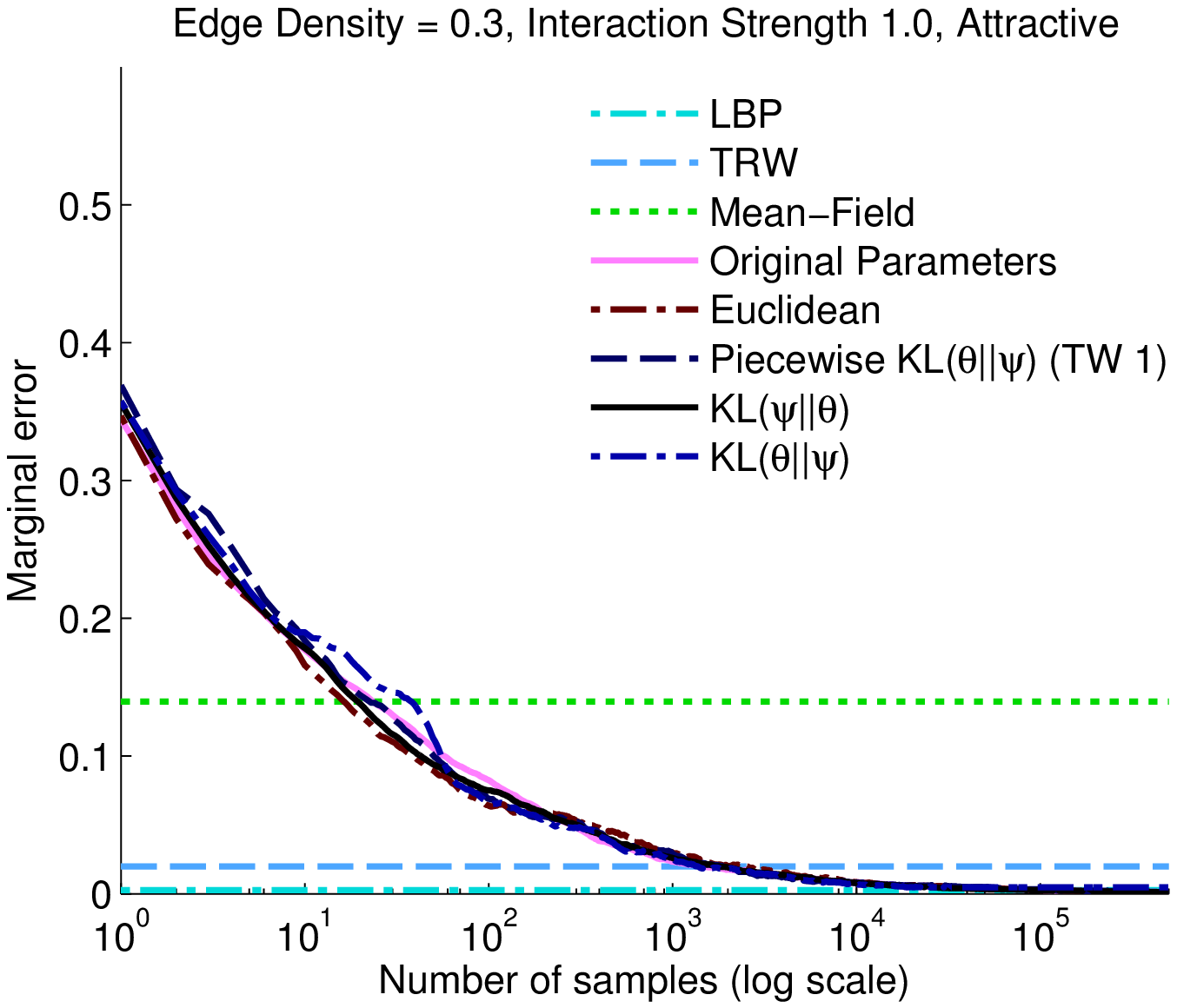}
\par\end{centering}

\begin{centering}
\includegraphics[width=0.5\textwidth]{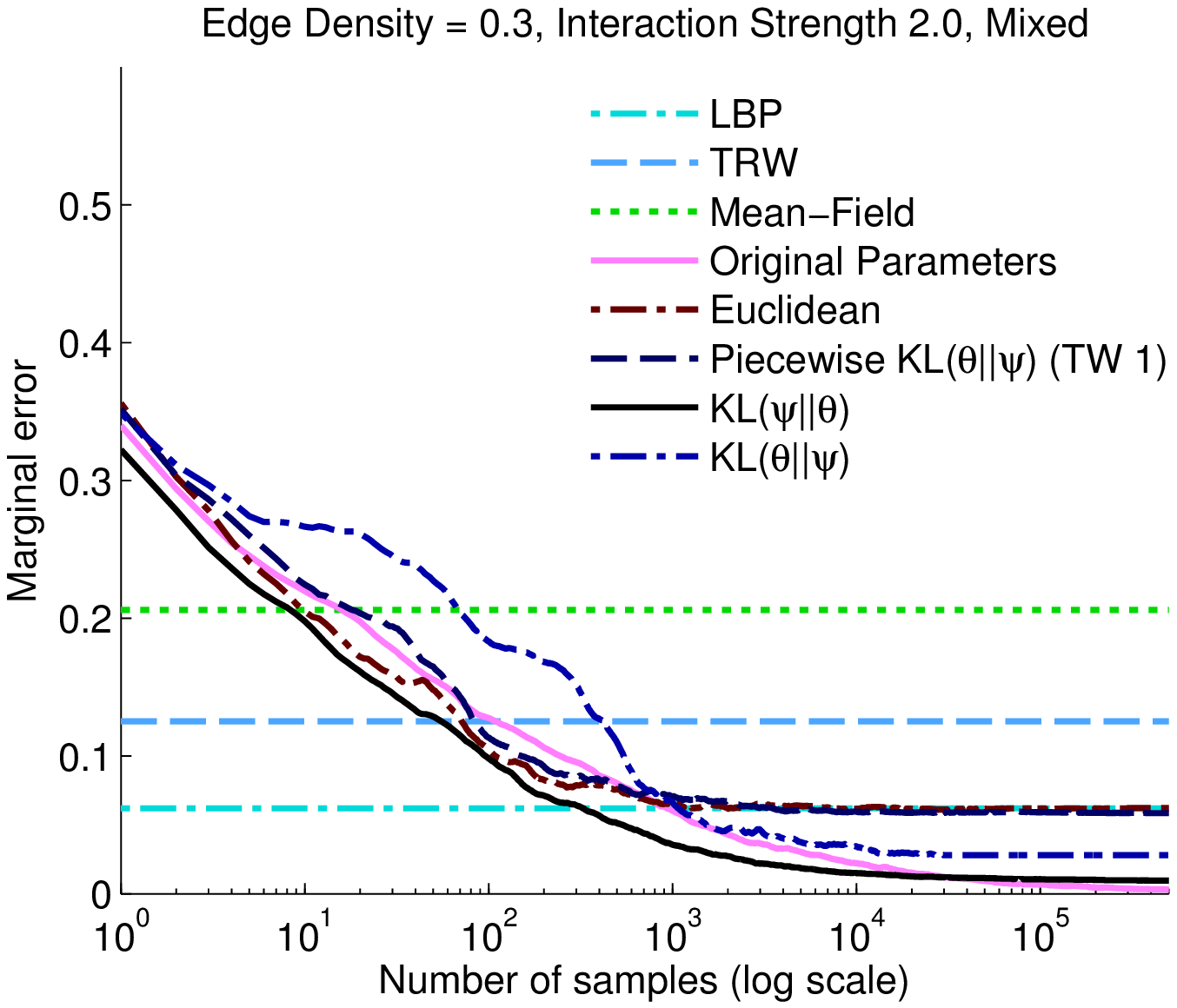}\includegraphics[width=0.5\textwidth]{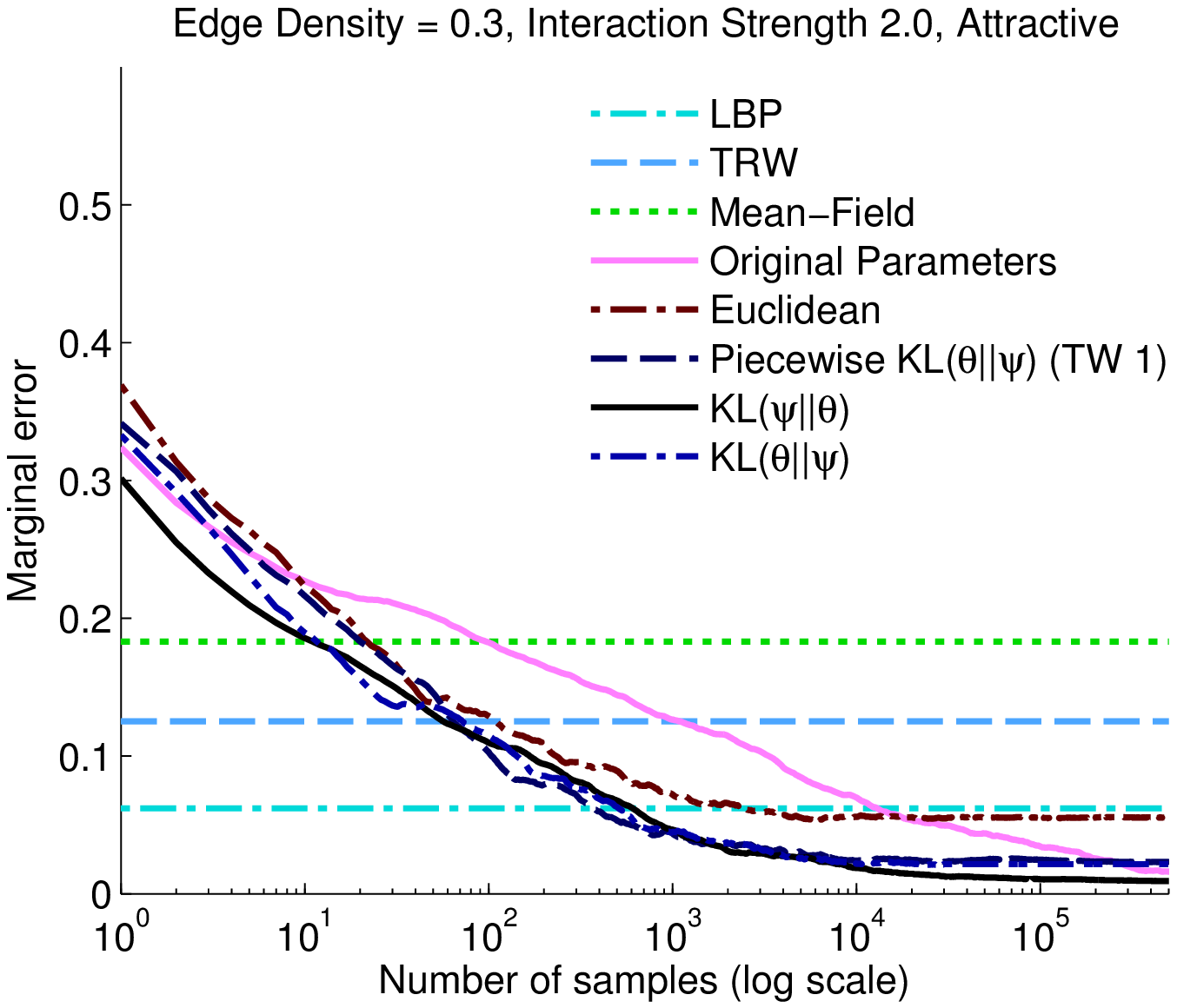}
\par\end{centering}

\begin{centering}
\includegraphics[width=0.5\textwidth]{tmacc_rg_plot/tmacc_rnd_0\lyxdot 30_1\lyxdot 00_3\lyxdot 00}\includegraphics[width=0.5\textwidth]{tmacc_rg_plot/tmacc_rnd_a_0\lyxdot 30_1\lyxdot 00_3\lyxdot 00}
\par\end{centering}

\begin{centering}
\includegraphics[width=0.5\textwidth]{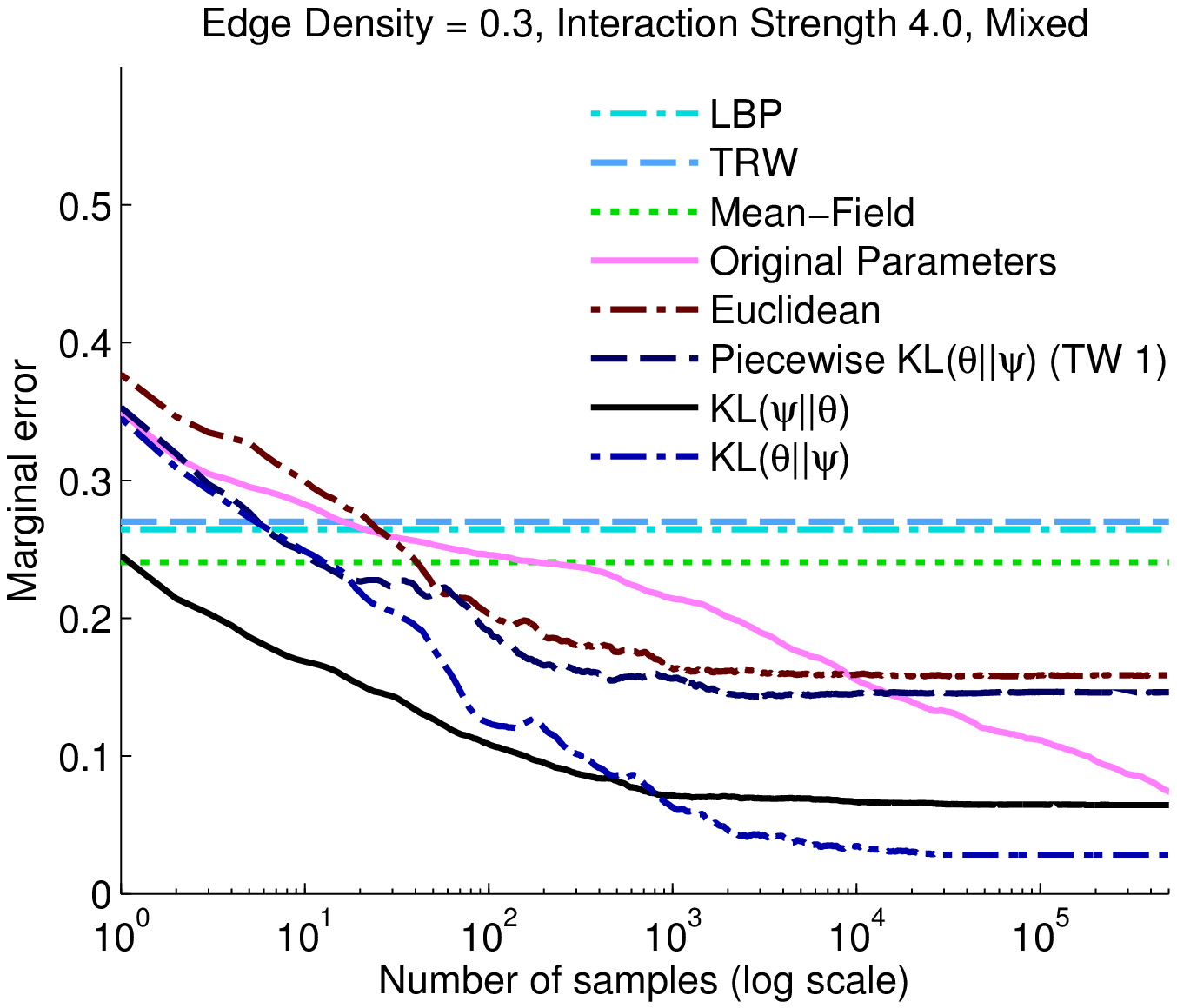}\includegraphics[width=0.5\textwidth]{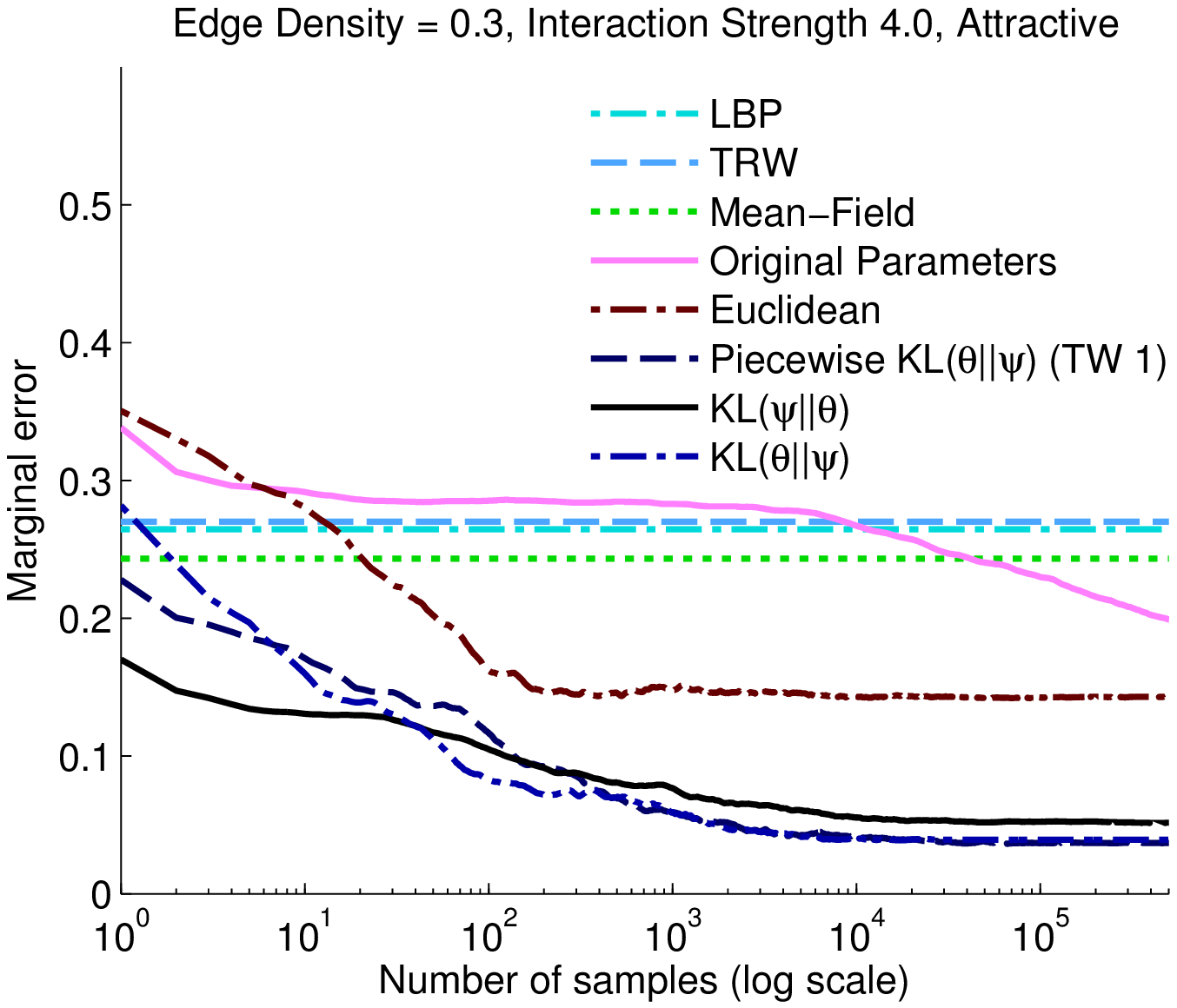}
\par\end{centering}

\caption{Accuracy on Low-Density Graphs as a function of time}
\end{figure}

\begin{figure}
\begin{centering}
\includegraphics[width=0.5\textwidth]{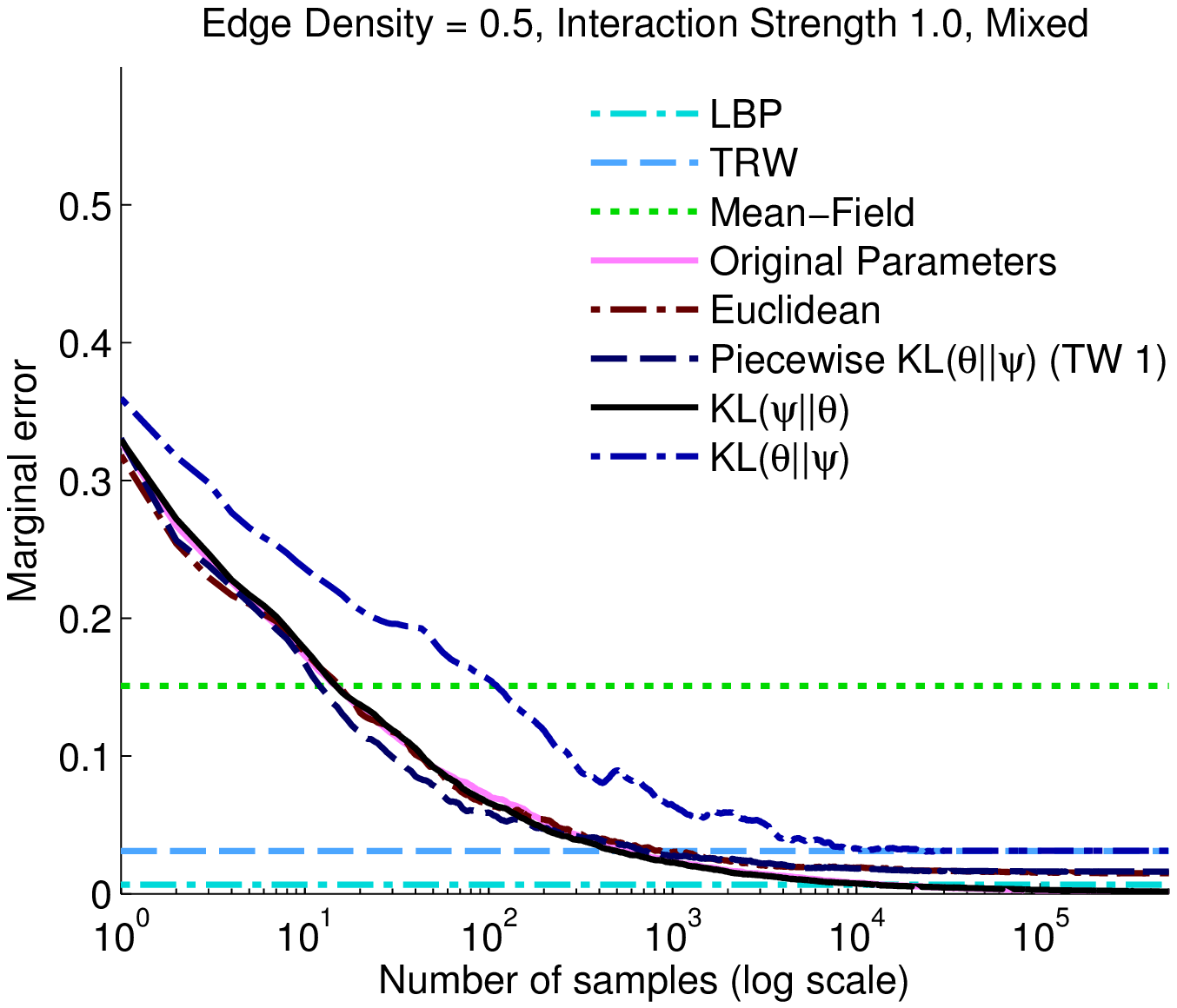}\includegraphics[width=0.5\textwidth]{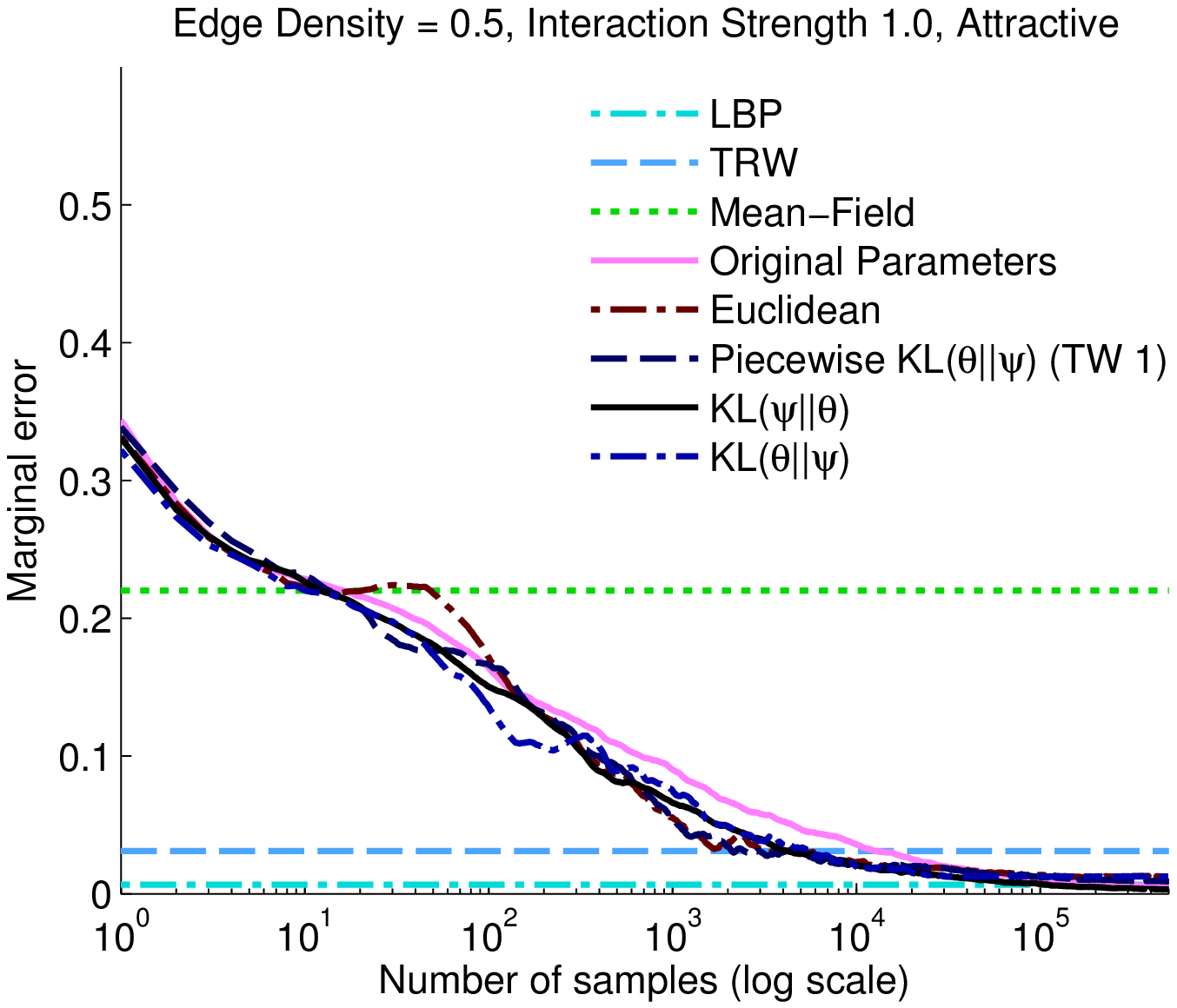}
\par\end{centering}

\begin{centering}
\includegraphics[width=0.5\textwidth]{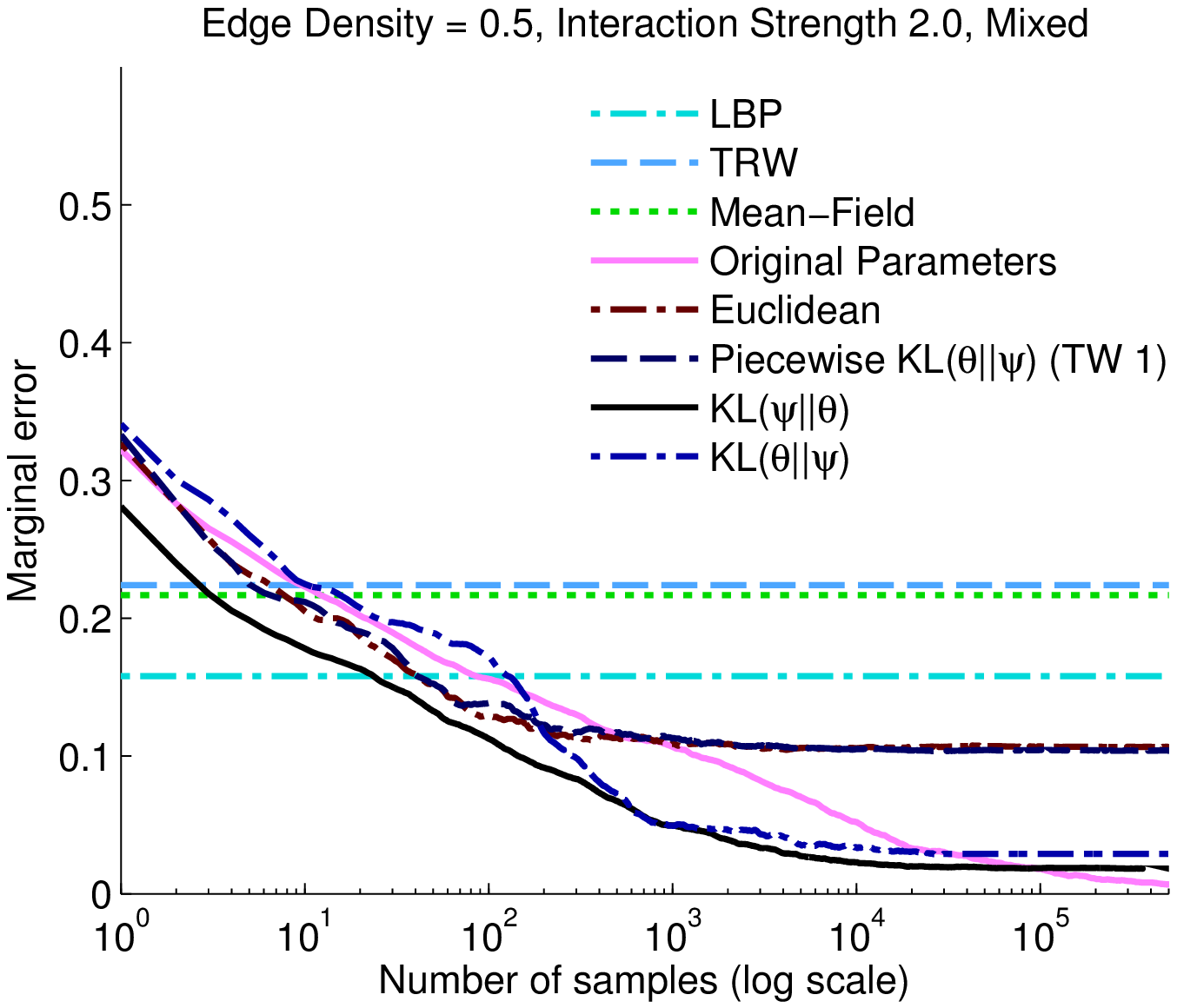}\includegraphics[width=0.5\textwidth]{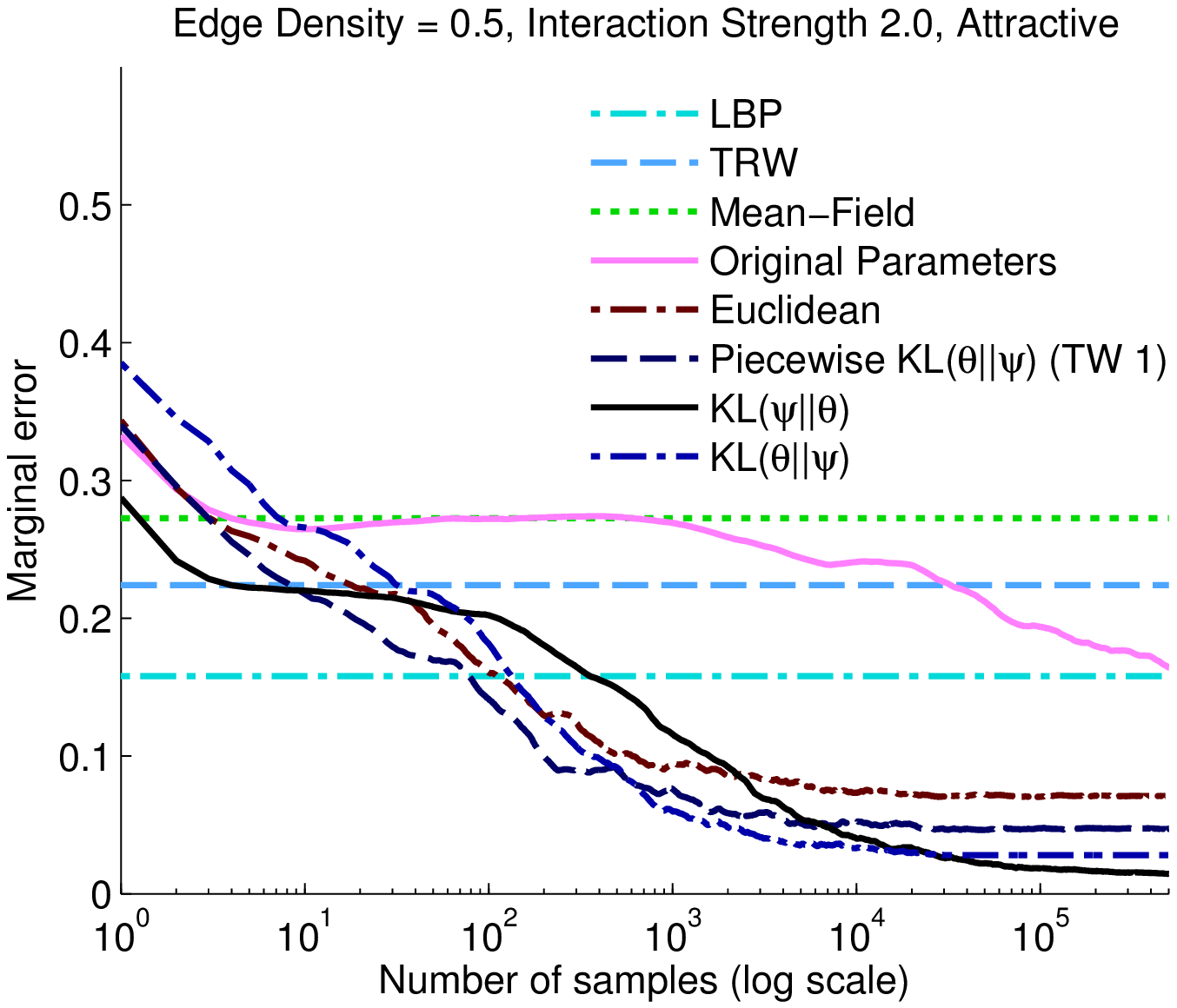}
\par\end{centering}

\begin{centering}
\includegraphics[width=0.5\textwidth]{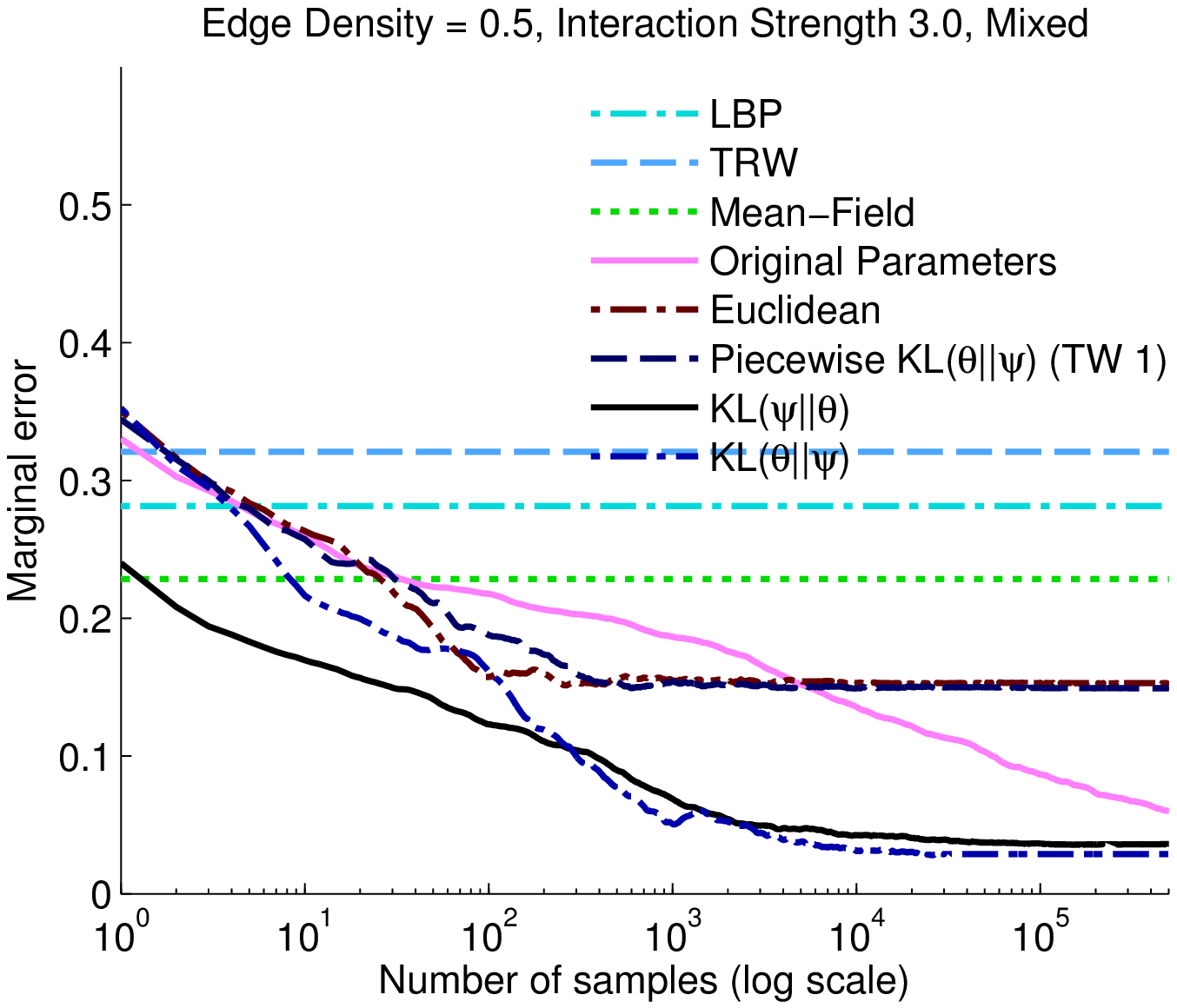}\includegraphics[width=0.5\textwidth]{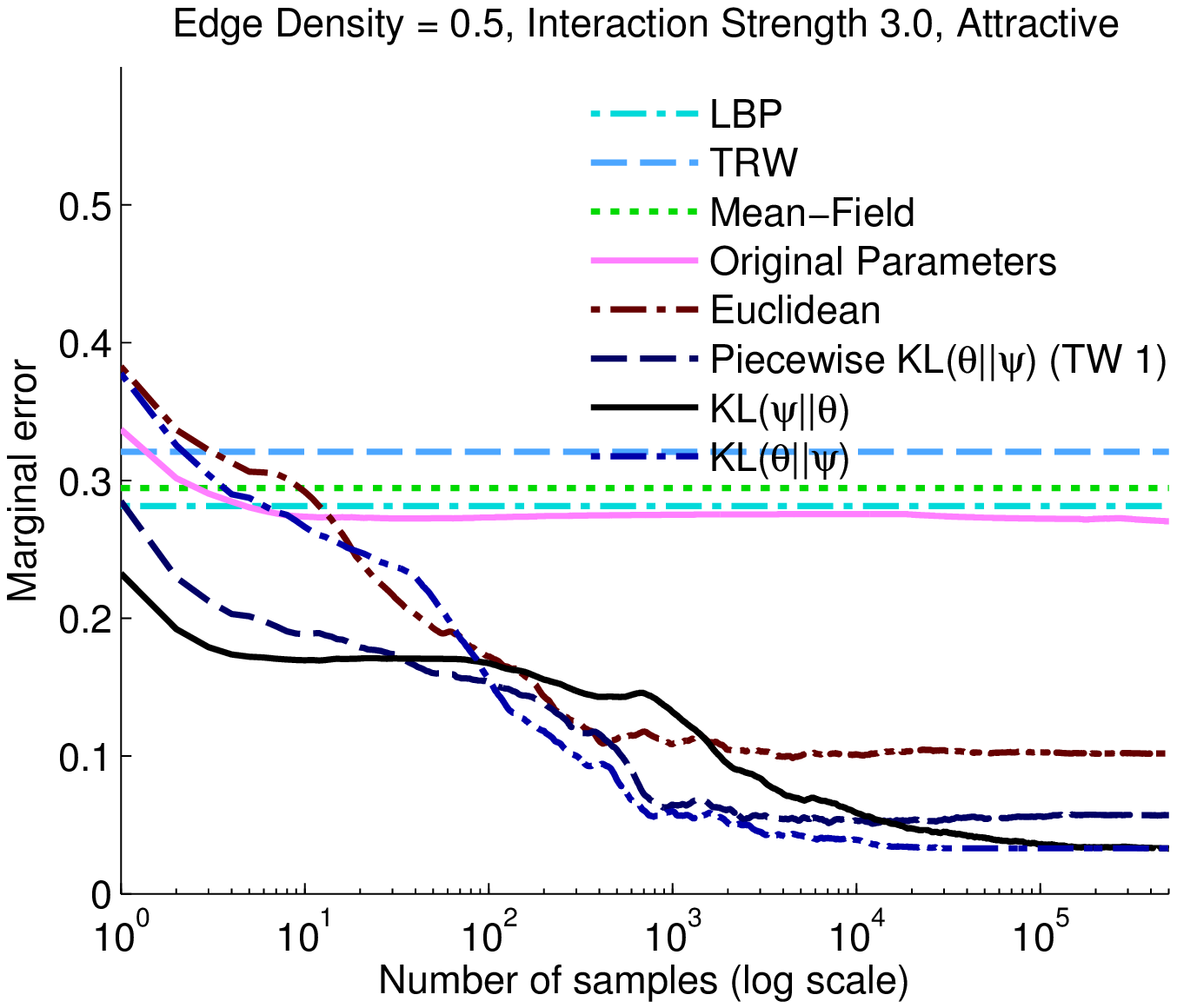}
\par\end{centering}

\begin{centering}
\includegraphics[width=0.5\textwidth]{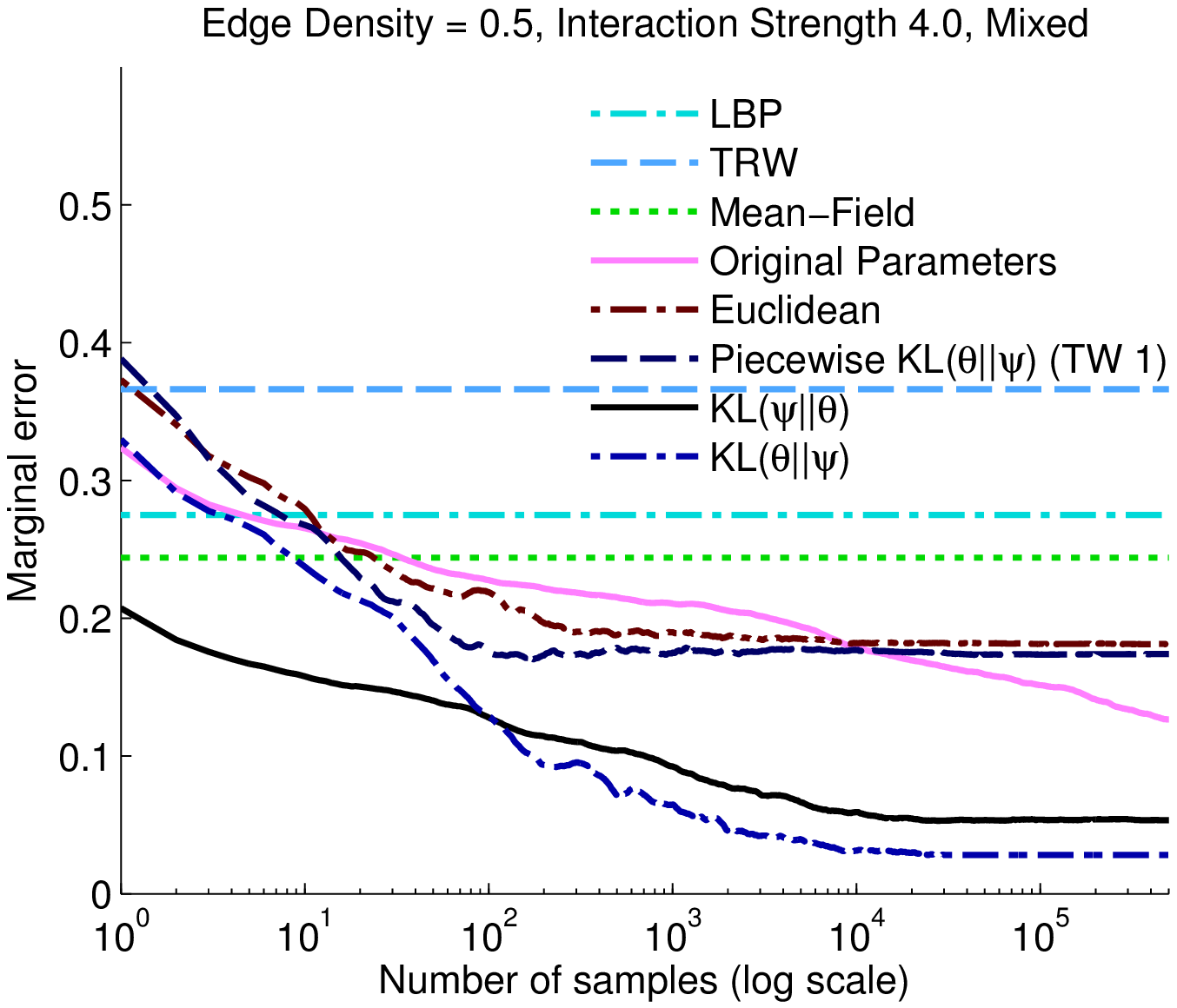}\includegraphics[width=0.5\textwidth]{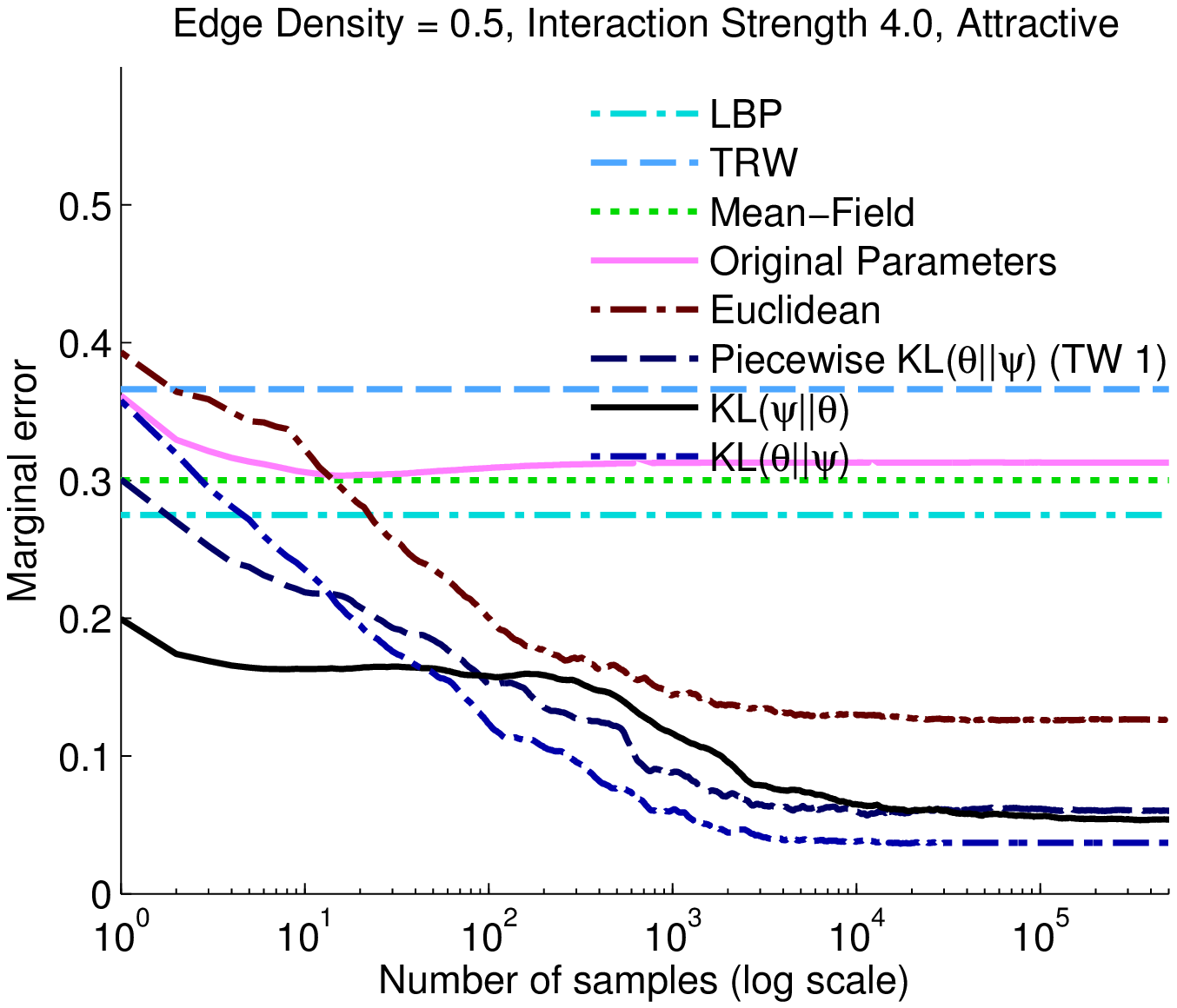}
\par\end{centering}

\caption{Accuracy on Medium-Density Graphs as a function of time}
\end{figure}

\begin{figure}
\begin{centering}
\includegraphics[width=0.5\textwidth]{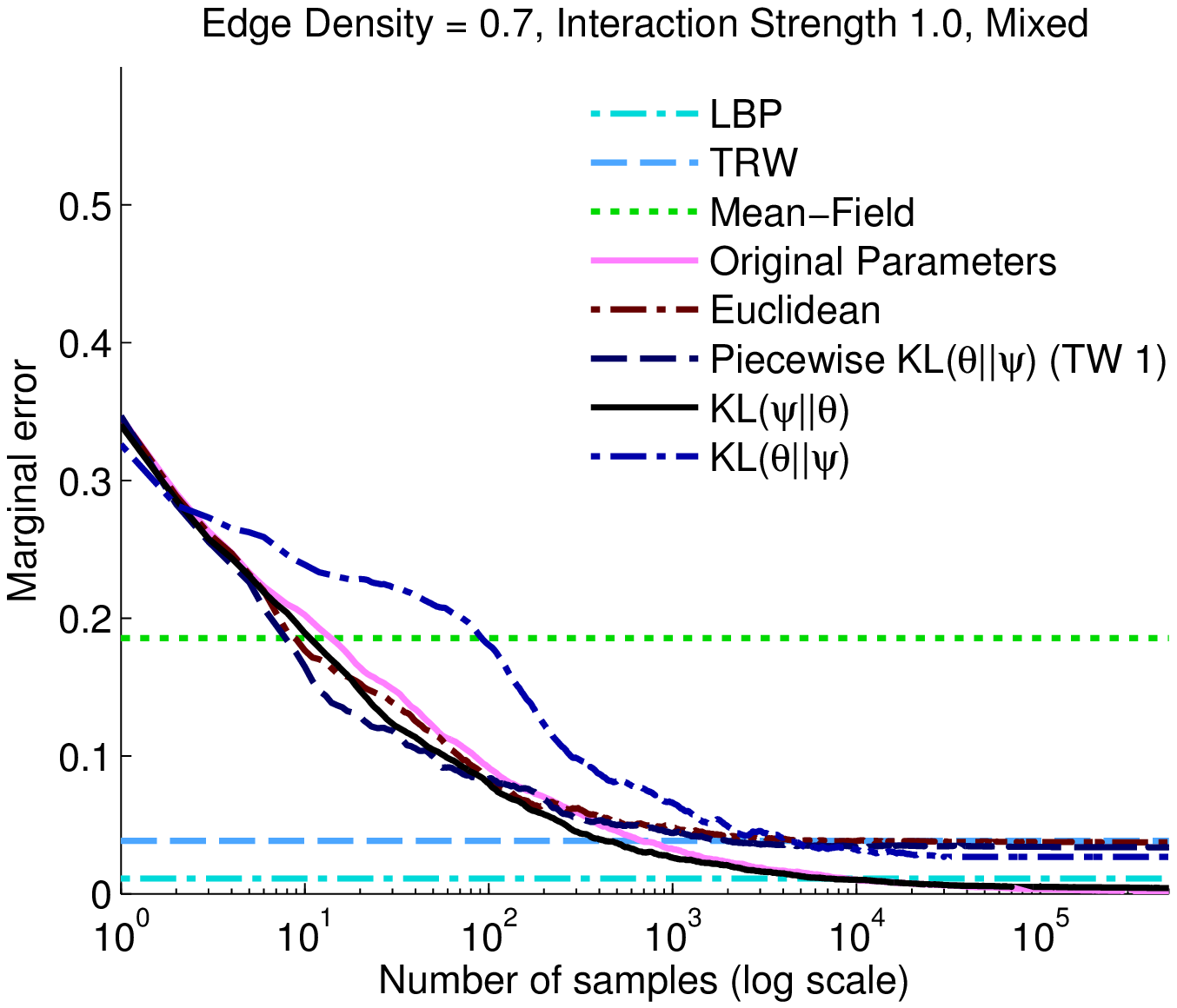}\includegraphics[width=0.5\textwidth]{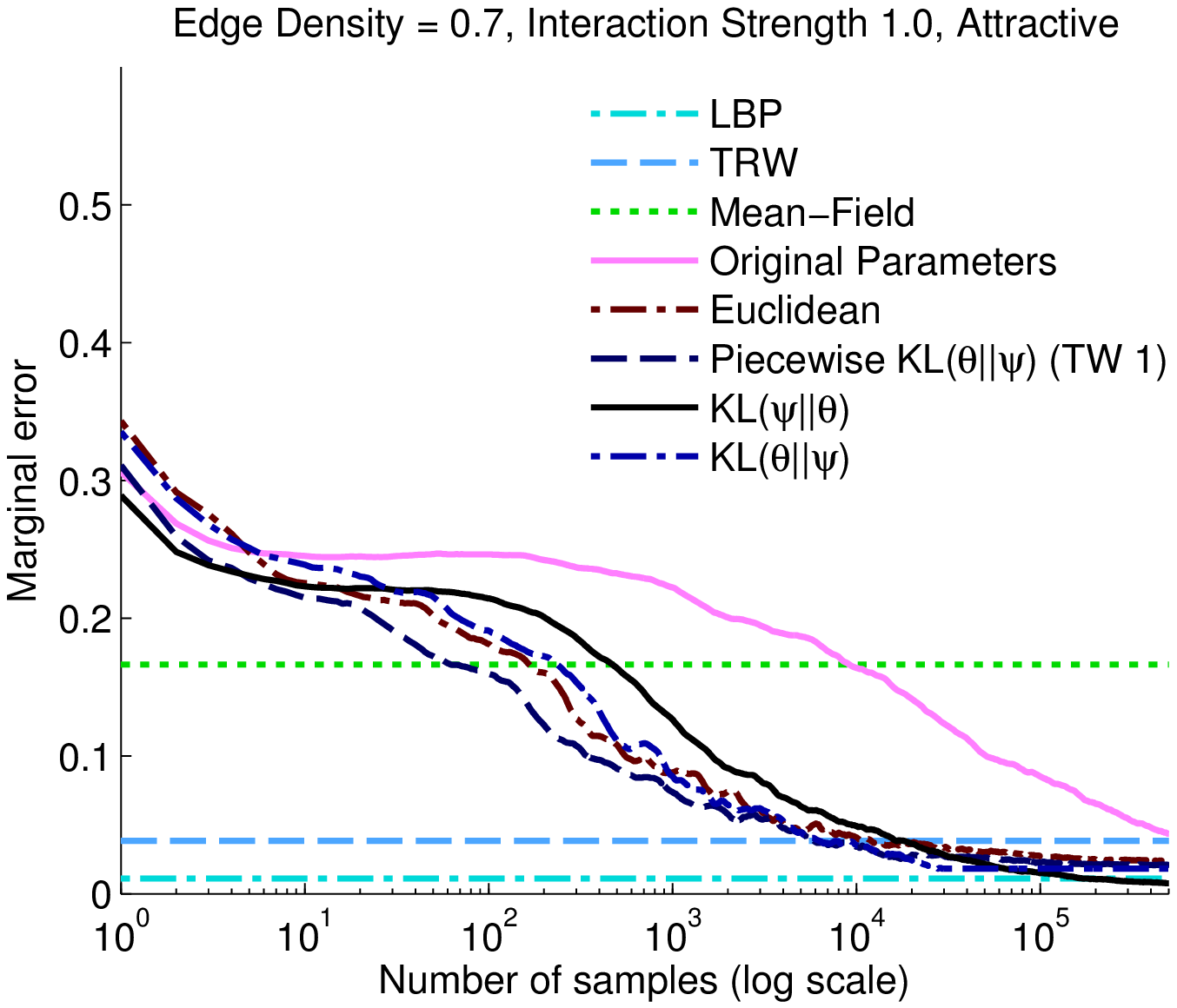}
\par\end{centering}

\begin{centering}
\includegraphics[width=0.5\textwidth]{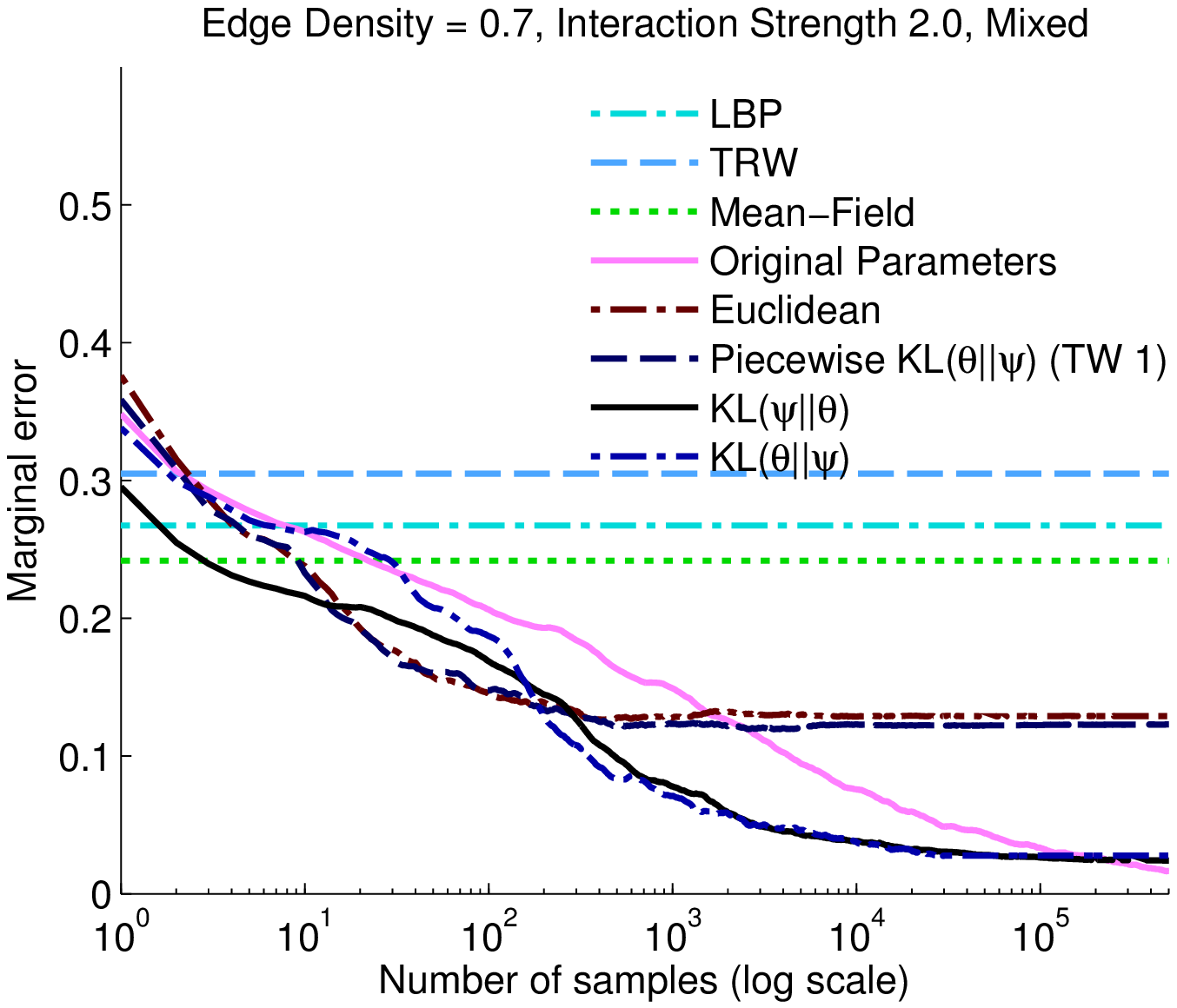}\includegraphics[width=0.5\textwidth]{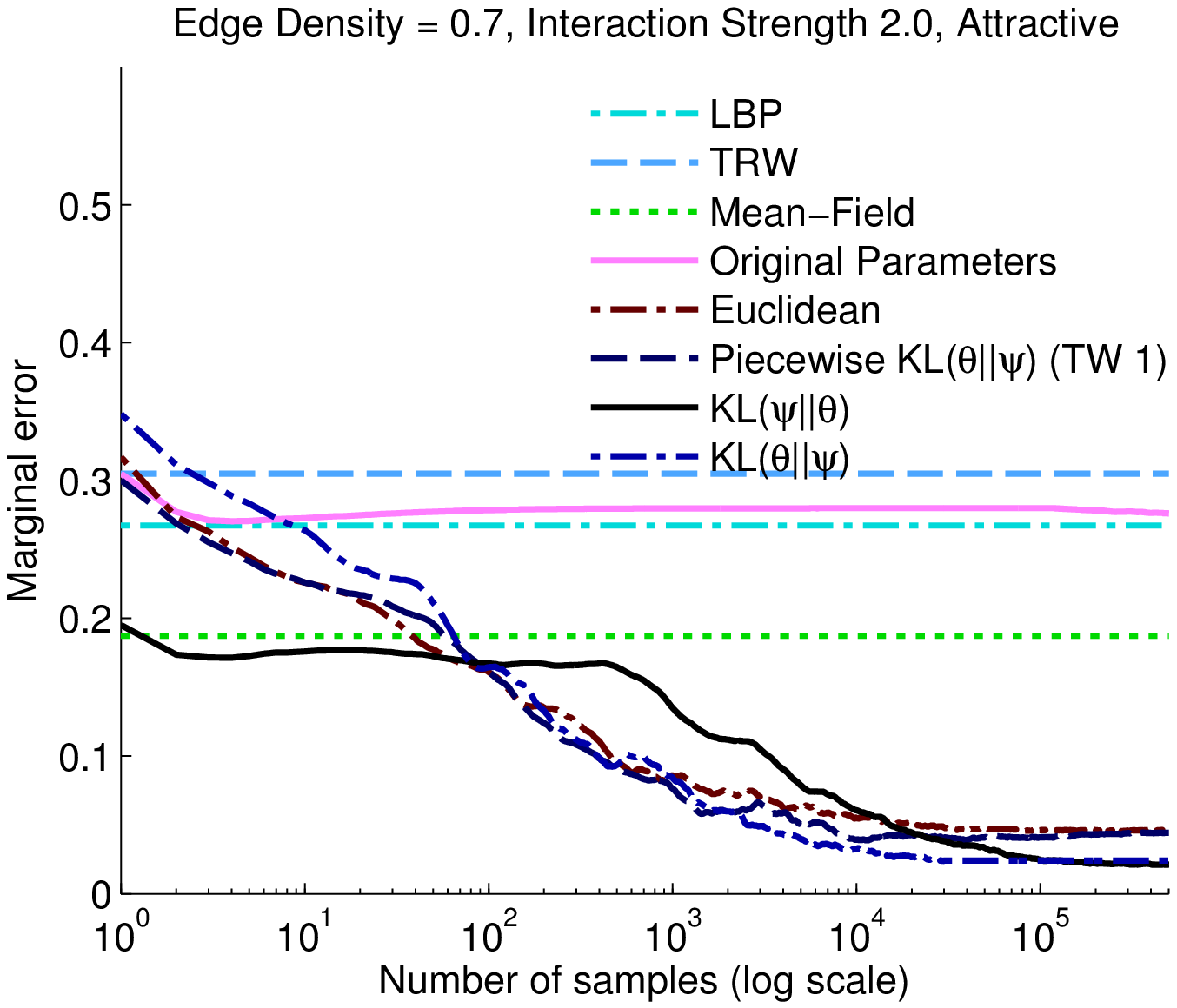}
\par\end{centering}

\begin{centering}
\includegraphics[width=0.5\textwidth]{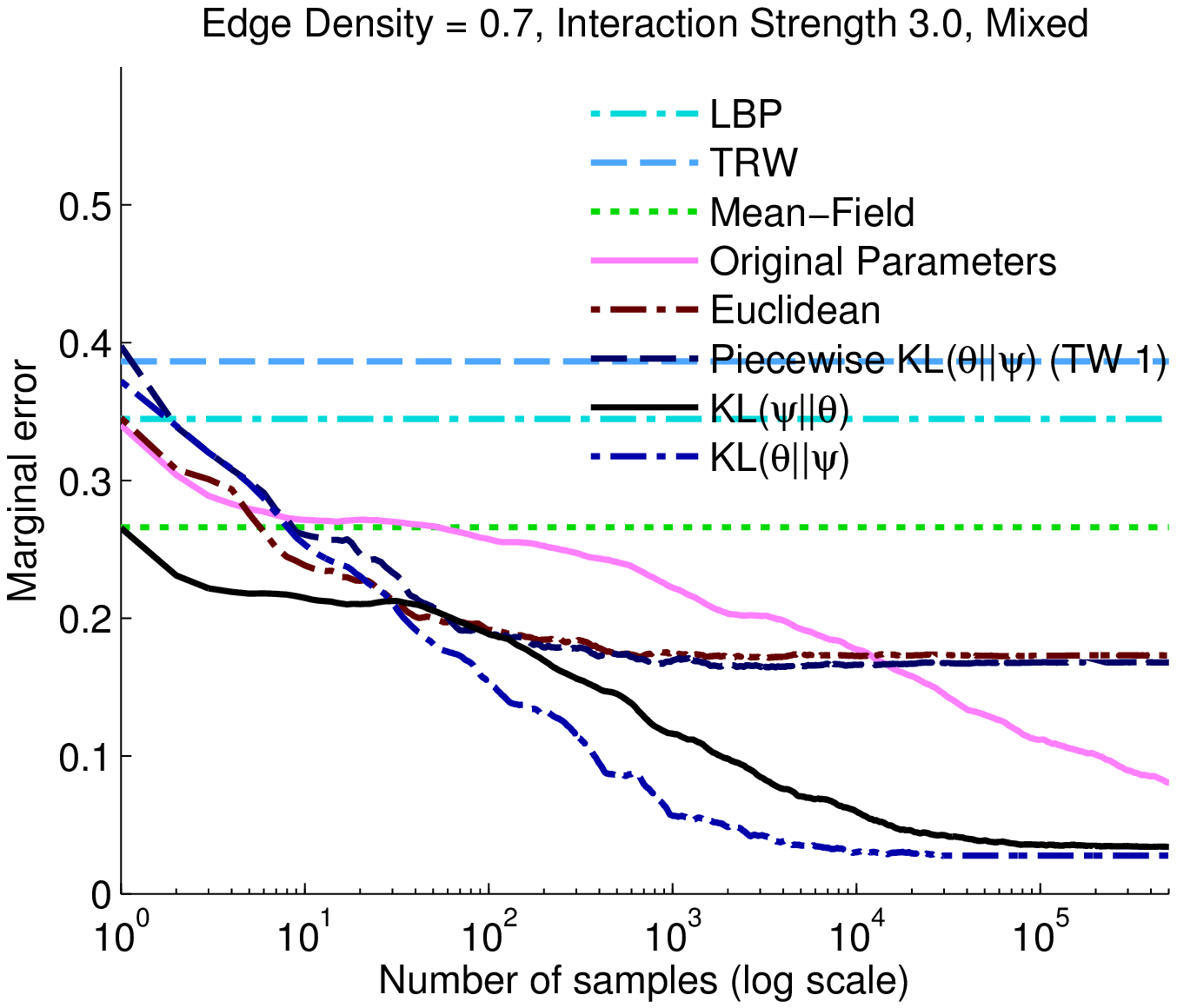}\includegraphics[width=0.5\textwidth]{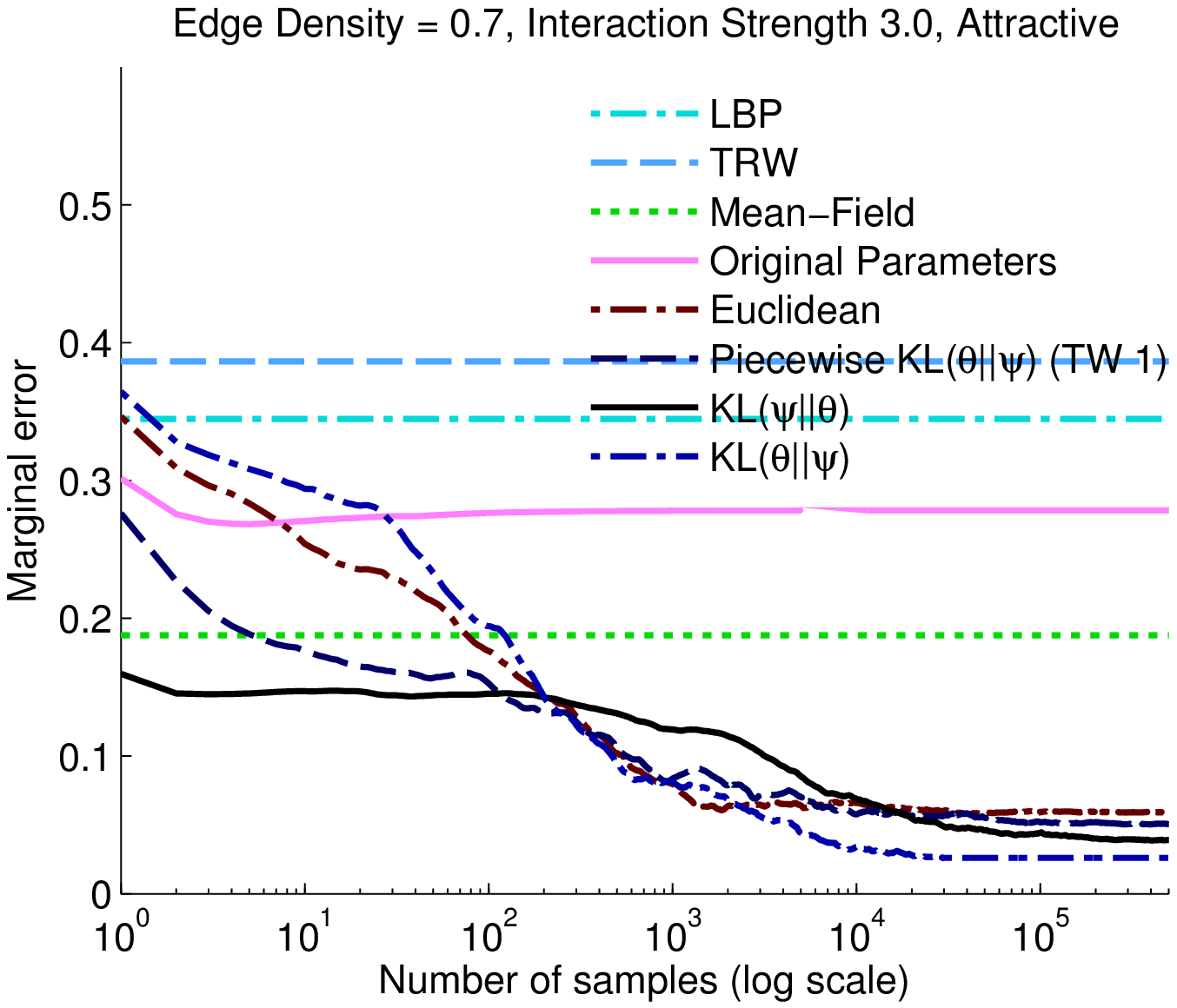}
\par\end{centering}

\begin{centering}
\includegraphics[width=0.5\textwidth]{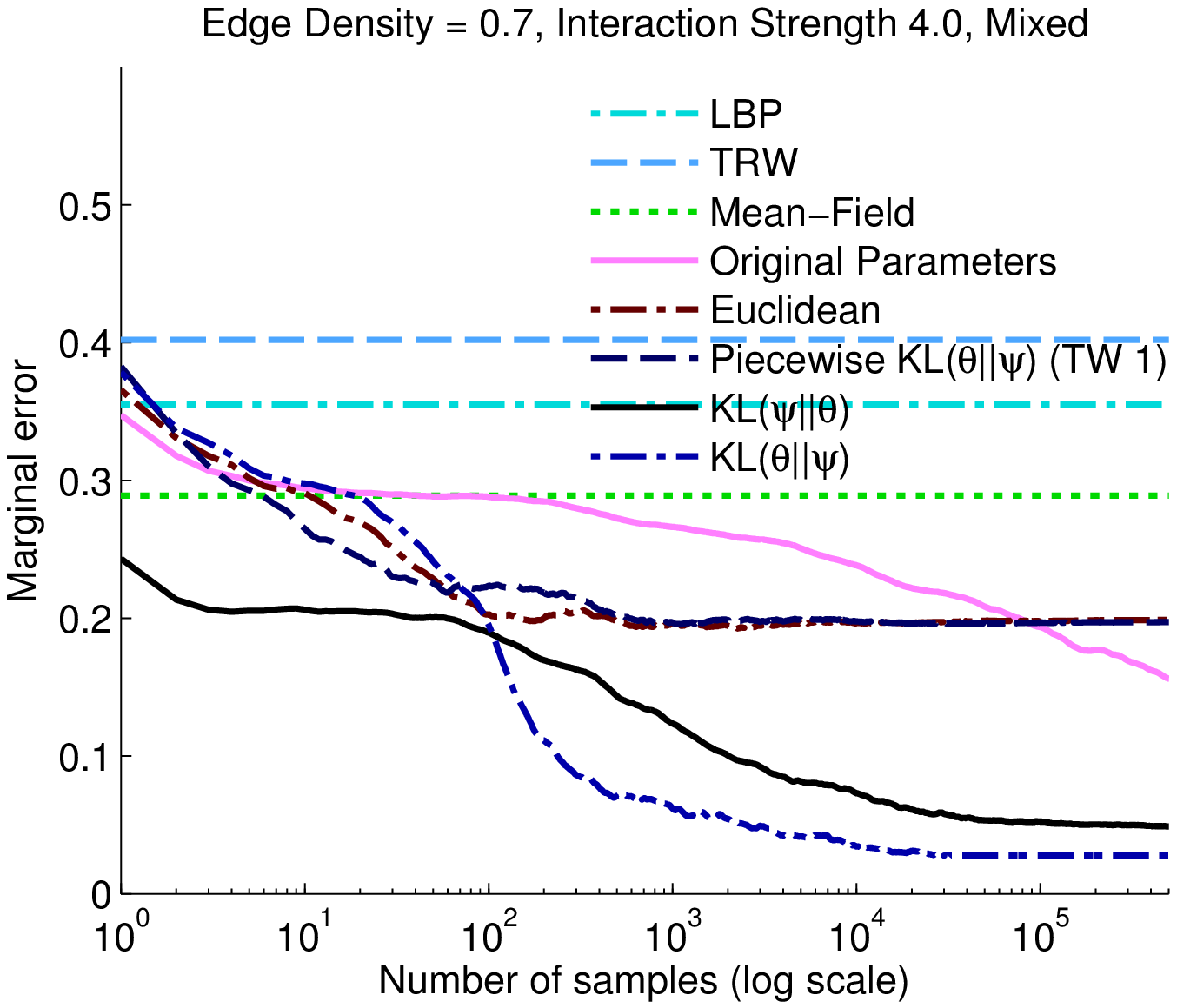}\includegraphics[width=0.5\textwidth]{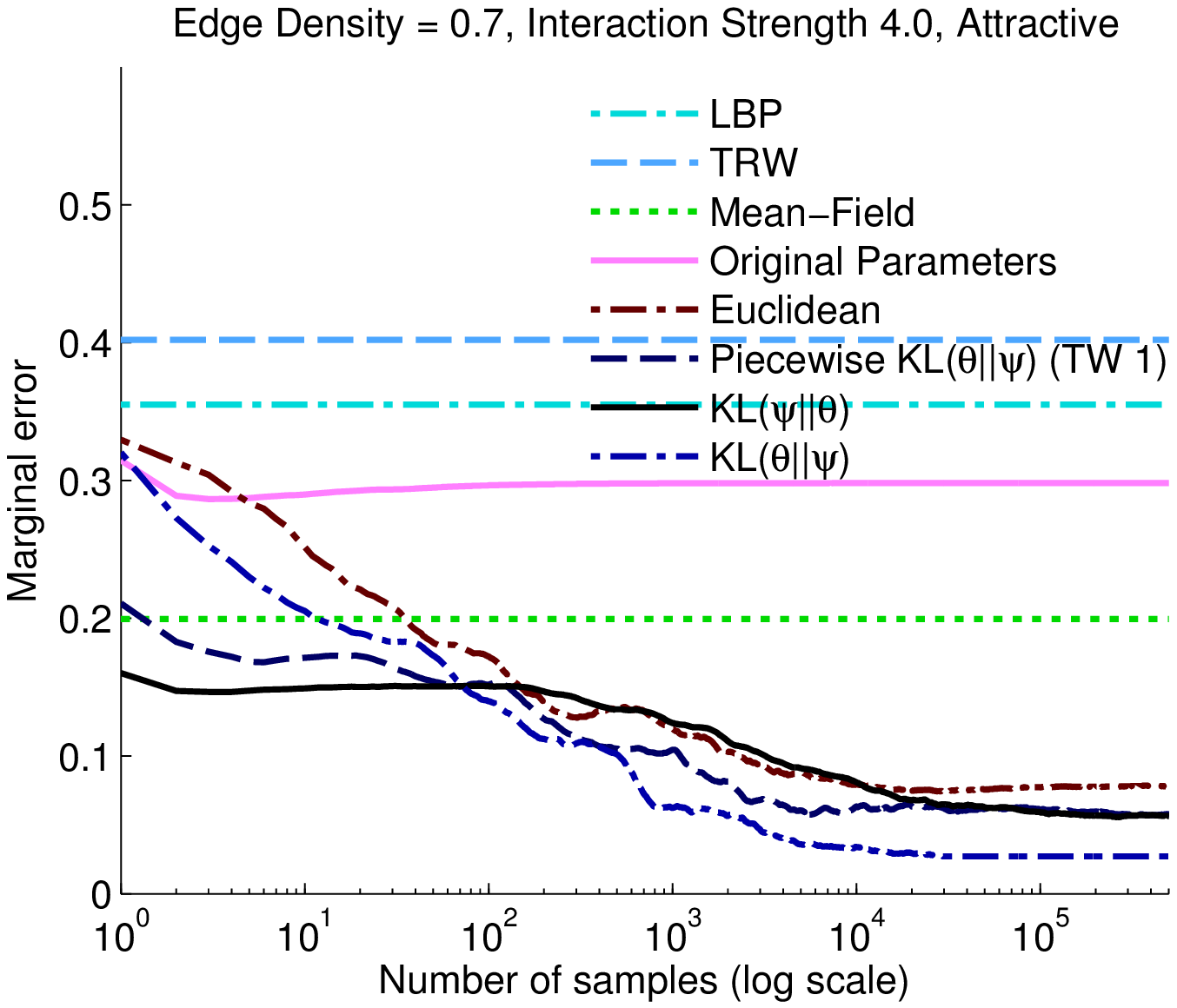}
\par\end{centering}

\caption{Accuracy on High-Density Random Graphs as a function of time}
\end{figure}

\end{document}